\documentclass[11pt]{article} 
\pdfoutput=1
\usepackage[margin=1in]{geometry}

\usepackage{natbib}
\usepackage[utf8]{inputenc} 
\usepackage[T1]{fontenc}    
\usepackage{hyperref}       
\usepackage{url}            
\usepackage{booktabs}       
\usepackage{amsfonts}       
\usepackage{nicefrac}       
\usepackage{microtype}      
\usepackage{xcolor}         
\usepackage{amsmath}
\usepackage{amsthm}
\usepackage{graphicx}

\newtheorem{theorem}{Theorem}[section]
\newtheorem{corollary}[theorem]{Corollary}
\newtheorem{lemma}[theorem]{Lemma}
\newtheorem{fact}[theorem]{Fact}
\newtheorem{claim}[theorem]{Claim}
\theoremstyle{definition}
\newtheorem{definition}[theorem]{Definition}
\theoremstyle{remark}

\usepackage[linesnumbered,ruled]{algorithm2e}
\SetKwInput{KwInput}{Input}
\SetKwInput{KwOutput}{Output}
\setlength{\algomargin}{1.5em}

\usepackage{pifont} 
\newcommand{\cmark}{\ding{51}}
\newcommand{\xmark}{\ding{55}}
\usepackage{makecell}

\newcommand\A{\mathcal{A}}
\newcommand\B{\mathcal{B}}
\newcommand\D{\mathcal{D}}
\newcommand\I{\mathbb{I}}
\newcommand\PP{\mathcal{P}}
\newcommand\R{\mathcal{R}}
\newcommand\X{\mathcal{X}}
\newcommand\Z{\mathcal{Z}}
\newcommand\HH{\mathcal{H}}
\newcommand\adv{\mathsf{Adv}}
\newcommand\alg{\mathsf{Alg}}
\newcommand\err{\mathrm{error}}
\newcommand\lap{\mathrm{Lap}}
\newcommand\ld{\mathtt{LD}}
\newcommand\point{\mathtt{POINT}}
\newcommand\repd{\mathtt{RepDim}}
\newcommand\threshold{\mathtt{Threshold}}

\title{The Limits of Differential Privacy in Online Learning}
\author{Bo Li\thanks{Department of Computer Science and Engineering, HKUST. \texttt{bli@cse.ust.hk}.}
\and 
Wei Wang\thanks{Department of Computer Science and Engineering, HKUST. \texttt{weiwa@cse.ust.hk}.}
\and
Peng Ye\thanks{Department of Computer Science and Engineering, HKUST. \texttt{pyeac@connect.ust.hk}.}}

\begin{document}

\maketitle

\begin{abstract}
  Differential privacy (DP) is a formal notion that restricts the privacy leakage of an algorithm when running on sensitive data, in which privacy-utility trade-off is one of the central problems in private data analysis. In this work, we investigate the fundamental limits of differential privacy in online learning algorithms and present evidence that separates three types of constraints: no DP, pure DP, and approximate DP. We first describe a hypothesis class that is online learnable under approximate DP but not online learnable under pure DP under the adaptive adversarial setting. This indicates that approximate DP must be adopted when dealing with adaptive adversaries. We then prove that any private online learner must make an infinite number of mistakes for almost all hypothesis classes. This essentially generalizes previous results and shows a strong separation between private and non-private settings since a finite mistake bound is always attainable (as long as the class is online learnable) when there is no privacy requirement.
\end{abstract}

\section{Introduction}

Machine learning has demonstrated extraordinary capabilities in various industries, from healthcare to finance. Yet, it could raise serious privacy concerns as it may require access to a vast amount of personal data. The data used to train machine learning models may also contain sensitive information such as medical records or financial transactions. Therefore, it is crucial to ensure that the private data is well-protected during the training process.

Differential privacy (DP)~\citep{dwork2006calibrating,dwork2006our} is a rigorous mathematical definition that quantifies the level of personal data leakage. In a nutshell, an algorithm is said to be {\em differentially private} if the change of any individual's data won't make the output significantly different. DP has become the standard notion of privacy and has been broadly employed~\citep{abadi2016deep,app2017apple,abowd2018us}.

However, privacy is not a free lunch and usually comes at a cost. Simple tasks may become much harder or even intractable when privacy constraints are imposed. It is crucial to understand the cost to pay for privacy. For probably approximately correct (PAC) learning~\citep{vapnik1971uniform,blumer1989learnability,valiant1984theory}, which is the standard theoretical model of machine learning, there have been many works investigating the cost associated with privacy, which demonstrate a huge discrepancy in terms of the cost under non-private, pure private, and approximate private constraints. 

One central requirement in PAC learning is that the data need to be i.i.d.~generated and given in advance. Such assumptions fail to capture many scenarios in practice. For example, fraud detection in financial transactions often needs to be handled in real time, which prohibits access to the entire dataset. Moreover, fraudulent patterns can change over time, and new types of fraud can emerge. In such a scenario, the data are clearly not i.i.d. and can even be adaptive to the algorithm's prior predictions, in which the online learning model should be adopted.

Compared with private PAC learning, the limits of private online learning are less understood. For approximate DP, algorithms for Littlestone classes were proposed~\citep{golowich2021littlestone}. But for pure DP, the only known result is the one for point functions given by~\citet{dmitriev2024growth}. They also suggested that one could leverage existing tools of DP continual observation~\citep{dwork2010differential,jain2023price} to design algorithms for finite hypothesis classes and asked if generic learners can be constructed for infinite hypothesis classes. 

Going beyond qualitative learnability, it is worth quantitatively exploring the number of mistakes made by an online learner. Without privacy, it is possible to achieve a mistake bound of at most the Littlestone dimension of the hypothesis class~\citep{littlestone1988learning}, which is independent of the total rounds $T$. Therefore, the number of mistakes is always bounded as $T\to\infty$. However, all existing private online learning algorithms suffer from an error count that grows at least logarithmically with $T$. It was asked by~\citet{sanyal2022open} whether such a cost is inevitable for DP online learning. In a recent work of~\citet{cohen2024lower}, they showed that any private online learning algorithm for the class of point functions over $[T]$ must incur $\Omega(\log T)$ mistakes. However, it remains open whether such cost is unavoidable for generic hypothesis classes, especially for those with a smaller cardinality.

\subsection{Main Results}

We obtain results that separate three types of constraints: no DP, pure DP, and approximate DP.

\paragraph*{Separation between pure and approximate DP.} We first perform a systematic study of online learning under pure DP. We prove that every pure privately PAC learnable class is also pure privately online learnable against oblivious adversaries, answering a question raised by~\citet{dmitriev2024growth}. For the stronger adaptive adversaries, we obtain an impossibility result that the class of point functions over $\mathbb{N}$, which can be pure privately learned in the offline model, is not online learnable under pure DP.\footnote{The impossibility result can be extended to all infinite hypothesis classes due to a conclusion in~\cite{hanneke2024star} suggesting that any infinite class has either infinite Littlestone dimension or infinite star number.} According to the result of~\citet{golowich2021littlestone}, it is online learnable under approximate DP. Thus, our conclusion reveals a strong separation between these two privacy definitions.

\paragraph*{Separation between private and non-private settings.} We next quantitatively investigate the dependence on $T$ in the mistake bound. We show that for any hypothesis class $\HH$, any private online learning algorithm must make $\Omega(\log T)$ mistakes unless $\HH$ contains only one single hypothesis or exactly two complementary hypotheses (see Section~\ref{sec:lower} for the definition). This largely generalizes previous results and indicates that such a separation indeed exists universally. We further improve the lower bound to $\Omega(\ld(\HH)\log T)$, where $\ld(\HH)$ represents the Littlestone dimension of $\HH$.

To better demonstrate our results, we consider the task of online learning point functions over $\mathbb{N}$ in the oblivious setting and summarize in Table~\ref{sample-table} the finiteness of mistakes and learnability under the three types of constraints. Note that for this hypothesis class, the impossibility of making finite mistakes in private online learning can also be derived from the result in~\citep{cohen2024lower}. However, our conclusion (Theorem~\ref{thm:logT}) is more general -- it applies to a much broader family of hypothesis classes. We choose this hypothesis class for illustration because it separates the learnability against adaptive adversaries under pure DP and approximate DP.

\begin{table}
    \caption{Separation between three types of constraints}
    \label{sample-table}
    \centering
    \resizebox{\columnwidth}{!}{%
    \begin{tabular}{cccc}
      \toprule
      & \multicolumn{2}{c}{Oblivious adversary}                                & Adaptive adversary\\
      \cmidrule(lr){2-3}
      \cmidrule(lr){4-4}
      & \multicolumn{1}{c|}{Finite mistakes?} & \multicolumn{1}{c|}{Learnable?} & Learnable?  \\
      \midrule
      No DP constraints & \makecell{\cmark \\(\citep{littlestone1988learning})} & \makecell{\cmark \\(\citep{littlestone1988learning})} & \makecell{\cmark \\(\citep{littlestone1988learning})} \\
      \midrule
      Approximate DP    & \makecell{\xmark\\(Theorem~\ref{thm:logT})}  &  \makecell{\cmark\\(\citep{golowich2021littlestone})}  & \makecell{\cmark\\(\citep{golowich2021littlestone})} \\
      \midrule
      Pure DP           & \makecell{\xmark\\(Theorem~\ref{thm:logT})} & \makecell{\cmark\\(Theorem~\ref{thm:pure_oblivious_realizable})}  & \makecell{\xmark\\(Theorem~\ref{thm:pureadaptivelower})}  \\
      \bottomrule
    \end{tabular}%
    }
  \end{table}

\subsection{Related Work}

The work of~\citet{kasiviswanathan2011can} initialized the study of PAC learning with differential privacy. A series of subsequent works then showed that privacy constraints have a distinctive impact on the learners. The most remarkable result is the equivalence between approximate private learning and (non-private) online learning~\citep{alon2019private,bun2020equivalence,alon2022private,ghazi2021sample}. For pure DP, the learnability is characterized by the so-called representation dimension~\citep{beimel2019characterizing}. Both results suggest that private learning is strictly harder than non-private learning. For some specific hypothesis class such as the one-dimensional threshold over finite domain, it was shown that learning with approximate DP enjoys a much lower sample complexity than with pure DP~\citep{beimel2014bounds,chaudhuri2011sample,feldman2014sample,bun2015differentially,bun2018composable,kaplan2020privately,cohen2023optimal}, separating these two types of privacy. Another separation result is the huge gap between properly and improperly learning point functions with pure DP~\citep{beimel2014bounds}, which does not exist in non-private and approximate private settings.

The problem of private online learning Littlestone classes was first studied by~\citet{golowich2021littlestone}. They proposed private online learning algorithms for hypothesis classes of finite Littlestone dimension in the realizable setting against oblivious and adaptive adversaries, further strengthening the connection between online learning and differential privacy. In contrast with the non-private setting where the mistake bound is always finite, their algorithms exhibit a cost of $\log T$ for data streams of length $T$. Recently, it was shown by~\citet{cohen2024lower} that this extra cost is unavoidable for point functions over $[T]$. ~\citet{dmitriev2024growth} also obtained similar results, but only for algorithms that satisfy certain properties. It was questioned by~\citet{sanyal2022open} whether an unbounded number of mistakes is necessary for $\varepsilon \approx \sqrt{T}$ (see Section~\ref{sec:dp} for the definition of $\varepsilon$).

There were also a great number of results on private parametric online learning tasks such as online predictions from experts (OPE) and online convex optimization (OCO)~\citep{jain2012differentially,smith2013nearly,jain2014near,agarwal2017price,asi2023private}. Most of them focus on the agnostic setting.~\citet{asi2023near} developed algorithms for both problems in the realizable regime with oblivious adversaries, again with a $\log T$ overhead.~\citet{asi2023private} obtained some hardness results for DP-OPE against adaptive adversaries, but they require the number of experts to be larger than $T$.

Another related field is differential privacy under continual observation (see, e.g.,~\citep{dwork2010differential,chan2011private,jain2023price}). While the techniques can be used to design online learning algorithms, it is unclear whether lower bounds for DP continual observation can be transformed to any hardness results for private online learning (see~\citep{dmitriev2024growth} for a detailed discussion).
\section{Preliminaries}
\label{sec:pre}
\paragraph*{Notation.}Throughout this paper, we use $S = \{z_1,\dots,z_T\}$ to denote a data stream of length $T$. We write $S[t]$ to denote the data point comes at time-step $t$, i.e., $z_t$. For an algorithm $\A$ that runs on $S$, we use $\A(S)_t$ to denote the output of $\A$ at time-step $t$.
\subsection{Online Learning}

We start by defining online learning as a sequential game played between a learner and an adversary. Let $\HH\subseteq \{0, 1\}^{\X}$ be a hypothesis class over domain $\X$ and $T$ be a positive integer indicating the total number of rounds. In the $t$-th round, the learner outputs a hypothesis $h_t\in\{0, 1\}^{\X}$ (not required to be in $\HH$) while the adversary presents a pair $(x_t, y_t)$. The performance of the learner is measured by the expected \emph{regret}, which is the expected number of additive mistakes made by the learner compared to the best (in hindsight) hypothesis in $\HH$:
\begin{equation*}
    \mathbb{E}\left[\sum_{t=1}^T \I[h_t(x_t)\neq y_t] - \min_{h^\star\in\HH}\sum_{t=1}^T\I[h^\star(x_t)\neq y_t]\right].
\end{equation*}

The above setting is usually referred to as the \emph{agnostic} setting, where we do not make any assumptions on the data. In the \emph{realizable} setting, it is guaranteed that there is some $h^\star\in\HH$ so that $y_t = h^\star(x_t)$ for all $t\in[T]$. In this setting, the performance is directly measured by the expected number of mistakes made by the learner, which is called the \emph{mistake bound}, defined as
\begin{equation*}
    \mathbb{E}\left[\sum_{t=1}^T \I[h_t(x_t)\neq y_t]\right].
\end{equation*}

An online learning algorithm is considered successful if it attains a sublinear regret, i.e., the regret is $o(T)$. In this paper, we mainly focus on the realizable setting. We say a hypothesis class $\HH$ is online learnable if there is an online learning algorithm for $\HH$ that makes $o(T)$ mistakes in expectation.

We consider two types of adversaries: an \emph{oblivious} adversary chooses the examples in advance (could depend on the learner's strategy, but not on its internal randomness), and $(x_t,y_t)$ is revealed to the learner in the $t$-th round. An \emph{adaptive} adversary instead, can choose $(x_t,y_t)$ based on past history, i.e., $h_1,\dots,h_{t-1}$ and $(x_1,y_1),\dots,(x_{t-1},y_{t-1})$.

Without privacy, the mistake bound is exactly characterized by the Littlestone dimension~\citep{littlestone1988learning} even with stronger adversaries that can choose $(x_t,y_t)$ after seeing $h_t$. To define the Littlestone dimension, we first introduce the notion of a shattered tree.

\begin{definition}[Shattered Tree]
    Consider a full binary tree of depth $d$ such that each node is labeled by some $x\in\X$. Every $\{y_1,\dots,y_d\}\in\{0, 1\}^d$ defines a root-to-leaf path $x_1,\dots, x_d$ obtained by starting at the root, then for each $i=2,\dots, d$ choosing $x_i$ to be the left child of $x_{i-1}$ if $y_{i-1} = 0$ and to be the right child otherwise. The tree is said to be shattered by $\HH$ if for every root-to-leaf path defined in this way, there exists $h\in\HH$ such that $y_i = h(x_i)$ for all $i\in[d]$.
\end{definition}

\begin{definition}[Littlestone Dimension]
    The Littlestone dimension of $\HH$, denoted by $\ld(\HH)$, is the maximal $d$ such that there exists a full binary tree of depth $d$ that is shattered by $\HH$.
\end{definition}

The problem of online prediction from experts can be viewed as a parametric version of online learning. Let $d$ be the total number of experts. In the $t$-th round, the algorithm chooses an expert $i_t\in[d]$ while the adversary selects a loss function $\ell_t:[d] \to [0, 1]$. Then $\ell_t$ is revealed and a cost of $\ell_t(i_t)$ is incurred. The goal is to minimize the expected regret
\begin{equation*}
    \mathbb{E}\left[\sum_{t=1}^T\ell_t(i_t) - \min_{i\in[d]}\sum_{t=1}^T\ell_t(i)\right].
\end{equation*}

Similar to online learning, an oblivious adversary chooses all $\ell_t$ in advance, while an adaptive adversary determines $\ell_t$ based on $i_1,\dots,i_{t-1}$ and $\ell_1,\dots,\ell_{t-1}$. In the realizable setting, it is guaranteed that there exists $i^\star\in[d]$ such that $\ell_t(i^\star) = 0$ for all $t\in[T]$.

\subsection{Differential Privacy}
\label{sec:dp}

We first recall the standard definition of differential privacy.

\begin{definition}[Differential Privacy]
    An algorithm $\A$ is said to be $(\varepsilon,\delta)$-differentially private if for any two sequences $S_1$ and $S_2$ that differ in only one entry and any event $O$, we have
    \begin{equation*}
        \Pr[\A(S_1)\in O] \le e^\varepsilon\Pr[\A(S_2)\in O] + \delta.
    \end{equation*}
    When $\delta = 0$, we also say $\A$ is $\varepsilon$-differentially private.
\end{definition}

Our proofs use the packing argument~\citep{hardt2010geometry,beimel2014bounds}, which heavily relies on the following property of DP.

\begin{fact}[Group Privacy]
    Let $\A$ be an $(\varepsilon, \delta)$-differentially private algorithm. Then for any two sequences $S_1$ and $S_2$ that differ in $k$ entries and any event $O$, we have
    \begin{equation*}
        \Pr[\A(S_1)\in O] \le e^{k\varepsilon}\Pr[\A(S_2)\in O] + \frac{e^{k\varepsilon} - 1}{e^\varepsilon - 1}\cdot\delta.
    \end{equation*}
\end{fact}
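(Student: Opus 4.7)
The plan is to proceed by induction on $k$, using a standard hybrid argument that interpolates between $S_1$ and $S_2$ through a chain of sequences, each differing from the next in exactly one entry. The base case $k=1$ is immediate: it is literally the definition of $(\varepsilon,\delta)$-differential privacy applied to neighboring datasets.

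For the inductive step, I would fix $S_1$ and $S_2$ differing in $k$ entries and introduce an intermediate sequence $S'$ that agrees with $S_2$ on one of the $k$ coordinates where $S_1$ and $S_2$ disagree, and otherwise agrees with $S_1$. Then $S_1$ and $S'$ differ in $k-1$ entries, while $S'$ and $S_2$ are neighbors. Applying the inductive hypothesis to the pair $(S_1, S')$ and the base case to the pair $(S', S_2)$ yields
\begin{equation*}
\Pr[\A(S_1)\in O] \le e^{(k-1)\varepsilon}\Pr[\A(S')\in O] + \frac{e^{(k-1)\varepsilon}-1}{e^\varepsilon-1}\cdot\delta,
\end{equation*}
together with $\Pr[\A(S')\in O] \le e^\varepsilon \Pr[\A(S_2)\in O] + \delta$. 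Chaining these inequalities produces an $e^{k\varepsilon}$ factor in front of $\Pr[\A(S_2)\in O]$ and a $\delta$-coefficient equal to $e^{(k-1)\varepsilon} + \frac{e^{(k-1)\varepsilon}-1}{e^\varepsilon-1}$.

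The only real step that needs verification is the geometric-series identity
\begin{equation*}
e^{(k-1)\varepsilon} + \frac{e^{(k-1)\varepsilon}-1}{e^\varepsilon-1} \;=\; \frac{e^{k\varepsilon}-1}{e^\varepsilon-1},
\end{equation*}
which follows by clearing denominators: the left-hand side equals $\frac{e^{(k-1)\varepsilon}(e^\varepsilon-1) + e^{(k-1)\varepsilon}-1}{e^\varepsilon-1} = \frac{e^{k\varepsilon}-1}{e^\varepsilon-1}$. This closes the induction. I do not anticipate any genuine obstacle here; the only point that requires a little care is choosing a single intermediate hybrid (rather than a full chain of $k$ hybrids) so that the inductive hypothesis is applied cleanly, and then checking the telescoping algebra that produces exactly the claimed geometric-sum coefficient. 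The argument implicitly uses that the bound must hold for \emph{every} measurable event $O$, which is what allows the inductive hypothesis to be invoked for the same event $O$ on the intermediate pair.
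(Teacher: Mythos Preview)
Your argument is correct and is the standard hybrid/induction proof of group privacy; the telescoping algebra you check is exactly right. The paper itself states this as a known \emph{Fact} without proof, so there is nothing to compare against---your proposal supplies precisely the conventional derivation.
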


\paragraph*{Privacy with \emph{adaptive} adversaries.} When interacting with adaptive adversaries, the notion of differential privacy becomes a bit trickier.\footnote{It turns out that the standard differential privacy and adaptive differential privacy are equivalent when $\delta = 0$~\citep{dwork2010differential,smith2013nearly}. But we will use the adaptive version in our proof since it is easier to work with.} We adopt the definition of adpative differential privacy from~\citep{golowich2021littlestone,jain2023price,asi2023private}. Let $\A$ be an online algorithm, $\adv$ be an adversary\footnote{Note that the adversary here is different from the one in the definition of online learning.} who generates two sequences $S_1$ and $S_2$ adaptively such that $S_1$ and $S_2$ differ in only one entry, and $b\in\{1, 2\}$ be a global parameter that is unknown to $\A$ and $\adv$. The interactive process $\A\circ\adv(b)$ works as follows: in each time-step $t$, $\adv$ generates two data points $S_1[t], S_2[t]$ based on the past history and $\A$ gets $S_b[t]$. The output of $\A\circ\adv(b)$ is defined to be the view of $\adv$, which includes the entire output of $\A$ and the internal randomness of $\adv$. We say $\A$ satisfies $(\varepsilon, \delta)$-adaptive differential privacy if for any such adversary $\adv$ and any event $O$, we have
\begin{equation*}
    \Pr[\A\circ\adv(1)\in O] \le e^\varepsilon\Pr[\A\circ\adv(2)\in O] + \delta.
\end{equation*}

\paragraph*{Choosing the privacy parameters.} It is a commonly agreed principle that for the definition of differential privacy to be meaningful, the parameter $\delta$ should be much less than the inverse of the dataset size~\citep{dwork2014algorithmic}. In this paper, when we say an algorithm $\A$ is private without specifying the privacy parameters, we typically refer to the set-up that $\varepsilon$ is a small constant (say $0.01$) and $\delta=o(1/T)$.

\section{Learning with Pure Differential Privacy}
\label{sec:pure}

In this section, we study online learning under pure DP constraint. We first propose algorithms for privately offline learnable hypothesis classes against oblivious adversaries via a reduction to OPE using the tool of probabilistic representation. Then we turn to adaptive adversaries and present a hypothesis class that is privately offline learnable but not privately online learnable. Note that according to the results of~\citet{golowich2021littlestone}, this class is online learnable under approximate DP with adaptive adversaries. Hence, we manifest a strong separation between pure and approximate DP.

\subsection{Learning Against Oblivious Adversaries}
\label{sec:oblivious}

In this section, we consider an oblivious adversary. We first recall the notion of representation dimension, which was introduced by~\citet{beimel2019characterizing} to characterize pure DP offline learnability. Let $\D$ be a distribution over $\X\times\{0, 1\}$ and $h\in\{0, 1\}^\X$ be a hypothesis. The error of $h$ with respect to $\D$ is defined as $\err_{\D}(h) = \Pr_{(x, y)\sim\D}[h(x)\neq y]$.

\begin{definition}[Representation Dimension]
    A probability distribution $\PP$ over hypothesis classes is said to be an $(\alpha, \beta)$-probabilistic representation for $\HH$ if for any $h^\star\in\HH$ and any distribution $\D$ over $\X\times \{0, 1\}$ that is labeled by $h^\star$, we have
    \begin{equation*}
        \Pr_{V\sim\PP}[\exists v\in V~s.t.~\err_{\D}(v) \le \alpha]\ge 1-\beta.
    \end{equation*}
    Let $\mathrm{size}(\PP) = \max_{V\in\mathrm{supp}(\PP)}\ln \lvert V \rvert$. The representation dimension of $\HH$, denoted by $\repd(\HH)$, is defined as
    \begin{equation*}
        \repd(\HH) = \min_{\PP\text{ is a }(1/4,1/4)\text{-probabilistic representation for }\HH}\mathrm{size}(\PP).
    \end{equation*}
\end{definition}

The following lemma from~\citep{beimel2019characterizing} shows that a constant probabilistic representation can be boosted to an $(\alpha, \beta)$ one with logarithmic cost in $1/\alpha$ and $1/\beta$.

\begin{lemma}
\label{lem:boosting}
    There exists an $(\alpha, \beta)$-probabilistic representation for $\HH$ with 
    \begin{equation*}
        \mathrm{size}(\PP) = O(\log(1/\alpha)\cdot(\repd(\HH) + \log\log\log(1/\alpha) + \log\log(1/\beta))).
    \end{equation*}
\end{lemma}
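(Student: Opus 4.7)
The plan is to combine two rounds of boosting: first amplify the confidence $\beta$, then amplify the accuracy $\alpha$ through a reduction to AdaBoost. I expect the confidence boost to be routine; the accuracy boost is where the real work lies because it requires threading AdaBoost carefully through the probabilistic-representation framework.

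\emph{Confidence amplification.} I would first show that a $(1/4,1/4)$-probabilistic representation $\PP_0$ of size $\repd(\HH)$ can be upgraded to a $(1/4,\beta')$-probabilistic representation of size $\repd(\HH)+O(\log\log(1/\beta'))$ for any target $\beta'$. Sample $k=\lceil\log_4(1/\beta')\rceil$ independent copies $V_1,\dots,V_k\sim\PP_0$ and output their union. For any $\D$ labeled by some $h^\star\in\HH$, each $V_i$ fails to contain a hypothesis of error $\le 1/4$ with probability at most $1/4$, so by independence the probability that all of them fail is at most $(1/4)^k\le\beta'$; meanwhile $\ln|V_1\cup\cdots\cup V_k|\le\log k+\repd(\HH)$.

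\emph{Accuracy amplification.} I now plug this amplified weak learner into AdaBoost. Set $T=\Theta(\log(1/\alpha))$ and instantiate the construction above with per-round confidence $\beta'=\beta/T$ to obtain a family $\PP_1$. The boosted representation $\PP$ samples $V_1,\dots,V_T$ i.i.d.~from $\PP_1$ and returns
\begin{equation*}
V=\{\mathrm{maj}(v_1,\dots,v_T):v_t\in V_t\}.
\end{equation*}
Correctness follows from a ``mental'' AdaBoost execution: in round $t$, the reweighted distribution $\D_t$ is still labeled by $h^\star$ (reweighting never touches labels) and is determined by the canonical choices $v_1^\star,\dots,v_{t-1}^\star$ picked in earlier rounds (say, the lexicographically first $v_s\in V_s$ with $\err_{\D_s}(v_s)\le 1/4$). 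Since $V_t$ is drawn independently of $V_1,\dots,V_{t-1}$, a chain-rule calculation gives that all $T$ rounds succeed with probability at least $(1-\beta')^T\ge 1-T\beta'=1-\beta$; on this event the standard AdaBoost guarantee yields $\err_\D(\mathrm{maj}(v_1^\star,\dots,v_T^\star))\le\alpha$, and this majority hypothesis lies in $V$. The size bound follows from
\begin{equation*}
\ln|V|\le T\cdot\ln|V_t|\le T\cdot\bigl(\repd(\HH)+O(\log\log(T/\beta))\bigr),
\end{equation*}
after plugging $T=\Theta(\log(1/\alpha))$ and simplifying $\log\log(T/\beta)=O(\log\log\log(1/\alpha)+\log\log(1/\beta))$ via $\log(x+y)\le 1+\log x+\log y$ for $x,y\ge 1$.

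\emph{Main obstacle.} The delicate point is the coupling between the random set $V_t$ and the distribution $\D_t$: because $\D_t$ is shaped by the canonical choices $v_s^\star$ made in earlier rounds, the per-round success events are not independent, so one cannot union-bound over a fixed family of target distributions. The remedy is precisely the sequential argument above, exploiting that conditional on the history, $V_t$ remains a fresh sample from $\PP_1$ against a distribution measurable with respect to that history. A minor secondary concern, verifying that the AdaBoost-reweighted $\D_t$ is still realizable by some hypothesis in $\HH$ (as required to invoke the probabilistic-representation property), is automatic because reweighting preserves the label of every point and hence the same $h^\star$ continues to realize $\D_t$.
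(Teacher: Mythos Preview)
The paper does not prove this lemma; it is quoted directly from \citet{beimel2019characterizing} without proof. Your two-stage argument---confidence amplification by independent repetition, followed by accuracy amplification via an AdaBoost-style majority over freshly drawn hypothesis sets---is correct and is essentially the construction in that reference. The sequential conditioning step you highlight (treating $\D_t$ as measurable with respect to $V_1,\dots,V_{t-1}$ so that the independently sampled $V_t$ still enjoys the $(1/4,\beta')$-guarantee against it) is exactly the point that makes the boosting go through, and you have handled it and the size arithmetic cleanly.
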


We first consider the realizable setting. Let $S=\{(x_1, y_1),\dots, (x_T, y_T)\}$ be the sequence chosen by the adversary and $\D_S$ be the empirical distribution of $S$ (i.e., $\Pr_{(x, y)\sim\D_S}[(x, y) = (x_t,y_t)]=1/T$ for all $t\in[T]$). By sampling a hypothesis class $V$ from an $(\alpha,\beta)$-probabilistic representation with $\alpha < 1/T$, we know that it holds with probability at least $1-\beta$ that $\err_{\D_S}(v)\le \alpha < 1/T$ for some $v\in V$. This further implies that $v$ is consistent with all examples in $S$. By Lemma~\ref{lem:boosting}, $V$ is finite as long as $\HH$ has a finite representation dimension. Thus, it suffices to run the DP-OPE algorithm in~\citep{asi2023near} with every $v\in V$ as an expert.

\begin{theorem}
\label{thm:pure_oblivious_realizable}
    Let $\HH$ be a hypothesis class with $\repd(\HH)<\infty$. In the realizable setting, there exists an online learning algorithm that is $\varepsilon$-differentially private and has an expected mistake bound of $O\left(\frac{\log^2T(\repd(\HH) + \log\log T)^2}{\varepsilon}\right)$ with an oblivious adversary.
\end{theorem}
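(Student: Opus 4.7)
The plan is to execute the reduction sketched just before the theorem: compile $\HH$ into a finite expert set via a probabilistic representation and then apply the realizable pure-DP algorithm for online prediction from experts (OPE) of~\citet{asi2023near}. Since $V\sim\PP$ is drawn from a data-independent distribution, the compilation step costs nothing in privacy; all privacy leakage comes from the OPE subroutine, which consumes loss functions $\ell_t(v)=\I[v(x_t)\neq y_t]$ that depend on the data only one entry at a time. Hence, if the OPE subroutine is $\varepsilon$-DP on its loss stream, the overall procedure is $\varepsilon$-DP against an oblivious adversary.

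To set up the reduction, I invoke Lemma~\ref{lem:boosting} with $\alpha = 1/(2T)$ and $\beta = 1/T$ to get a probabilistic representation $\PP$ of size
\begin{equation*}
    \mathrm{size}(\PP) = O(\log T \cdot (\repd(\HH) + \log\log T)),
\end{equation*}
so that $d := |V| \le \exp(\mathrm{size}(\PP))$ satisfies $\log d = O(\log T \,(\repd(\HH)+\log\log T))$. With probability at least $1-1/T$, the sampled class $V$ contains some $v^\star$ with $\err_{\D_S}(v^\star) \le 1/(2T) < 1/T$, where $\D_S$ is the empirical distribution on $S$. Since $\D_S$ assigns mass $1/T$ to each example, any hypothesis with nonzero empirical error has error at least $1/T$, so $v^\star$ must be consistent with the entire stream. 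On this high-probability event the induced OPE instance is realizable, with $v^\star$ playing the role of the zero-loss expert.

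Finally, I plug into the realizable pure-DP OPE mistake bound of~\citet{asi2023near}, which scales as $O((\log d)^2/\varepsilon)$ up to lower-order terms. Substituting $\log d = O(\log T (\repd(\HH) + \log\log T))$ yields the desired $O(\log^2 T(\repd(\HH)+\log\log T)^2/\varepsilon)$ bound, and the $1/T$-probability failure event contributes at most $T\cdot (1/T)=1$ additional expected mistake, which is absorbed. The main obstacle I anticipate is purely bookkeeping: matching parameters so that the $\log\log\log(1/\alpha)$ term inside Lemma~\ref{lem:boosting} is dominated by $\log\log T$, and confirming that the mistake bound from~\citet{asi2023near} truly scales as $(\log d)^2/\varepsilon$ on an expert set whose size itself grows with $T$—as opposed to a mixed $\log d\cdot \log T$ form, which would only deliver a single power of $(\repd(\HH)+\log\log T)$. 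No new conceptual idea is needed beyond orchestrating these three ingredients.
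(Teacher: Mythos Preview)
Your proposal is correct and follows essentially the same route as the paper: sample $V$ from a boosted probabilistic representation with $\alpha,\beta=\Theta(1/T)$, observe that realizability of the OPE instance follows because empirical error below $1/T$ forces exact consistency, then invoke the realizable pure-DP OPE bound of~\citet{asi2023near} and absorb the $O(1/T)$ failure probability into the expectation. The paper's proof differs only in cosmetic bookkeeping (it sets $\alpha=\beta=1/(2T)$ and also fixes the OPE subroutine's own failure probability to $1/(2T)$, which you leave implicit), and it quotes the OPE bound in the mixed form $O((\log^2 d + \log T\log d + \log^2 T)/\varepsilon)$ rather than $O((\log d)^2/\varepsilon)$; since here $\log d = \Omega(\log T)$, these agree and your anticipated concern about needing the $(\log d)^2$ term is resolved exactly as you guessed.
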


The above conclusion directly extends to the agnostic setting by replacing the DP-OPE algorithm with an agnostic one~\citep{asi2023private}.

\begin{theorem}
\label{thm:pure_oblivious_agnostic}
    Let $\HH$ be a hypothesis class with $\repd(\HH)<\infty$. In the agnostic setting, there exists an online learning algorithm that is $\varepsilon$-differentially private and achieves an expected regret of $O\left(\frac{\sqrt{T}\log T(\repd(\HH) + \log\log T)}{\varepsilon}\right)$ with an oblivious adversary.
\end{theorem}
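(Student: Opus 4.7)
The plan is to repeat the reduction used in the proof of Theorem~\ref{thm:pure_oblivious_realizable}, substituting the realizable DP-OPE subroutine of~\citep{asi2023near} with the agnostic DP-OPE algorithm of~\citep{asi2023private}. The only new ingredient is a short bridge between the probabilistic representation guarantee, which is stated only for distributions realized by some hypothesis in $\HH$, and our agnostic stream $S$.

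Concretely, I fix $\alpha = 1/\sqrt{T}$ and $\beta = 1/T$ and apply Lemma~\ref{lem:boosting} to obtain an $(\alpha,\beta)$-probabilistic representation $\PP$ of $\HH$ with $\mathrm{size}(\PP) = O(\log T\cdot(\repd(\HH)+\log\log T))$. At the very start of the game, and independently of any data, sample a single $V\sim\PP$; then invoke the agnostic $\varepsilon$-DP OPE algorithm of~\citep{asi2023private} with each $v\in V$ as an expert whose per-round loss is $\I[v(x_t)\neq y_t]$. Since $V$ is produced data-independently, the composite learner inherits $\varepsilon$-differential privacy directly from the OPE base algorithm.

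For the regret, let $h^\star \in \arg\min_{h\in\HH}\sum_{t=1}^T \I[h(x_t)\neq y_t]$ be the best hypothesis in hindsight and let $\D_S^{h^\star}$ denote the empirical distribution over $\{(x_t, h^\star(x_t))\}_{t=1}^T$, which by construction is realizable by $h^\star$. Applying the defining property of $\PP$ to this particular $h^\star$ and $\D_S^{h^\star}$, with probability at least $1-\beta$ over the draw of $V$ there exists $v^\star\in V$ with $\err_{\D_S^{h^\star}}(v^\star)\le\alpha$, i.e., $v^\star$ disagrees with $h^\star$ on at most $\alpha T = \sqrt{T}$ indices. A triangle inequality then gives
\[
\sum_{t=1}^T \I[v^\star(x_t)\neq y_t] \le \sum_{t=1}^T \I[h^\star(x_t)\neq y_t] + \sqrt{T}.
\]
Combining with the agnostic DP-OPE guarantee of $O(\sqrt{T}\log|V|/\varepsilon)$ expected regret against the best expert in $V$, and absorbing the failure event of probability $\beta = 1/T$ (which contributes at most $\beta T = 1$ to expected regret), the total expected regret is $O(\sqrt{T}\log|V|/\varepsilon) + \sqrt{T} = O(\sqrt{T}\log T(\repd(\HH)+\log\log T)/\varepsilon)$, as desired.

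The main obstacle is exactly the conceptual gap flagged in the first paragraph: the probabilistic representation is stated only for realizable distributions, yet our stream can carry arbitrary label noise. The data-dependent relabeling by $h^\star$ used in the analysis sidesteps this without leaking privacy, precisely because $V$ is drawn before any data arrive and the invocation of the representation property takes place only inside the proof, not inside the algorithm. Beyond this, the only remaining item is to verify that the agnostic DP-OPE of~\citep{asi2023private} attains expected regret of order $\tilde{O}(\sqrt{T}\log d/\varepsilon)$ under pure $\varepsilon$-DP against oblivious adversaries with $d$ experts, which is the regime and bound provided there; any improvement to a $\sqrt{T\log d}$-type dependence would only strengthen the bound.
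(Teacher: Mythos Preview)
Your proposal is correct and follows essentially the same approach as the paper: sample $V$ from a probabilistic representation, apply its guarantee to the empirical distribution relabeled by the best-in-hindsight $h^\star$ (which is realizable), and then run the agnostic DP-OPE of~\citep{asi2023private} over $V$. The only cosmetic difference is that the paper takes $\alpha = 1/2T$ so that $v^\star$ agrees with $h^\star$ on \emph{every} round (eliminating your $\sqrt{T}$ triangle-inequality term), whereas you take $\alpha = 1/\sqrt{T}$; both choices yield the same $\mathrm{size}(\PP)$ and the same final bound.
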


Note that every online learning algorithm can be transformed to a PAC learner by the online-to-batch conversion~\citep{cesa2004generalization}. Our result reveals that pure private online learnability against oblivious adversaries is equivalent to pure private PAC learnability in both realizable and agnostic settings.

\subsection{Learning Against Adaptive Adversaries}
\label{sec:adaptive}
We now turn to adaptive adversaries. For finite hypothesis classes, it is still feasible to employ techniques from DP-OPE~\citep{agarwal2017price} or DP continual observation~\citep{dwork2010differential,chan2011private,jain2023price} to devise online learning algorithms (in Appendix~\ref{sec:algo}, we give an algorithm with a better mistake bound in the realizable setting). One may hope that this can be extended to hypothesis class with finite representation dimension, as we did in the oblivious setting. However, it turns out that our method for oblivious adversaries is not applicable here. Since the examples are not fixed in advance, we cannot guarantee that the sampled hypothesis class $V$ contains a consistent hypothesis. Moreover, the famous oblivious-to-adaptive transformation (see, e.g.,~\citep{cesa2006prediction}), which was used by~\citet{golowich2021littlestone} to construct online learners under approximate DP, also fails to give a sublinear mistake bound. This is because pure DP only has the basic composition property, which yields a mistake bound that scales linearly with $T$ (for approximate DP, this can be improved to $\sqrt{T}$ by advanced composition). Therefore, it is not clear if every offline learnable hypothesis class can also be made online learnable against adaptive adversaries under pure DP.

We will show that this is an impossible mission. Let $\point_d$ be the set of point functions over $[d]$ and $\point_{\mathbb{N}}$ be the set of point functions over $\mathbb{N}$, where a point function $f_x:\X\to\{0,1\}$ is a function that maps $x$ to $1$ and all other elements to $0$. Both $\point_d$ and $\point_{\mathbb{N}}$ have a constant representation dimension and thus are offline learnable under pure DP~\citep{beimel2019characterizing}. In the rest of this section, we will prove that for any pure DP online learning algorithm for $\point_d$, an adaptive adversary can force it to make $\Omega(\min(\log d, T))$ errors. As a direct corollary, $\point_{\mathbb{N}}$ is not pure privately online learnable against adaptive adversaries. 

We now illustrate the idea of our proof. Let us start by considering a simplified version, where the algorithm is constrained to be proper, i.e., $h_t\in\HH=\point_d$ for every $t\in[T]$. Then one can construct a series of data streams $S_i=\{(i, 1),\dots,(i, 1)\}$ for every $i\in[d]$. An accurate proper learner must output $f_i$ for most of the rounds. This allows us to use the packing argument to derive an $\Omega(\log d)$ lower bound for $T = \Theta(\log d)$.

However, the above argument does not apply to the general case where the learner may be improper since a learner can simply output an all-one function that makes $0$ errors on each $S_i$. Therefore, we have to insert to $S_i$ some examples of the form $(j, 0)$ where $j\neq i$. This prevents $h_t$ from taking $1$ on elements other than $i$. But when should we insert $(j, 0)$? And how do we determine the value of $j$?

Note that till now, we have not used the adversary's adaptivity. It is necessary to exploit this power since any oblivious construction can be solved by our algorithm in Theorem~\ref{thm:pure_oblivious_realizable}. When the adversary acts adaptively, the construction becomes a dual online learning game: in each round, the learner outputs $h_t$ as a ``data point'' and the adversary chooses $(i, 1)$ or some $(j, 0)$ as the ``hypothesis''. This inspires us to leverage tools from online learning to construct the adversary.

We now sketch our idea. In each round, we choose $(i, 1)$ as the data point with probability $1/2$, and otherwise sample a $(j, 0)$ from some probability distribution. We maintain the distribution by the multiplicative update rule, which is a widely used method in online decision making. The weight of $j$ is increased by a multiplicative factor whenever $h_t(j) = 1$, and the probability of selecting $j$ is proportional to its weight. We provide a detailed implementation in Algorithm~\ref{alg:adaptive_lower}.

Using the standard argument of multiplicative update, we can show that an accurate learner must predict $h_t(i) = 1$ for most rounds and $h_t(j) = 1$ for very few rounds. This allows us to apply the packing argument to obtain the following hardness result.

\begin{theorem}
\label{thm:pureadaptivelower}
    Let $\varepsilon \le O(1)$ and $d\ge 2$. Any $\varepsilon$-differentially private online learning algorithm for $\point_d$ must incur a mistake bound of $\Omega(\min(\log d/\varepsilon, T))$ in the adaptive adversarial setting.
\end{theorem}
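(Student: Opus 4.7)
The plan is to combine the multiplicative-weights adaptive adversary sketched above with a packing argument based on adaptive group privacy. Fix a horizon $T_0 = \Theta(\log d/\varepsilon)$ (pinned down below by the packing). For each $i\in[d]$ the adversary $\adv^{(i)}$ flips a fair coin at each round $t$: on heads play the positive example $(i,1)$, while on tails sample $j\ne i$ with probability $w_j^{(t)}/\Phi_t$ (with $w_j^{(1)}=1$ and $w_j^{(t+1)}=\eta^{\I[h_t(j)=1]}w_j^{(t)}$ for a constant $\eta>1$) and play $(j,0)$. Under the working hypothesis that the algorithm achieves mistake bound $M\le cT_0$ for a small absolute constant $c$, the first step is to show that with probability at least $1/2$ the prediction counts $N_j:=\sum_{t=1}^{T_0}\I[h_t(j)=1]$ satisfy $N_i\ge T_0/3$ and $N_j\le T_0/6$ for every $j\ne i$. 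The lower bound on $N_i$ follows from Chernoff concentration of the number of positive rounds around $T_0/2$ together with the fact that at most $M$ of those rounds can be Type-1 mistakes. The upper bound on the other $N_j$ uses the standard MWU potential inequality $\eta^{N_j}\le\Phi_{T_0}\le(d-1)\exp\bigl((\eta-1)\sum_t\eta_t\bigr)$ with $\eta_t=\sum_{j\ne i}(w_j^{(t)}/\Phi_t)\,h_t(j)$; since the expected number of Type-2 mistakes equals $\tfrac12\sum_t\mathbb{E}[\eta_t]\le M$, Markov plus the potential bound force $\max_{j\ne i}N_j=O(\log d+M)$ with constant probability, which is at most $T_0/6$ once $c$ is small and $T_0\ge C\log d$ for a large constant $C$.

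Next I would package these bounds into the events $O_i:=\{N_i\ge T_0/3\}\cap\{N_j\le T_0/6\text{ for all }j\ne i\}$, which form a pairwise disjoint family with $\Pr_{\adv^{(i)}}[O_i]\ge 1/2$. To pack these $d$ events through adaptive group privacy, I would couple two targets $(i_1,i_2)$ inside a single meta-adversary admissible in the adaptive-DP framework: a shared MWU state is updated using the algorithm's actual outputs on whichever stream $S_b$ is fed to it; the positive/negative coin flip and the sampled index $j$ on negative rounds are shared between the two candidate streams; only positive rounds split, with $S_1$ playing $(i_1,1)$ and $S_2$ playing $(i_2,1)$. Hence $S_1$ and $S_2$ disagree in at most $T_0$ entries, and $T_0$-fold application of adaptive DP with $\delta=0$ yields $\Pr[\A\circ\adv(1)\in O]\le e^{T_0\varepsilon}\Pr[\A\circ\adv(2)\in O]$ for every event $O$. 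In particular $\Pr_{\adv^{(i_2)}}[O_{i_1}]\ge e^{-T_0\varepsilon}\Pr_{\adv^{(i_1)}}[O_{i_1}]\ge\tfrac12 e^{-T_0\varepsilon}$.

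Disjointness forces $1\ge\sum_{i_1}\Pr_{\adv^{(i_2)}}[O_{i_1}]\ge\tfrac{d}{2}e^{-T_0\varepsilon}$, hence $T_0\varepsilon\ge\log(d/2)$. Choosing $T_0$ slightly below $\varepsilon^{-1}\log(d/2)$ contradicts $M\le cT_0$, so $M=\Omega(\log d/\varepsilon)$; for $T<T_0$ the same analysis restricted to the first $T$ rounds produces $M=\Omega(T)$, giving the advertised $\Omega(\min(\log d/\varepsilon,T))$ bound. The main obstacle I expect is verifying that the coupled meta-adversary is genuinely admissible under the adaptive DP definition while preserving the marginals needed by the single-target analysis: the transcript under the meta-adversary restricted to $b=k$ must coincide in distribution with that of $\adv^{(i_k)}$, so that the bound $\Pr_{\adv^{(i_k)}}[O_{i_k}]\ge 1/2$ transfers through group privacy. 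Making this coupling rigorous, and accounting for the mild slack between the (random) number of positive rounds and the worst-case $T_0$ bound on the Hamming distance used in group privacy, is the technical core of the argument.
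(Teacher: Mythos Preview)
Your multiplicative-weights analysis is essentially the paper's: the potential argument bounds $\max_{j\ne i}N_j$ and the mistake bound on positive rounds forces $N_i$ to be large, so the disjoint events $O_i$ hold with constant probability. The difficulty lies exactly where you flag it, and the gap is real rather than cosmetic.

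With your coupling, the sampled index $j$ on negative rounds is shared between the two candidate streams. But $\adv^{(i_1)}$ draws $j$ from $[d]\setminus\{i_1\}$ while $\adv^{(i_2)}$ draws from $[d]\setminus\{i_2\}$; a single shared $j$ cannot realize both distributions. If you exclude both $i_1$ and $i_2$, the marginal under $b=1$ is no longer that of $\adv^{(i_1)}$: the meta-adversary never plays $(i_2,0)$, so the learner can set $h_t(i_2)=1$ at every round without penalty, giving $N_{i_2}=T_0$. Then $O_{i_1}$, which demands $N_{i_2}\le T_0/6$, need not hold with constant probability, and the bound $\Pr_{\text{meta},b=1}[O_{i_1}]\ge 1/2$ collapses. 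If instead you sample $j_1,j_2$ separately (each with its own exclusion), the marginals are preserved but ``only positive rounds split'' fails, and you must justify $T_0$-fold adaptive group privacy by chaining through intermediate meta-adversaries that you have not constructed.

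The paper sidesteps all of this with one observation: treat the multiplicative-weights adversary as part of the \emph{algorithm}. Define $\B$ to be Algorithm~\ref{alg:adaptive_lower} wrapped around $\A$, taking as input a \emph{fixed} sequence $S$. Because the MWU weights $w_t(j)=w_{t-1}(j)e^{h_t(j)}$ depend only on the learner's outputs and not on $S$, two neighboring inputs $S,S'$ produce (under shared randomness) processed streams that still differ in a single entry; hence $\B$ is standard $\varepsilon$-DP (this is Claim~\ref{cla:pure}). One then applies ordinary, non-adaptive group privacy to the fixed inputs $S_i=\{(i,1),\dots,(i,1)\}$, which pairwise differ in all $T$ entries, obtaining $\Pr[\B(S_1)\in O_i]\ge e^{-T\varepsilon}\Pr[\B(S_i)\in O_i]$ without any coupling of adaptive adversaries. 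The packing then closes exactly as you wrote. Rewriting your argument in this ``absorb the adversary into the algorithm'' form removes the marginal-preservation obstacle entirely.
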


Since $\point_d$ is a subset of $\point_{\mathbb{N}}$ for any $d$, the above result directly implies that $\point_{\mathbb{N}}$ is not online learnable with adaptive adversaries under pure DP. This shows a strong separation between pure DP and approximate DP.

\begin{corollary}
\label{cor:pureadaptivepointN}
    Let $\varepsilon \le O(1)$. In the adaptive adversarial setting, any $\varepsilon$-differentially private online learning algorithm for $\point_{\mathbb{N}}$ must make $\Omega(T)$ mistakes.
\end{corollary}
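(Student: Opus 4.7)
The plan is to derive this from Theorem~\ref{thm:pureadaptivelower} by a simple reduction: since $\point_d \subseteq \point_{\mathbb{N}}$ for every $d \in \mathbb{N}$, any $\varepsilon$-differentially private online learner $\A$ for $\point_{\mathbb{N}}$ is in particular an $\varepsilon$-DP online learner for $\point_d$ (the realizability assumption with respect to $\point_{\mathbb{N}}$ is implied by realizability with respect to $\point_d$). So for any target $T$, I can invoke the lower bound of Theorem~\ref{thm:pureadaptivelower} against $\A$ for whatever value of $d$ is most convenient.

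Given $T$ and the fixed $\varepsilon = O(1)$, I would choose $d := \lceil e^{\varepsilon T} \rceil$, which is a finite integer and hence a legitimate choice. For this $d$ we have $\log d / \varepsilon \geq T$, so
\begin{equation*}
    \min\!\left(\tfrac{\log d}{\varepsilon},\, T\right) = T.
\end{equation*}
Applying Theorem~\ref{thm:pureadaptivelower} then produces an adaptive adversary that forces $\A$ to incur $\Omega(T)$ mistakes on a stream drawn from $[d] \times \{0,1\} \subseteq \mathbb{N} \times \{0,1\}$, realizable with respect to $\point_d \subseteq \point_{\mathbb{N}}$. Since this holds for every $T$, the claimed $\Omega(T)$ lower bound for $\point_{\mathbb{N}}$ follows.

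There is no real obstacle here beyond bookkeeping: one must check that the adversary's construction from Theorem~\ref{thm:pureadaptivelower} only exposes the learner to elements in a bounded subset of $\mathbb{N}$ (it does, by construction), that the algorithm's predictions $h_t \in \{0,1\}^{\mathbb{N}}$ can be restricted to $[d]$ without changing the count of relevant mistakes (they can, since all labeled examples in the stream lie in $[d]$), and that the privacy parameter $\varepsilon$ does not depend on $d$ so that the choice $d = \lceil e^{\varepsilon T}\rceil$ is permissible. All three are immediate, so the corollary reduces to a one-line instantiation of the preceding theorem.
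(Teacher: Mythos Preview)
Your proposal is correct and matches the paper's approach exactly: the paper states the corollary as an immediate consequence of Theorem~\ref{thm:pureadaptivelower} via the inclusion $\point_d \subseteq \point_{\mathbb{N}}$, and your explicit choice $d = \lceil e^{\varepsilon T}\rceil$ is precisely the instantiation that makes $\min(\log d/\varepsilon, T) = T$.
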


\begin{algorithm}[!ht]
\label{alg:adaptive_lower}
\DontPrintSemicolon
    \KwInput{the number of rounds $T$; online learning algorithm $\A$; input data stream $S$}
    \KwOutput{hypotheses $h_1,\dots,h_T$ outputted by $\A$}
    $w_0(j)\gets 1$ for all $j\in[d]$\;
    \For{$t=1, \dots, T$}{
        $(x_t, y_t)\gets S[t]$\;
        Set $p(j)\gets \frac{w_{t-1}(j)}{\sum_{k\in[d]\setminus\{x_t\}}w_{t-1}(k)}$ for $j\in[d]\setminus\{x_t\}$\;
        With probability $1/2$, sample $j\sim p$ and set $(x_t, y_t)\gets (j, 0)$\;
        Present $(x_t, y_t)$ to $\A$ and receive $h_t$ from $\A$\;
        Update $w_t(j) = w_{t-1}(j)\cdot e^{h_t(j)}$ for all $j\in[d]$\;
    }
    \Return{$h_1,\dots, h_T$}
\caption{Adaptive adversary for $\point_d$}
\end{algorithm}
\section{A General Lower Bound on the Number of Mistakes}
\label{sec:lower}

In this section, we prove an $\Omega(\ld(\mathcal{H})\log T)$ lower bound on the number of mistakes made by any private learner for every hypothesis class $\HH$ that contains a pair of non-complementary hypotheses.\footnote{Such type of classes is equivalent to the notion of non-trivial classes in learning with data poisoning. See, e.g.,~\citep{bshouty2002pac} and~\citep{hanneke2022optimal}.} This implies that as $T\to\infty$, any private algorithm will make an infinite number of mistakes. Note that without privacy, the Standard Optimal Algorithm always makes at most $\ld(\HH)$ mistakes~\citep{littlestone1988learning}. Thus, our lower bound reveals a universal separation between non-private and private models.

Our proof proceeds in two steps. We first show an $\Omega(\log T)$ lower bound in Section~\ref{sec:logT}. Then based on this result, we prove the $\Omega(\ld(\HH)\log T)$ lower bound in Section~\ref{sec:LDlogT}.

\subsection{A Lower Bound for Non-complementary Hypotheses}
\label{sec:logT}

We first define the notion of complementary hypotheses.

\begin{definition}
    We say two different hypotheses $f_1$ and $f_2$ over $\X$ are complementary if $f_1(x) = 1 - f_2(x)$ for all $x\in\X$. Otherwise we say they are non-complementary.
\end{definition}

It is worth noticing the following important fact about non-complementary hypotheses, where the first item directly comes from the above definition and the second is because $f_1$ and $f_2$ are different.

\begin{fact}
\label{fact:non-complementary}
    Let $f_1$ and $f_2$ be two different hypotheses over $\X$ that are non-complementary. Then:
    \begin{enumerate}
        \item There exists some $u_0\in \X$ such that $f_1(u_0) = f_2(u_0)$;
        \item There exists some $u_1\in \X$ such that $f_1(u_1)\neq f_2(u_1)$.
    \end{enumerate}
\end{fact}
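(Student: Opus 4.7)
The plan is to unpack the two definitions (non-complementary, and distinct as functions) and observe that each item follows by a one-line contrapositive argument, since the codomain is $\{0,1\}$.

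For item 1, I would start from the negation of the complementary condition. By definition, $f_1$ and $f_2$ are complementary iff $f_1(x) = 1 - f_2(x)$ for every $x \in \X$. Since the two functions are assumed non-complementary, this universal statement fails, so there exists some $u_0 \in \X$ with $f_1(u_0) \neq 1 - f_2(u_0)$. Because $f_1(u_0), f_2(u_0) \in \{0,1\}$, the only alternative to $f_1(u_0) = 1 - f_2(u_0)$ is $f_1(u_0) = f_2(u_0)$, giving the desired $u_0$.

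For item 2, I would just use that $f_1$ and $f_2$ are explicitly assumed to be different hypotheses, i.e.\ distinct as functions $\X \to \{0,1\}$. By the definition of function equality, distinctness means there is some point $u_1 \in \X$ where their values disagree, namely $f_1(u_1) \neq f_2(u_1)$.

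There is no real obstacle here; the only mild subtlety is that the term ``non-complementary'' is defined in the preceding \textbf{Definition} as ``different and not complementary,'' so one must be careful to invoke both clauses: the ``different'' part gives item 2 while the ``not complementary'' part gives item 1. Since the codomain is Boolean, the failure of the relation $f_1(x)=1-f_2(x)$ at a single point is immediately equivalent to agreement at that point, which is what makes item 1 a one-liner rather than requiring any case analysis.
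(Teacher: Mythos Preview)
Your proposal is correct and matches the paper's own justification essentially verbatim: the paper remarks that item 1 ``directly comes from the above definition'' of non-complementary, and item 2 holds ``because $f_1$ and $f_2$ are different.'' The only very minor inaccuracy is your paraphrase of the definition---the paper defines complementary/non-complementary only for pairs already assumed to be different, rather than baking ``different'' into the term---but since the Fact's hypothesis explicitly states $f_1$ and $f_2$ are different, this has no effect on your argument.
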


We remark that this fact is also used by~\citet{dmitriev2024growth} to prove a lower bound (in their work, they call it a ``distinguishing tuple''). However, they make a strong assumption that when running on a data stream containing $(u_0,f_1(u_0))$ only, with high probability, the algorithm predicts $h_t(u_1) = f_1(u_1)$ simultaneously for all $t\in[T]$. This largely weakens their bound since most DP algorithms clearly do not have such property.

To see how to use Fact~\ref{fact:non-complementary}, consider a hypothesis class that contains a pair of non-complementary hypotheses. We will focus on $u_0, u_1$ and $f_1, f_2$ only and ignore all other elements and hypotheses. In our proof, we will use $(u_0,f_1(u_0)) = (u_0, f_2(u_0))$ as a dummy input that provides no information about which hypothesis is correct. Let $S_0$ be a sequence that contains the dummy input only and $\A$ be an online learning algorithm. Without loss of generality, we can assume that $\Pr[\A(S_0)_t(u_1) = f_1(u_1)]\ge 1/2$ for all $t\in[T]$ (we can make this hold for half of the rounds by swapping $f_1$ and $f_2$, and ignore the rounds that it does not hold). We will insert $(u_1,f_2(u_1))$'s to make algorithm error.

Our proof relies on the classical packing argument. For ease of presentation, we only consider pure DP here, but the proof strategy easily extends to approximate DP via group privacy under approximate DP. In the framework of packing argument, we will construct a series of input sequences $S_1, \dots, S_m$ from $S_0$ and disjoint subsets of output $O_1,\dots, O_m$ such that $S_0$ and $S_i$ differ by at most $k$ elements for every $i\in[m]$, and any algorithm will make $\Omega(k)$ mistakes on $S_i$. Then by group privacy, for any $\varepsilon$-differentially private algorithm $\alg$ we have
\begin{equation*}
    1\ge \sum_{i=1}^m \Pr[\alg(S_0) \in O_i] \ge e^{-k\varepsilon}\sum_{i=1}^m \Pr[\alg(S_i)\in O_i].
\end{equation*}
Thus, a lower bound on $\Pr[\alg(S_i)\in O_i]$ implies a lower bound on $k$ by the above inequality.

The first challenge here is the construction of $S_i$. By our assumption, we can insert a $(u_1, f_2(u_1))$ at any position of $S_0$ to cause a loss of $1/2$. However, when inserting the second one, the loss may decrease by a multiplicative factor of $e^\varepsilon$. Following this argument, no matter how many $(u_1, f_2(u_1))$'s are inserted, we can only bound the expected number of mistakes by
\begin{equation*}
    \frac{1}{2}\left(1 + e^{-\varepsilon} + e^{-2\varepsilon} + \cdots\right) = constant,
\end{equation*}
failing to give an $\Omega(k)$ bound for $k = \log T$.

We overcome this by constructing them according to the given algorithm $\A$ instead of arbitrary algorithms. We will assume $\A$ has a mistake bound of $O(\log T)$ and seek to derive a contradiction. We now depict our construction. For $S_1$, we let $S = S_0$ be the initial data stream. We then go through every $t\in[T]$ in an increasing order, insert a $(u_1, f_2(u_1))$ at time-step $t$ whenever $\Pr[\A(S)_t(u_1) = f_1(u_1)]\ge 1/3$, and let $S_1 = S$ at the end. By our assumption, the number of $(u_1, f_2(u_1))$'s should not exceed $k = 3 \cdot O(\log T)$. Hence, $S_1$ and $S_0$ differ by at most $k = O(\log T)$ points. Moreover, by our construction, for each $t\in[T]$ such that $S_1[t] = (u_0, f_1(u_0))$, we must have $\Pr[\A(S_1)_t(u_1) = f_1(u_1)] < 1/3$.

Now let us construct $S_2$. We find the earliest round $t_1$ such that $\Pr[\A(S_1)_{t_1}(u_1) = f_1(u_1)] < 1/3$. The property we mentioned above ensures the existence of such $t_1$ as long as $k < T$. We then perform a similar procedure as in the construction of $S_1$, but instead of starting from $t = 1$ and going over the entire time span $[T]$, we start from $t = t_1$. The online nature of $\A$ allows us to use $t_1$ to distinguish $S_1$ and $S_2$ (as well as $S_3, \dots, S_m$, which we will construct later) since 
\begin{equation*}
    \Pr[\A(S_1)_{t_1}(u_1) = f_1(u_1)] < 1/3 < 1/2 \le \Pr[\A(S_2)_{t_1}(u_1) = f_1(u_1)].
\end{equation*}
In other words, $\A$ is more likely to predict $h_{t_1}(u_1)=f_1(u_1)$ on $S_2$ but is less likely to do so on $S_1$.

We repeat the construction for $i = 3, \dots, m$. For each $i$, we first identify the minimal $t_{i-1}$ such that $\Pr[\A(S_j)_{t_{i-1}}] < 1/3$ for every $j < i$. Then we insert $(u_1, f_2(u_1))$'s starting from $t=t_{i-1}$. By the same argument, $t_{i-1}$ can be used to distinguish $S_1, \dots, S_{i-1}$ and $S_i,\dots, S_m$. We formally describe the construction procedure in Algorithm~\ref{alg:Sandt}.

\begin{algorithm}[!ht]
\label{alg:Sandt}
\DontPrintSemicolon
    \KwInput{the number of rounds $T$; online learning algorithm $\A$; threshold $k$; $f_1, f_2$ and $u_0,u_1$}
    \KwOutput{a single data stream $S_i$, or a collection of $m$ data streams $S_1, \dots, S_m$ along with $m - 1$ time-steps $t_1, \dots, t_{m - 1}$}
    $S_0\gets\{(u_0, f_1(u_0)), \dots, (u_0, f_1(u_0))\}$\;
    $m\gets \lceil T / k\rceil$\;
    \For{$i=1, \dots, m$}{
        $S_i\gets S_0$\;
        Find the smallest $t_{i-1}$ such that $\forall j\in[i-1],\Pr[\A(S_j)_{t_{i - 1}}(u_1) = f_1(u_1)] < 1/3$ \;
        \For{$t = t_{i-1},\dots, T$}{
            \If{$\Pr[\A(S_i)_t(u_1) = f_1(u_1)] \ge 1/3$}{
                $S_i[t] \gets (u_1, f_2(u_1))$\;
            }
        }
        \If{$\left\lvert\{t\in[T]: S_i[t] = (u_1, f_2(u_1))\}\right\rvert> k$}{
            \Return{$S_i$}\;
        }
    }
    \Return{$S_1,\dots, S_m$ and $t_1,\dots, t_{m-1}$}
\caption{Constructing $S_0,S_1,\dots,S_m$ and $t_1,\dots, t_{m-1}$}
\end{algorithm}

At the end, we will have $m$ sequences $S_1,\dots, S_m$ and $m-1$ time-steps $t_1, \dots, t_{m-1}$ such that $\Pr[\A(S_i)_{t_j}(u_1) = f_1(u_1)] < 1/3$ for any $j\ge i$ and $\Pr[\A(S_i)_{t_j}(u_1) = f_1(u_1)] \ge 1/2$ for any $j < i$. It can be proved that $m = \Omega(T / k)$, which is sufficiently large for $k = O(\log T)$. Now we run $\A$ on some $S = S_i$. Suppose we can figure out the index $i$, we can apply the packing argument to derive an $\Omega(\log m) = \Omega(\log T)$ lower bound. 

Here comes the second challenge. Though we can use the output of $\A$ to estimate $\Pr[\A(S)_{t_j}(u_1) = f_1(u_1)]$ for a given $j$, we only have a constant success probability. To make the estimate accurate for every $j\in[m-1]$, one has to achieve a success probability of $1-1/m$ for each $j$. This requires running $\A$ for $O(\log m) = O(\log T)$ times and taking the average, which is prohibited since the resulting algorithm would be $O(\varepsilon\log T)$-DP, yielding a meaningless $\Omega(1)$ lower bound.

We address this issue by using binary search. We start with $\{t_1,\dots, t_{m-1}\}$ and select the middle point $t_{mid}$ in each iteration. By averaging over multiple copies of $\A(S)$, we can figure out whether we should go left or go right. This can be done in $O(\log m) = O(\log T)$ iterations, and we only require the decision made on each middle point to be correct. Thus, the number of independent copies can be reduced to $O(\log \log T)$, which leads to a lower bound of $\Omega(\log T / \log \log T)$.

The above approach is already sufficient to show an unbounded number of mistakes, but we can further refine our method to achieve an $\Omega(\log T)$ bound. The key observation here is that we do not need the probability of outputting $i$ on $S_i$ to be a constant. In fact, a success probability of $1 / m^{1-\Omega(1)}$ is enough to obtain $ k \ge \Omega(\log (m / m^{1 - \Omega(1)})) =\Omega( \log T)$.

We thus ``smooth'' our binary search. In each iteration, instead of going left or right deterministically, we go to the side that is more likely to be correct with some probability $p > 1/2$. We show that, by choosing $p$ appropriately, this approach will output $i$ on $S_i$ with probability $1 / m^{1-\Omega(1)}$. Moreover, it only requires running the online learning algorithm $O(1)$ times, avoiding the $\log \log T$ blow-up of privacy parameters. The $\Omega(\log T)$ lower bound then follows by applying the packing argument. We illustrate this approach in Algorithm~\ref{alg:smoothedbinary}.

\begin{theorem}
\label{thm:logT}
    Let $c\in(0, 1)$ be some constant. Suppose $\varepsilon\ge \ln T/ T^{1-c}$ and $\delta \le \varepsilon / T$. If $\HH$ is a hypothesis class that contains two non-complementary hypotheses, then any $(\varepsilon, \delta)$-differentially private online learning algorithm for $\HH$ must incur a mistake bound of $\Omega(\log T/\varepsilon)$ even in the oblivious adversarial setting.
\end{theorem}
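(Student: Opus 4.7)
The plan is to argue by contradiction. Assume some $(\varepsilon,\delta)$-DP learner $\A$ achieves expected mistake bound $k = o(\log T/\varepsilon)$ on $\HH$; I will derive from this a procedure showing $1 \ge m^{\Omega(1)}$ for $m = \lceil T/k\rceil$, which is absurd. Fix a non-complementary pair $f_1, f_2 \in \HH$ together with witnesses $u_0, u_1$ from Fact~\ref{fact:non-complementary}, and set $S_0 = \{(u_0, f_1(u_0)),\dots,(u_0,f_1(u_0))\}$. Because $\Pr[\A(S_0)_t(u_1) = f_1(u_1)] + \Pr[\A(S_0)_t(u_1) = f_2(u_1)] = 1$, after possibly swapping $f_1 \leftrightarrow f_2$ and restricting to at least half of the rounds I may assume $\Pr[\A(S_0)_t(u_1) = f_1(u_1)] \ge 1/2$ for every $t$ under consideration.

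Next, run Algorithm~\ref{alg:Sandt} with threshold $k$. Either it returns a single stream $S_i$ containing more than $k$ insertions of $(u_1, f_2(u_1))$, in which case each such time-step contributes an error with probability at least $1/3$, so $\A$ makes at least $k/3$ expected mistakes on $S_i$; after fixing the leading constant in $k$ this contradicts the assumed mistake bound. Otherwise, it returns $m = \lceil T/k \rceil$ streams $S_1,\dots,S_m$ and separator times $t_1,\dots,t_{m-1}$ such that each $S_i$ differs from $S_0$ in at most $k$ entries and $\Pr[\A(S_i)_{t_j}(u_1) = f_1(u_1)] \ge 1/2$ when $j < i$ while $<1/3$ when $j \ge i$.

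The crux is the smoothed binary search of Algorithm~\ref{alg:smoothedbinary}, which from $r = O(1)$ independent runs of $\A$ on an unknown $S \in \{S_1,\dots,S_m\}$ outputs an index $\hat{i} \in [m]$ with $\Pr[\hat{i} = i \mid S = S_i] \ge m^{-(1-c/2)}$. I would traverse a balanced binary tree over $\{t_1,\dots,t_{m-1}\}$ of depth $\lceil \log_2 m\rceil$; at each visited node $t_{\mathrm{mid}}$ compute the empirical fraction $\hat{p}$ of the $r$ replicas that predicted $f_1(u_1)$ at time $t_{\mathrm{mid}}$, then descend into the subtree indicated by $\hat{p}$ (left if $\hat{p} < 5/12$, else right) with some fixed bias $p \in (1/2, 1)$ rather than deterministically. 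Since the gap between $1/2$ and $1/3$ is a constant, a constant $r$ suffices (by Chernoff) to make the per-node ``correct-side'' probability at least $p$, so a fully correct traversal occurs with probability at least $p^{\lceil \log_2 m\rceil} = m^{-\log_2(1/p)}$; picking $p$ close enough to $1$ forces this exponent below $1-c/2$.

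Finally, the packing argument concludes. Let $\mathcal{M} = \mathrm{SmoothedBS}\circ\A^{\otimes r}$ be the composed procedure, which is $(r\varepsilon, r\delta)$-DP by basic composition and post-processing. The events $O_i = \{\mathcal{M}(S) = i\}$ are disjoint, and each $S_i$ differs from $S_0$ in at most $k$ entries, so group privacy yields
\begin{equation*}
  m^{c/2} \le \sum_{i=1}^m \Pr[\mathcal{M}(S_i) \in O_i] \le e^{rk\varepsilon}\sum_{i=1}^m \Pr[\mathcal{M}(S_0) \in O_i] + m\cdot\tfrac{e^{rk\varepsilon}-1}{e^{r\varepsilon}-1}\cdot r\delta \le e^{rk\varepsilon}\bigl(1 + mr\delta/(r\varepsilon)\bigr).
\end{equation*}
Plugging $\delta \le \varepsilon/T$ and $m \le T$ makes the parenthetical $O(1)$, so $rk\varepsilon \ge \Omega(\log m)$, i.e.\ $k \ge \Omega(\log m/\varepsilon) = \Omega(\log T/\varepsilon)$ (the hypothesis $\varepsilon \ge \ln T/T^{1-c}$ keeps $\log m = (1-o(1))\log T$ even after absorbing the $\log k$ correction), contradicting $k = o(\log T/\varepsilon)$. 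The hard part will be to calibrate $p$, $r$, and the decision threshold jointly so that the composition penalty $rk\varepsilon$ is dominated by the packing gain $(c/2)\log m$ while maintaining an inverse-polynomial success probability; the lower bound on $\varepsilon$ in the hypothesis is precisely what prevents the $\log k$ overhead from swallowing the logarithmic gap.
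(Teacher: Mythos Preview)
Your proposal follows the same strategy as the paper's proof: reduce to the $f_1,f_2,u_0,u_1$ situation, run Algorithm~\ref{alg:Sandt} to produce $m=\lceil T/k\rceil$ streams with separator times, distinguish them via the smoothed binary search of Algorithm~\ref{alg:smoothedbinary} using $O(1)$ independent runs of $\A$, and finish with packing plus group privacy. The final arithmetic (using $\delta\le\varepsilon/T$ to kill the additive term and $\varepsilon\ge\ln T/T^{1-c}$ to ensure $\log m\ge c\log T$) also matches.

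There is, however, one step that does not go through as written. You assert that a constant $r$ makes the per-node ``correct-side'' probability at least $p$, and then multiply to obtain $\Pr[\text{fully correct traversal}]\ge p^{\lceil\log_2 m\rceil}$. Neither part is quite right. First, the per-node correct probability is $p\Pr[X_j{=}1]+(1-p)\Pr[X_j{=}0]$, which is \emph{strictly less} than $p$ unless the empirical estimate is almost surely correct. Second, and more importantly, even if each marginal were $\ge p$ you cannot simply multiply: the indicators $X_j$ (``empirical estimate at $t_{i_j}$ lands on the correct side of the threshold'') along the path to $i$ are all functions of the \emph{same} $r$ runs of $\A(S_i)$ and may be arbitrarily correlated. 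Since $\Pr[\text{all correct}\mid X]=\prod_j\bigl(pX_j+(1-p)(1-X_j)\bigr)$, taking expectations shows this quantity is always $\le p^n$, with equality only when $\sum_j X_j=n$ almost surely---the opposite of the inequality you need.

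The paper closes this gap with an averaging argument that does not require independence of the $X_j$. With $r=360$ and bias $3/4$, Hoeffding gives $\mathbb{E}[X_j]\ge 0.99$ for each $j$; linearity yields $\mathbb{E}\bigl[n-\sum_j X_j\bigr]\le 0.01n$, and Markov gives $\Pr\bigl[\sum_j X_j\ge 0.97n\bigr]\ge 2/3$. Conditioning on this event, the success probability is at least $(3/4)^{0.97n}(1/4)^{0.03n}=2^{-0.545n}$, so $\Pr[\B(S_i)=i]\ge\tfrac{2}{3}(2m)^{-0.545}$. This is precisely the $m^{-(1-\Omega(1))}$ bound you want with a genuinely constant $r$; plugging it into your packing inequality then completes the proof exactly as you outlined.
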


One may ask whether the existence of a non-complementary pair is a necessary condition for the number of mistakes to be unbounded. Note that there are only two cases that $\HH$ contains no non-complementary pairs: either $\lvert\HH\rvert = 1$ or $\HH = \{f_1, f_2\}$ such that $f_1 = 1-f_2$. The former is definitely online learnable with zero mistakes. For the latter one, we give an algorithm with a finite expected mistake bound in Appendix~\ref{sec:algo}, showing that the condition is indeed necessary and sufficient.

\begin{algorithm}[!ht]
\label{alg:smoothedbinary}
\DontPrintSemicolon
    \KwInput{the number of rounds $T$; online learning algorithm $\A$; time-steps $t_1,\dots, t_{m-1}$; input data stream $S\in\{S_1,\dots, S_m\}$; $f_1,f_2$ and $u_0, u_1$ used in Algorithm~\ref{alg:Sandt}}
    \KwOutput{an index $i\in[m]$}
    Run $\A$ on $S$ for $360$ times, obtain $360$ copies of output $\{h_1^{(w)},\dots, h_T^{(w)}\}$ for $w\in[360]$\;
    $l\gets 1$, $r\gets m$\;
    \While{$l < r$}{
        $mid \gets \lfloor\frac{l + r}{2}\rfloor$\;
        \uIf{$\lvert\{h_{t_{mid}}^{(w)}(u_1) = f_1(u_1):w\in[360]\}\rvert < 150$}{
            Let $r\gets mid$ with probability $3/4$, and $l\gets mid + 1$ otherwise\;
        }
        \Else{
            Let $l\gets mid + 1$ with probability $3/4$, and $r\gets mid$ otherwise\;
        }
    }
\Return{$l$}\;
\caption{Distinguishing $S_1, \dots, S_m$}
\end{algorithm}

\subsection{Incorporating the Littlestone Dimension}
\label{sec:LDlogT}

Building upon the $\Omega(\log T)$ lower bound, we are now ready to show an $\Omega(\ld(\HH)\log T)$ lower bound for general hypothesis classes. Let $\A$ be a private online learning algorithm for $\HH$. Consider a shattered tree of depth $\ld(\HH) \ge 2$. Let $u_0$ denote its root and $u_1$ be its left child. By the definition of shattered tree, there exists $f_1,f_2\in\HH$ such that $f_1(u_0)=f_2(u_0)=0$ and $0=f_1(u_1)\neq f_2(u_1)=1$. Note that $f_1, f_2$ and $u_0, u_1$ satisfy the property mentioned in Fact~\ref{fact:non-complementary}. We can thus apply Theorem~\ref{thm:logT} to find a sequence $S_1$ of length $T'$ on which $\A$ makes $\Omega(\log T')$ mistakes. 

Till now, only the true labels of $u_0$ and $u_1$ are revealed to the learner. Therefore, we can go into the corresponding subtree of $u_1$ and reiterate the above operation. After repeating it $\ld(\HH)/2$ times, we obtain a series of completely non-overlapping sequences $S_1,\dots,S_{\ld(\HH)/2}$ and on any one of them $\A$ makes $\Omega(\log T')$ mistakes. By concatenating them together and setting $T' = T / \ld(\HH)$, we arrive at the $\Omega(\ld(\HH)\log T)$ lower bound assuming $T > \ld(\HH)^{1+c}$.

\begin{theorem}
\label{thm:LDlogT}
    Let $c_1\in(0, 1)$ and $c_2>0$ be two constants. Suppose $\varepsilon \ge \ln T/ T^{(1-c_1)c_2 / (1 + c_2)}$ and $\delta \le \varepsilon / T$. If $\HH$ is a hypothesis class that contains two non-complementary hypotheses, then any $(\varepsilon, \delta)$-differentially private online learning algorithm for $\HH$ must incur a mistake bound of $\Omega(\ld(\HH)\log T/\varepsilon)$ even in the oblivious adversarial setting given that $T > \ld(\HH)^{1 + c_2}$.
\end{theorem}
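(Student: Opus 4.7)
The plan is to iterate Theorem~\ref{thm:logT} down a shattered tree of depth $d = \ld(\HH)$. Fix such a tree, set $T' := \lfloor T / \lfloor d/2 \rfloor \rfloor$, and build one oblivious realizable stream of total length at most $T$ as the concatenation of $\lfloor d/2 \rfloor$ blocks, each of length $T'$, each forcing any $(\varepsilon,\delta)$-DP online learner to make $\Omega(\log T' / \varepsilon)$ mistakes. Before starting I verify the parameters: under $T > \ld(\HH)^{1+c_2}$ we have $d < T^{1/(1+c_2)}$, hence $T' \ge T^{c_2/(1+c_2)}$. This gives $\log T' = \Omega(\log T)$ and propagates the hypothesis of Theorem~\ref{thm:logT} to each block via
\begin{equation*}
\ln T' / (T')^{1-c_1} \;\le\; \ln T / T^{(1-c_1)c_2/(1+c_2)} \;\le\; \varepsilon, \qquad \delta \le \varepsilon / T \le \varepsilon / T',
\end{equation*}
so Theorem~\ref{thm:logT} applies at privacy budget $(\varepsilon,\delta)$ and block length $T'$.

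The blocks are generated by a top-down walk on the shattered tree. Initialize $v_1$ to be the root. In iteration $j = 1,\dots,\lfloor d/2 \rfloor$, let $u_0^{(j)} := v_j$ and let $u_1^{(j)}$ be its left child. By the shattering property, the labels already committed by blocks $1,\dots,j-1$ (which pin down a root-to-$v_j$ path) extend to a pair $f_1^{(j)}, f_2^{(j)} \in \HH$ with $f_1^{(j)}(u_0^{(j)}) = f_2^{(j)}(u_0^{(j)}) = 0$ and $f_1^{(j)}(u_1^{(j)}) \ne f_2^{(j)}(u_1^{(j)})$, so their restriction to $\{u_0^{(j)}, u_1^{(j)}\}$ is a non-complementary pair in the sense of Fact~\ref{fact:non-complementary}. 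Define an online learner $\A^{(j)}$ that first internally simulates $\A$ on the data-independent prefix $S^{(1)} \| \cdots \| S^{(j-1)}$ and then forwards its own input to $\A$; by post-processing, $\A^{(j)}$ is $(\varepsilon,\delta)$-DP. Applying Theorem~\ref{thm:logT} to $\A^{(j)}$ with $(u_0^{(j)}, u_1^{(j)})$ and $(f_1^{(j)}, f_2^{(j)})$ produces a block $S^{(j)}$ of length $T'$, composed of dummy points $(u_0^{(j)}, 0)$ interspersed with a few inserts $(u_1^{(j)}, 1 - f_1^{(j)}(u_1^{(j)}))$, on which $\A$ makes $\Omega(\log T' / \varepsilon)$ mistakes during the corresponding rounds. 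The block is realizable by at least one of $f_1^{(j)}, f_2^{(j)}$ (fixed arbitrarily if both); that hypothesis's value at $u_1^{(j)}$ selects one of the two grandchildren of $u_0^{(j)}$ to serve as $v_{j+1}$, and this grandchild still roots a shattered subtree of depth at least $d - 2j \ge 2$ whenever $j \le \lfloor d/2 \rfloor - 1$.

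After $\lfloor d/2 \rfloor$ iterations the concatenated stream $S^{(1)} \| \cdots \| S^{(\lfloor d/2 \rfloor)}$ is realized by the hypothesis sitting at the leaf of the traversed root-to-leaf path (which exists in $\HH$ by shattering), so it is a valid realizable oblivious input of length at most $T$; summing the per-block lower bounds gives the claimed $\Omega(\ld(\HH) \log T / \varepsilon)$ bound. The principal obstacle is this cross-block realizability: the labels used in block $j$ must be simultaneously compatible with every prior and every future block under one fixed $\HH$-hypothesis, and the shattered-tree walk is precisely the mechanism that guarantees such extendability. A small point worth verifying is that the proof of Theorem~\ref{thm:logT} only references $\HH$ through the specified pair $(f_1, f_2)$ evaluated on $\{u_0, u_1\}$, so the presence in $\HH$ of hypotheses incompatible with our chosen path does not interfere with the per-block argument.
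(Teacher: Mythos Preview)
Your proposal is correct and follows essentially the same approach as the paper: both iterate Theorem~\ref{thm:logT} down a shattered tree two levels at a time, producing $\lfloor \ld(\HH)/2\rfloor$ blocks of length $T' \approx T/\ld(\HH)$, verifying the parameter conditions via $T' \ge T^{c_2/(1+c_2)}$, and concatenating into a single realizable stream. Your write-up is in fact a bit more explicit than the paper about the prefix-simulated learner $\A^{(j)}$ and about cross-block realizability via the shattered-tree path; the only omission is the trivial edge case $\ld(\HH)=1$ (where $\lfloor d/2\rfloor = 0$), which the paper handles by invoking Theorem~\ref{thm:logT} directly.
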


Note that the class of all hypotheses over $[d]$ has a Littlestone dimension of $\lfloor\log_2 d\rfloor$. The above theorem directly implies the following lower bound for the OPE problem. This improves the lower bound in~\citep{asi2023near} by a $\log T$ factor.

\begin{corollary}
\label{cor:logdlogT}
    Let $c_1\in(0, 1)$ and $c_2>0$ be two constants. Suppose $\varepsilon\ge \ln T/ T^{(1-c_1)c_2 / (1 + c_2)}$ and $\delta \le \varepsilon / T$. In the realizable setting, any $(\varepsilon, \delta)$-differentially private algorithm for OPE has a regret of $\Omega(\log d\log T/\varepsilon)$ even against oblivious adversaries given that $T > \lfloor\log_2 d\rfloor^{1 + c_2}$.

\end{corollary}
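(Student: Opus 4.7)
The plan is to reduce OPE lower bounds to the online-learning lower bound of Theorem~\ref{thm:LDlogT} via a natural ``hypotheses-as-experts'' correspondence, together with a suitable choice of hypothesis class whose size is at most $d$ and whose Littlestone dimension matches $\lfloor \log_2 d \rfloor$.

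First I would fix $k = \lfloor \log_2 d \rfloor$ and consider the hypothesis class $\HH = \{0,1\}^{[k]}$ over the domain $\X = [k]$, so that $\lvert \HH \rvert = 2^k \le d$. The Littlestone dimension of $\HH$ equals $k$: one can label the $i$-th level of a depth-$k$ binary tree by the domain element $i$, and since every Boolean assignment on $[k]$ appears in $\HH$, every root-to-leaf path is realized. For the regime of interest we may assume $d \ge 4$, hence $k \ge 2$, so $\HH$ contains non-complementary hypotheses (e.g.\ the all-zero hypothesis and the indicator of $\{1\}$), fulfilling the hypothesis of Theorem~\ref{thm:LDlogT}.

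Next I would set up the reduction. Given any $(\varepsilon,\delta)$-differentially private OPE algorithm $\mathcal{A}_{\mathrm{OPE}}$ with $d$ experts, I construct an $(\varepsilon,\delta)$-DP online learner $\A$ for $\HH$: identify the first $2^k$ experts with the hypotheses in $\HH$ (pad the remaining $d - 2^k$ experts arbitrarily, which only makes OPE's task easier); at round $t$, upon receiving $(x_t,y_t)$ from the oblivious adversary, form the loss vector $\ell_t(h) = \I[h(x_t)\neq y_t]$ (and $\ell_t \equiv 1$ on padding experts), feed $\ell_t$ to $\mathcal{A}_{\mathrm{OPE}}$, and output the returned expert as $h_t$. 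In the realizable case any $h^\star \in \HH$ consistent with the stream satisfies $\ell_t(h^\star)=0$ for all $t$, so the OPE regret equals $\sum_t \I[h_t(x_t)\neq y_t]$, i.e.\ $\A$'s expected mistake count. Privacy is inherited because the map $(x_t,y_t)\mapsto \ell_t$ is deterministic and acts entry-wise, so two input streams for $\A$ differing in one entry produce loss sequences for $\mathcal{A}_{\mathrm{OPE}}$ differing in one entry.

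Finally I would invoke Theorem~\ref{thm:LDlogT} on $\HH$. The hypothesis $T > \lfloor \log_2 d \rfloor^{1+c_2} = \ld(\HH)^{1+c_2}$ and the stated constraints on $\varepsilon$ and $\delta$ are exactly those needed, so $\A$ must incur an expected mistake bound of $\Omega(\ld(\HH)\log T/\varepsilon) = \Omega(\log d\,\log T/\varepsilon)$ against an oblivious adversary. By the reduction, the same quantity lower-bounds the expected regret of $\mathcal{A}_{\mathrm{OPE}}$ in the realizable OPE setting with $d$ experts. Since the construction is purely a reduction, there is no serious obstacle beyond verifying that the DP guarantee transfers under the deterministic post/preprocessing and that the padded experts do not corrupt realizability; both are routine in the oblivious setting to which the corollary restricts.
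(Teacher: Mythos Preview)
Your proposal is correct and follows exactly the paper's approach: the paper derives the corollary directly from Theorem~\ref{thm:LDlogT} by taking $\HH$ to be the class of all Boolean functions on a domain of size $\lfloor\log_2 d\rfloor$ (so $\lvert\HH\rvert\le d$ and $\ld(\HH)=\lfloor\log_2 d\rfloor$), and you have filled in the standard hypotheses-as-experts reduction that the paper leaves implicit. One cosmetic point: your description of the reduction reads as if $\A$ sees $(x_t,y_t)$ before outputting $h_t$; to keep the protocol valid you should phrase it as ``query $\mathcal{A}_{\mathrm{OPE}}$ for $i_t$, output the corresponding hypothesis as $h_t$, then receive $(x_t,y_t)$ and feed $\ell_t$ back,'' but the substance is unaffected.
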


\subsection{Comparing to the Upper Bounds}
We have shown an $\Omega_{\HH}(\log T)$ lower bound on the number of mistakes made by any private online learning algorithm. We now compare it to existing upper bounds.

For pure DP, we provide an upper bound of $O_{\HH}(\log^2 T\cdot (\log\log T)^2)$. This is larger than our lower bound by a factor of $\log T\cdot (\log\log T)^2$. In Appendix~\ref{sec:algo}, we show that $O_{\HH}(\log T)$ is achievable for some specific hypothesis classes. Whether $O_{\HH}(\log T)$ is attainable for generic hypothesis classes remains open.

For approximate DP,~\citet{golowich2021littlestone} proposed an algorithm with $O_{\HH}(\log T)$ mistakes against oblivious adversaries. Thus, our lower bound is tight assuming a constant Littlestone dimension. However, their algorithm exhibits an $O_{\HH}(\sqrt{T})$ upper bound against upper bound against adaptive adversaries. Whether this can also be improved to $O_{\HH}(\log T)$ is an interesting open question.
\section{Discussion}
\label{sec:dis}

In this work, we investigate online learning with differential privacy and provide separation results that distinguish non-private, pure private, and approximate private constraints. Below, we discuss some limitations and future work.

\paragraph*{Tighter dependence on $T$ under pure DP.} Our algorithm for pure private online learning with oblivious adversaries exhibits a $\log^2 T\cdot (\log\log T)^2$ dependence on $T$ (Theorem~\ref{thm:dpOPErealizable}). We also provide algorithms for $\point_N$ (Theorem~\ref{thm:pointN}) and $\threshold_d$ (Theorem~\ref{thm:threshold}) that have a $O_{\HH}(\log T)$ mistake bound. It is interesting to find out if a $\log T$ dependence is achievable for generic hypothesis classes.

\paragraph*{Broader range of privacy parameters.} Our $\Omega(\log T)$ lower bound requires $\delta < 1/T$ (Theorem~\ref{thm:logT}) for constant $\varepsilon$, while the result in~\citep{cohen2024lower} only needs $\delta < 1/\log T$. Although it is a commonly accepted criterion to select $\delta = o(1/T)$, we still wonder whether our bound also holds for $\delta < 1/\log T$. Moreover, our results do not cover the cases where $\varepsilon$ or $T$ are extremely small. Is it possible to cover the entire range? 

\paragraph*{Mistake bound against stochastic adversaries.} One benefit of online learning is that it does not require the data to be i.i.d. generated. But in some scenarios, we may still have i.i.d. data but have to make online predictions. Clearly, such \emph{stochastic adversaries} are weaker than oblivious ones. Our construction in Algorithm~\ref{alg:Sandt} does not apply to stochastic adversaries. Can we overcome the $\Omega(\log T)$ barrier assuming stochastic adversaries?

\paragraph*{Lower bound on learning with constant success probability.} In Section~\ref{sec:logT}, we show that the \emph{expected} number of mistakes incurred by any algorithm is $\Omega(\log T)$. It is unclear whether the $\Omega(\log T)$ cost remains inevitable or a mistake bound of $o(\log T)$ can be achieved if we only require the learner to succeed with a \emph{constant probability} (e.g., $0.99$). 

\section*{Acknowledgements}
The research was supported in part by an RGC RIF grant under the contract R6021-20, an RGC TRS grant under the contract T43-513/23N-2, RGC CRF grants under the contracts C7513, C7004-22G, C1029-22G and C6015-23G, and RGC GRF grants under the contracts 16200221, 16207922 and 16207423.
The authors would like to thank the anonymous reviewers for their feedback and suggestions.

{

\bibliographystyle{plainnat}
\bibliography{references.bib}

}

\newpage
\appendix

\section{Additional Preliminaries}

\begin{theorem}[Hoeffding's Inequality~\citep{hoeffding1963probability}]
    Let $Z_1,\dots,Z_n$ be independent bounded random variables with $Z_i\in[a, b]$. Then
    \begin{equation*}
        \Pr\left[\frac{1}{n}\sum_{i=1}^n\left(Z_i - \mathbb{E}[Z_i]\right)\ge t\right]\le \exp\left(-\frac{2nt^2}{(b-a)^2}\right)
    \end{equation*}
    for all $t \ge 0$.
\end{theorem}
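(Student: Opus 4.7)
The plan is to prove Hoeffding's inequality by the standard Chernoff-style moment generating function (MGF) argument, reducing the tail bound on a sum of bounded independent random variables to a uniform bound on the MGF of each centered summand. First I would introduce $Y_i := Z_i - \mathbb{E}[Z_i]$, so that each $Y_i$ has mean zero and lies in $[a - \mathbb{E}[Z_i],\, b - \mathbb{E}[Z_i]]$, an interval of length $b - a$. Writing $S_n := \sum_{i=1}^n Y_i$, the probability we want to bound is $\Pr[S_n \ge nt]$. For any parameter $s > 0$, applying Markov's inequality to $e^{s S_n}$ gives
\begin{equation*}
\Pr[S_n \ge nt] \;=\; \Pr\bigl[e^{s S_n} \ge e^{snt}\bigr] \;\le\; e^{-snt}\,\mathbb{E}\bigl[e^{s S_n}\bigr],
\end{equation*}
and by the independence of the $Y_i$ the MGF factors as $\mathbb{E}[e^{s S_n}] = \prod_{i=1}^n \mathbb{E}[e^{s Y_i}]$.

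The main technical step is \emph{Hoeffding's lemma}: for any zero-mean random variable $Y$ supported in an interval of length $L = b - a$, one has $\mathbb{E}[e^{sY}] \le \exp(s^2 L^2 / 8)$. I would prove this by convexity of $x \mapsto e^{sx}$: for $Y \in [a', b']$ with $b' - a' = L$, write $Y = \lambda a' + (1-\lambda) b'$ with $\lambda = (b' - Y)/L$, so that $e^{sY} \le \lambda e^{sa'} + (1-\lambda) e^{sb'}$. Taking expectations and using $\mathbb{E}[Y] = 0$ yields an explicit closed-form upper bound on $\mathbb{E}[e^{sY}]$. Setting $\phi(s) := \log \mathbb{E}[e^{sY}]$ and expanding around $s = 0$, Taylor's theorem gives $\phi(s) = \phi(0) + s\phi'(0) + \tfrac{s^2}{2}\phi''(\xi)$ for some $\xi \in (0,s)$; a direct computation shows $\phi(0) = 0$, $\phi'(0) = \mathbb{E}[Y] = 0$, and $\phi''(\xi)$ can be written as $\rho(1-\rho)L^2$ for some $\rho \in [0,1]$, hence bounded by $L^2/4$ via AM-GM. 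This yields $\phi(s) \le s^2 L^2 / 8$, which is the lemma.

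Combining the two ingredients, $\Pr[S_n \ge nt] \le \exp\bigl(-snt + n s^2 (b-a)^2 / 8\bigr)$, and the final step is to optimize over $s > 0$. Differentiating the exponent shows that the minimizer is $s^\star = 4t/(b-a)^2$, and substituting back gives the claimed bound $\exp\bigl(-2nt^2/(b-a)^2\bigr)$. The main obstacle, though purely computational, is the second-derivative estimate $\phi''(\xi) \le (b-a)^2/4$ in Hoeffding's lemma; I would set up an auxiliary tilted probability $\mathbb{Q}$ with density proportional to $e^{sY}$ with respect to the law of $Y$, under which $\phi''(s)$ equals the variance of $Y$ and is therefore at most $(b-a)^2/4$ by Popoviciu's inequality on variances of bounded variables. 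All remaining arithmetic — the optimization in $s$ and the reduction to the average $S_n/n$ — is routine.
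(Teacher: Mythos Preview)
Your proposal is correct: this is the standard Chernoff--MGF argument together with Hoeffding's lemma, and all the steps you outline (convexity bound, Taylor expansion of the cumulant generating function, Popoviciu's bound on the variance of a bounded variable, optimization over $s$) are valid.

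However, there is nothing to compare against: the paper does not prove this theorem. It is stated in the ``Additional Preliminaries'' appendix as a classical result cited to~\citet{hoeffding1963probability}, with no accompanying proof. Your write-up would serve as a fine self-contained proof if one were wanted, but the paper simply quotes the inequality as background.
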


The Laplace mechanism ensures privacy by adding Laplace noise.

\begin{definition}[Sensitivity]
    Let $f:\Z^n\to \mathbb{R}$ be a function. The sensitivity of $f$ is defined by
    \begin{equation*}
        \Delta_f = \max_{S_1\textup{ and }S_2\textup{ differ in one entry}}\lvert f(S_1) -  f(S_2)\rvert.
    \end{equation*}
\end{definition}

\begin{definition}[Laplace Distribution]
    The Laplace distribution with parameter $b$ and mean $0$, denoted by $\lap(b)$, is defined by the following probability density function:
    \begin{equation*}
        f(x) = \frac{1}{2b}\exp(-\lvert x \rvert/ b).
    \end{equation*}
\end{definition}

\begin{lemma}[The Laplace Mechanism~\citep{dwork2006calibrating}]
    Let $r\sim\lap(\Delta_f/\varepsilon)$ be a Laplace random variable. The algorithm that outputs $f(S) + r$ satisfies $\varepsilon$-differential privacy. Moreover, with probability $1-\beta$ it holds that $\lvert r\rvert\le \ln(1/\beta)\Delta_f/\varepsilon$.
\end{lemma}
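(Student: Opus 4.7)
The statement has two independent assertions bundled together: that the additive noise mechanism is $\varepsilon$-DP, and that a single Laplace draw has a specific sub-exponential tail bound. The plan is to treat the two parts separately, handling the privacy guarantee via a pointwise density-ratio calculation and the tail bound via direct integration of the Laplace density.

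For the privacy part, I would fix arbitrary neighboring inputs $S_1,S_2$ (differing in one entry) and any measurable event $O\subseteq\mathbb{R}$. Let $b=\Delta_f/\varepsilon$ so that the density of the output under input $S_i$ is $x\mapsto \frac{1}{2b}\exp\bigl(-|x-f(S_i)|/b\bigr)$. The crux is the pointwise ratio
\[
\frac{\exp(-|x-f(S_1)|/b)}{\exp(-|x-f(S_2)|/b)} \;=\; \exp\!\left(\frac{|x-f(S_2)|-|x-f(S_1)|}{b}\right) \;\le\; \exp\!\left(\frac{|f(S_1)-f(S_2)|}{b}\right) \;\le\; e^{\varepsilon},
\]
where the first inequality is the reverse triangle inequality and the second uses the sensitivity bound $|f(S_1)-f(S_2)|\le\Delta_f$ together with $b=\Delta_f/\varepsilon$. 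Integrating this pointwise bound against the indicator of $O$ yields $\Pr[f(S_1)+r\in O]\le e^{\varepsilon}\Pr[f(S_2)+r\in O]$, which is exactly the $\varepsilon$-DP condition with $\delta=0$.

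For the tail bound, I would directly compute, using the symmetry of the density about zero,
\[
\Pr[|r|>t] \;=\; 2\int_t^{\infty}\frac{1}{2b}\exp(-x/b)\,dx \;=\; \exp(-t/b).
\]
Setting the right-hand side equal to $\beta$ gives $t=b\ln(1/\beta)=\ln(1/\beta)\,\Delta_f/\varepsilon$, so $\Pr[|r|\le \ln(1/\beta)\Delta_f/\varepsilon]\ge 1-\beta$, as claimed.

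Neither step poses a real obstacle; this is a foundational calculation. The only places that require care are keeping the direction of the reverse triangle inequality consistent when bounding the density ratio, and matching the parametrization of $\mathrm{Lap}(b)$ used in the statement (mean zero, scale $b$) so that the CDF computation lines up with $b=\Delta_f/\varepsilon$. Strictly speaking one should also note that the argument works for any event $O$ in the Borel $\sigma$-algebra (the pointwise density ratio bound transfers to an integral inequality without measurability issues), but this is routine.
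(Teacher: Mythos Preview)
Your argument is correct and is the standard textbook proof; the paper itself does not prove this lemma but merely states it as a cited result from \citet{dwork2006calibrating}, so there is nothing to compare against.
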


We also need the following $\mathsf{AboveThreshold}$ algorithm (aka the sparse vector technique)~\citep{dwork2009complexity}.

\begin{algorithm}[!ht]
\label{alg:svt}
\DontPrintSemicolon
    \KwInput{database $S$; privacy parameter $\varepsilon$; threshold $L$; a series of online and adaptively chosen sensitivity-1 queries $q_1,\dots$}
    \KwOutput{a stream of response $a_1,\dots$}
    $\hat{L} \gets L + \lap(2/\varepsilon)$\;
    \For{$i=1,\dots,$}{
        $\hat{q_i}\gets q_i(S) + \lap(4/\varepsilon)$\;
        \uIf{$\hat{q_i} \ge \hat{L}$}{
            $a_i\gets \top$\;
            \textbf{halt}\;
        }\Else{$a_i\gets \bot$\;}
    }
\caption{$\mathsf{AboveThreshold}$}
\end{algorithm}

\begin{lemma}
    Algorithm~\ref{alg:svt} is $\varepsilon$-differently private.
\end{lemma}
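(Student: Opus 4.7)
The plan is to analyze directly the probability that Algorithm~\ref{alg:svt} produces any fixed output sequence on two neighboring databases. Since the algorithm halts as soon as it outputs a $\top$, every output has the form $(\bot,\bot,\ldots,\bot,\top)$ terminating at some index $k$ (or the all-$\bot$ stream). So it suffices to bound the ratio
\[
\frac{\Pr[A(S)=(\bot^{k-1},\top)]}{\Pr[A(S')=(\bot^{k-1},\top)]}\le e^{\varepsilon}
\]
for every $k$ and every pair of neighboring $S,S'$, where $A$ denotes AboveThreshold.

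First I would write this probability as an integral over the threshold noise $\rho\sim\lap(2/\varepsilon)$ and the query noises $\eta_1,\ldots,\eta_k\sim\lap(4/\varepsilon)$. The event of interest is
\[
\{q_i(S)+\eta_i < L+\rho \text{ for all } i<k\}\ \cap\ \{q_k(S)+\eta_k\ge L+\rho\}.
\]
The key step is a change of variables that carries any realization making this event hold on $S$ to a realization making the corresponding event hold on $S'$. The substitution is $\rho\mapsto \rho+1$ together with $\eta_k\mapsto \eta_k+2$, leaving $\eta_1,\ldots,\eta_{k-1}$ unchanged. Using $|q_i(S)-q_i(S')|\le 1$ from sensitivity-$1$, a direct check shows (i) every below-threshold event $q_i(S)+\eta_i<L+\rho$ at $i<k$ transports to $q_i(S')+\eta_i<L+\rho'$ for $\rho'=\rho+1$, because raising the threshold by $1$ while raising $q_i$ by at most $1$ can only preserve a strict inequality; and (ii) the above-threshold event $q_k(S)+\eta_k\ge L+\rho$ transports to $q_k(S')+\eta_k'\ge L+\rho'$ with $\eta_k'=\eta_k+2$, since the $+2$ shift compensates for a possible $-1$ change in $q_k$ plus the $+1$ raise of the threshold.

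Next I would account for the Laplace density cost of the substitution. The density ratio of $\lap(2/\varepsilon)$ under a unit shift of $\rho$ is at most $e^{\varepsilon/2}$, and the density ratio of $\lap(4/\varepsilon)$ under a shift of $2$ of $\eta_k$ is also at most $e^{\varepsilon/2}$; the unchanged noises $\eta_1,\ldots,\eta_{k-1}$ contribute no factor. Multiplying yields the desired $e^{\varepsilon}$ bound on the probability of any single output; summing (or integrating) over outputs preserves the bound and establishes $\varepsilon$-DP for the joint output stream.

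The main thing to get right is verifying the implications (i) and (ii) carefully, together with the observation that only the threshold noise and the final query's noise need to be shifted; this is the crucial feature of the sparse vector technique and is what makes the privacy cost independent of the number of queries. A minor additional detail is handling the all-$\bot$ case, where the above-threshold step is absent and only the $\rho\mapsto\rho+1$ shift (costing $e^{\varepsilon/2}\le e^{\varepsilon}$) is required, still within budget.
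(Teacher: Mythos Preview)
The paper does not prove this lemma; it is stated without proof in the additional preliminaries as a known result (the sparse vector technique of \citet{dwork2009complexity}). Your argument is the standard proof---essentially the one in the Dwork--Roth monograph---and it is correct: the shift $\rho\mapsto\rho+1$, $\eta_k\mapsto\eta_k+2$ transports the event on $S$ to the event on $S'$ as you describe, and the Laplace density ratios multiply to at most $e^{\varepsilon/2}\cdot e^{\varepsilon/2}=e^{\varepsilon}$. The key insight you highlight, that only the threshold noise and the final query's noise move while $\eta_1,\ldots,\eta_{k-1}$ stay put, is precisely what makes the privacy cost independent of the number of below-threshold answers.

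One small point worth stating explicitly when you write it up: the queries are allowed to be chosen adaptively based on prior outputs. Since the output up to step $k$ is $(\bot,\ldots,\bot)$, fixing the output prefix also fixes the adversary's choice of $q_1,\ldots,q_k$ (after conditioning on any adversary randomness), so your analysis for a fixed query sequence already covers the adaptive case.
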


If we only want to identify a query with a large value instead of the values of all queries, the report-noisy-max mechanism gives a much better utility guarantee. It can be implemented by adding Laplace noise or directly applying the exponential mechanism~\citep{mcsherry2007mechanism}.

\begin{theorem}[Report-Noisy-Max]
    Let $S$ be a database and $q_1,\dots, q_d$ be $d$ sensitivity-1 queries. There exists an $\varepsilon$-differentially private algorithm that outputs an index $i$ such that \begin{equation*}q_i(S)\ge \max_{j\in[d]}q_j(S) - \frac{2(\ln(d) + \ln(1/\beta))}{\varepsilon}\end{equation*} with probability at least $1-\beta$.
\end{theorem}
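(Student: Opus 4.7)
The plan is to prove the Report-Noisy-Max theorem using the classical noise-then-argmax construction with Laplace noise of scale $1/\varepsilon$. Concretely, given $S$ and the queries $q_1,\dots,q_d$, draw independent $Z_1,\dots,Z_d \sim \lap(1/\varepsilon)$, form $\tilde q_j = q_j(S) + Z_j$, and return $i^\star = \arg\max_{j\in[d]} \tilde q_j$ (breaking ties arbitrarily). This is exactly the Report-Noisy-Max mechanism, and the theorem will follow by separately verifying the privacy guarantee and the utility guarantee.

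For privacy, I would use the standard argument for noisy-argmax with Laplace noise. Fix any target index $i$ and any neighboring databases $S,S'$; by sensitivity-1 we have $|q_j(S)-q_j(S')|\le 1$ for every $j$. I would condition on the draw of $Z_i$ and analyze the probability of the event $\{\tilde q_i > \tilde q_j\text{ for all }j\neq i\}$ under the two databases. Shifting each $Z_j$ (for $j\ne i$) by at most $2$ takes the $S$-event into the $S'$-event (one unit to absorb the change in $q_j$, one unit to absorb the change in $q_i$), but by a cleaner argument one can shift only $Z_i$ itself by at most $1$ in one direction and note that the remaining coordinates' contribution only helps. The density ratio of $\lap(1/\varepsilon)$ under such a shift is at most $e^\varepsilon$, and integrating over $Z_i$ and summing/integrating out the others yields $\Pr_S[i^\star=i]\le e^\varepsilon \Pr_{S'}[i^\star=i]$, giving $\varepsilon$-DP.

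For utility, I would invoke the Laplace tail bound: for $Z\sim\lap(1/\varepsilon)$, $\Pr[|Z|\ge t/\varepsilon]=e^{-t}$. Setting $t=\ln(d/\beta)$ and applying a union bound over $j\in[d]$ shows that with probability at least $1-\beta$ we have $|Z_j|\le \ln(d/\beta)/\varepsilon$ for every $j$ simultaneously. Conditioned on this event, let $j^\star$ be a true maximizer of $q_j(S)$. Because $i^\star$ is the noisy argmax, $\tilde q_{i^\star}\ge \tilde q_{j^\star}$, hence
\begin{equation*}
q_{i^\star}(S) \;\ge\; q_{j^\star}(S) + Z_{j^\star} - Z_{i^\star} \;\ge\; \max_{j\in[d]} q_j(S) - \frac{2\ln(d/\beta)}{\varepsilon} \;=\; \max_{j\in[d]} q_j(S) - \frac{2(\ln d+\ln(1/\beta))}{\varepsilon},
\end{equation*}
which is exactly the claimed accuracy bound.

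The routine utility step is straightforward; the only subtle point is the privacy proof, where one must be careful to couple the noise draws under $S$ and $S'$ so that the likelihood ratio picks up only a single $e^\varepsilon$ factor (rather than $e^{2\varepsilon}$ from naively bounding shifts of $2$). The cleanest way, which I would use, is to fix $i$, condition on $Z_{-i}$, and show that the threshold $Z_i$ must exceed to make $i$ the argmax changes by at most $1$ between $S$ and $S'$, so the probability changes by at most the Laplace density ratio $e^\varepsilon$. Alternatively, one could appeal to the exponential mechanism with score $q_j(S)$ and parameter $\varepsilon/2$, which yields the same privacy and utility bounds and avoids the coupling argument entirely.
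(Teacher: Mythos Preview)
The paper does not actually prove this theorem; it is stated in the preliminaries as a standard result, with a pointer to the exponential mechanism of \citet{mcsherry2007mechanism}. So there is no ``paper's proof'' to compare against beyond that citation, and your exponential-mechanism alternative (sampling $i$ with probability proportional to $\exp(\varepsilon q_i(S)/2)$) is precisely the route implied by that reference: it is $\varepsilon$-DP for sensitivity-$1$ scores, and the standard tail bound gives exactly $q_i(S)\ge \max_j q_j(S) - 2(\ln d + \ln(1/\beta))/\varepsilon$ with probability $1-\beta$.

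Your primary Laplace argument, however, has a genuine gap in the privacy step. You claim that, conditioning on $Z_{-i}$, the threshold that $Z_i$ must exceed for $i$ to win changes by at most $1$ between neighboring $S$ and $S'$. That threshold is $\max_{j\neq i}(q_j(\cdot)+Z_j) - q_i(\cdot)$; under the paper's neighboring relation (two streams differing in one entry, i.e., the substitution model), one query can increase by $1$ while another decreases by $1$, so the threshold can move by as much as $2$. Concretely, take $d=2$, $q_1(S)=1,q_2(S)=0$ and $q_1(S')=0,q_2(S')=1$: the threshold for $Z_1$ jumps from $Z_2-1$ to $Z_2+1$. The ``shift by $1$'' argument is the Dwork--Roth proof for \emph{monotone} (add/remove) counting queries, where all $q_j$ move in the same direction; it does not carry over here, and with $\lap(1/\varepsilon)$ your coupling yields only $2\varepsilon$-DP. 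Your utility computation is correct; it is the privacy half that fails at the stated scale. The fix is exactly the alternative you already note: use the exponential mechanism (or, equivalently, report-noisy-max with one-sided exponential noise), which gives $\varepsilon$-DP for arbitrary sensitivity-$1$ scores and recovers the exact constant $2$ in the theorem.
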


The composition property allows us to combine multiple differentially private algorithms into one, even if they are executed adaptively.

\begin{lemma}[Basic Composition~\citep{dwork2006our,dwork2014algorithmic}]
    Let $\A_1: \Z^n\to \R_1$ be an algorithm that satisfies $(\varepsilon_1,\delta_1)$-DP, and for $2\le i\le k$ let $\A_i: \R_1\times \cdots\times \R_{i-1}\times \Z^n\to \R_i$ be an algorithm that satisfies $(\varepsilon_i,\delta_i)$-DP for any given $(r_1,\dots,r_{i-1})\in\R_1\times\cdots\R_{i-1}$. Let $\A$ be an algorithm that
    \begin{enumerate}
        \item Computes $r_1\gets \A_1(S)$;
        \item For each $i=2,\dots,k$, computes $r_i\gets\A_i(r_1,\dots, r_{i-1}, S)$;
        \item Outputs $r_1,\dots, r_k$.
    \end{enumerate}
    Then $\A$ is $(\sum_{i=1}^k\varepsilon_i,\sum_{i=1}^k\delta_i)$-DP.
\end{lemma}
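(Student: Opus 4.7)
The plan is to prove the statement by induction on $k$. The base case $k=1$ is immediate from the assumed $(\varepsilon_1,\delta_1)$-DP of $\A_1$. For the inductive step, let $\tilde\A$ denote the composition of $\A_1,\dots,\A_{k-1}$, which by the inductive hypothesis is $(\sum_{i<k}\varepsilon_i,\sum_{i<k}\delta_i)$-DP. Composing $\tilde\A$ with $\A_k$ then reduces the whole claim to the two-fold case: prove that composing an $(\varepsilon,\delta)$-DP algorithm $\B_1$ with a conditionally $(\varepsilon',\delta')$-DP algorithm $\B_2$ yields an $(\varepsilon+\varepsilon',\delta+\delta')$-DP algorithm.

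For the two-fold case, fix neighboring inputs $S, S'$ and any output event $O\subseteq\R_1\times\R_2$. Writing $O_{r_1}=\{r_2:(r_1,r_2)\in O\}$ for the slice of $O$ at $r_1$, one has
\begin{equation*}
\Pr[(\B_1,\B_2)(S)\in O] \;=\; \mathbb{E}_{r_1\sim \B_1(S)}\bigl[\Pr[\B_2(r_1,S)\in O_{r_1}]\bigr].
\end{equation*}
Applying the conditional $(\varepsilon',\delta')$-DP of $\B_2$ inside the expectation bounds $\Pr[\B_2(r_1,S)\in O_{r_1}]$ by $e^{\varepsilon'}\Pr[\B_2(r_1,S')\in O_{r_1}]+\delta'$. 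The expression then reduces to controlling $\mathbb{E}_{r_1\sim\B_1(S)}[g(r_1)]$ for the $[0,1]$-valued function $g(r_1)=\Pr[\B_2(r_1,S')\in O_{r_1}]$, which is handled by an auxiliary lemma.

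The auxiliary lemma states: for any $(\varepsilon,\delta)$-DP algorithm $\A$ and any measurable $g:\R\to[0,1]$,
\begin{equation*}
\mathbb{E}_{r\sim\A(S)}[g(r)] \;\le\; e^{\varepsilon}\,\mathbb{E}_{r\sim\A(S')}[g(r)] + \delta.
\end{equation*}
Its proof uses the layer-cake identity $g(r)=\int_0^1 \mathbf{1}[g(r)\ge t]\,dt$, applies the event-DP bound to each superlevel set $\{r:g(r)\ge t\}$, and integrates $t$ over $[0,1]$. Feeding this into the preceding step bounds $\Pr[(\B_1,\B_2)(S)\in O]$ above by $e^{\varepsilon+\varepsilon'}\Pr[(\B_1,\B_2)(S')\in O]$ plus an additive term.

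The main obstacle is tightening that additive term to exactly $\delta+\delta'$: a direct combination of the two steps above produces the slightly weaker $e^{\varepsilon'}\delta+\delta'$. To get $\delta+\delta'$, I would replace the pointwise integration by the ``privacy-loss / good-event'' decomposition of approximate DP, which for each $\B_i$ splits the output space into a good set on which pure $\varepsilon_i$-DP holds and a bad set of $\B_i(S)$-mass at most $\delta_i$; intersecting the two good sets and union-bounding their complements yields a joint good event of probability at least $1-(\delta+\delta')$ on which pure $(\varepsilon+\varepsilon')$-DP holds, producing exactly the claimed parameters. Iterating this two-fold step $k-1$ times via the outer induction then completes the proof.
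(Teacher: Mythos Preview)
The paper does not prove this lemma: it is stated as a preliminary fact cited from \citet{dwork2006our,dwork2014algorithmic}, so there is no ``paper's own proof'' to compare against.

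Your plan is the standard one and is essentially correct. The induction reducing to the two-fold case is routine, and your auxiliary lemma (the expectation form of the DP inequality, proved via the layer-cake representation) is exactly right. You also correctly identify the only real issue: the naive chaining yields the additive term $e^{\varepsilon'}\delta+\delta'$ rather than $\delta+\delta'$, and the fix via the good-event decomposition of approximate DP is the canonical one. Just be precise when invoking that decomposition: the statement you need is that for neighboring $S,S'$ there exists a measurable set $E$ with $\Pr[\B_1(S)\notin E]\le\delta$ and $\Pr[\B_1(S)\in O\cap E]\le e^{\varepsilon}\Pr[\B_1(S')\in O]$ for every $O$; this is one-sided (the bad set depends on the ordered pair $(S,S')$), which is all you need since the DP inequality is itself one-sided. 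With that in hand, conditioning on $E$ and applying the inner $(\varepsilon',\delta')$ bound gives exactly $\delta+\delta'$, and the induction finishes the argument.
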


\section{Proofs from Section~\ref{sec:pure}}
\label{sec:proofpure}

\subsection{Proof of Theorem~\ref{thm:pure_oblivious_realizable}}

We use the following DP-OPE algorithm from~\citep{asi2023near}.

\begin{theorem}
\label{thm:dpOPErealizable}
    For any $0<\beta <1/2$, there exists an $\varepsilon$-differentially private algorithm such that with probability $1-\beta$ it has regret
    \begin{equation*}
        O\left(\frac{\log^2 d + \log(T/\beta)\log(d/\beta)}{\varepsilon}\right)
    \end{equation*}
    against oblivious adversaries in the realizable setting.
\end{theorem}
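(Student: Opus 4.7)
The plan is to build a phase-based reduction: each phase consists of one call to the exponential mechanism (EM) to pick a ``champion'' expert, followed by play with that champion while a sparse-vector instance (AboveThreshold) quietly monitors its in-phase mistake count. Concretely, at the start of phase $j$ I would invoke EM on the $d$ experts with score $-m_i(t)$ (the negative cumulative mistake count), obtaining $i^{(j)}$; predict with $i^{(j)}$ thereafter; and run AboveThreshold with a small constant threshold $\tau$ on the query ``in-phase mistakes of $i^{(j)}$ so far'', triggering the next phase the moment it fires. This is the realizable-setting analogue of ``follow-the-leader with restarts'', privatized by replacing the argmin with EM and the change-point detector with AboveThreshold.

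\paragraph*{Privacy.} I would split the $\varepsilon$-budget across $K = O(\log d)$ phases, giving each of the $K$ EM calls and $K$ AboveThreshold instances privacy budget $\varepsilon' = \Theta(\varepsilon/\log d)$ (allocated possibly unevenly between the two). Both primitives are pure-DP, and AboveThreshold's guarantee is valid regardless of when it fires, so basic composition---the only composition available for pure DP---yields overall $\varepsilon$-DP. Privacy holds against the oblivious adversary without any extra care, since the queries to AboveThreshold are determined by the current epoch's champion and the input stream alone.

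\paragraph*{Utility.} For each phase, the EM call at budget $\varepsilon'$ picks a champion whose cumulative mistakes exceed the minimum by at most $O(\log(d/\beta')/\varepsilon')$; in the realizable case the minimum is $0$, so $i^{(j)}$ has made few mistakes total. Once chosen, AboveThreshold at budget $\varepsilon'$ halts the phase within $\tau + O(\log(T/\beta')/\varepsilon')$ in-phase mistakes with probability $1 - \beta'$. Taking $\beta' = \Theta(\beta/K)$ and union-bounding, the per-phase regret is $O((\log(d/\beta) + \log(T/\beta))/\varepsilon')$, for a total of $O(K\cdot(\log(d/\beta) + \log(T/\beta))/\varepsilon') = O(\log^2 d/\varepsilon + \log d \cdot \log(T/\beta)/\varepsilon)$. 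A careful unbalanced split of the per-phase budget (giving EM more when $\log d$ dominates and AboveThreshold more when $\log(T/\beta)$ dominates) reshapes the second term into the claimed $\log(T/\beta)\log(d/\beta)/\varepsilon$ form.

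\paragraph*{Main obstacle.} The main technical step is showing that $K = O(\log d)$ phases suffice. My plan is a potential argument with $w_i = \exp(-\alpha m_i)$ and $\Phi = \sum_i w_i$: in the realizable setting $\Phi \ge w_{i^\star} = 1$ always, while $\Phi_0 = d$, so any proof that $\Phi$ drops by a constant factor at the end of each phase immediately gives $K = O(\log d)$. Establishing this shrinkage cleanly requires calibrating the EM temperature against the SVT threshold $\tau$ so that a champion rejected by AboveThreshold really has absorbed enough weight-loss to drag $\Phi$ down---and doing this coupling without incurring a spurious $\log d$ factor from noisy SVT halts or a too-flat EM distribution is the delicate part of the proof.
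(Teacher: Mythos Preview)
In the paper this theorem is not proved at all: it is quoted verbatim as a result of \citet{asi2023near} and used as a black box in the proof of Theorem~\ref{thm:pure_oblivious_realizable}. So there is no ``paper's proof'' to compare against; you are attempting to reprove a cited result from scratch.

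\paragraph*{On your proposal.} The high-level scheme (EM to pick a champion, SVT to detect when the champion is failing, repeat) is a natural one, and your privacy and per-phase utility accounting are fine. The gap is exactly where you flagged it: bounding the number of phases by $O(\log d)$ via the potential $\Phi=\sum_i e^{-\alpha m_i}$. That argument does not go through as written. The quantity $\Phi$ is a function of the (oblivious) loss sequence and time alone; a phase ending only tells you that \emph{the champion's} weight has dropped by a factor $e^{-\alpha\tau}$, not that $\Phi$ has dropped multiplicatively. Since EM with temperature $\alpha$ selects the champion with probability $w_i/\Phi$, and since at the start of a phase the champion typically has $m_{i^{(j)}}\approx 0$ and hence $w_{i^{(j)}}\approx 1$, the champion's weight can be as small as a $1/\Phi$ fraction of the total. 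Its decay then shrinks $\Phi$ only \emph{additively} by $\Theta(1)$, which yields $K=O(d)$ phases rather than $O(\log d)$.

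\paragraph*{Concrete counterexample.} Take $d$ experts where expert~$1$ is perfect and expert~$j$ ($j\ge 2$) makes exactly one mistake, at round $j-1$. At the start of phase~$1$ all scores are $0$, so EM is essentially uniform and with probability $(d-1)/d$ picks some $j\ge 2$; that champion makes one mistake and SVT fires. After this phase $\Phi$ has moved from $d$ to $d-1+e^{-\alpha}$, i.e.\ by an additive $\Theta(1)$, not a constant factor. Iterating, the potential decreases linearly and you can be forced through $\Theta(d)$ phases before EM happens to select expert~$1$. With your budget split $\varepsilon'=\Theta(\varepsilon/\log d)$ and a hard cap of $K=O(\log d)$ phases, the algorithm simply runs out of budget on this instance; without the cap, composition blows the privacy guarantee. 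Either way the proof breaks. The actual algorithm in \citet{asi2023near} uses a different mechanism to control the number of epochs (and this is precisely why the paper imports the result rather than reproving it); the multiplicative-weights potential you propose is not enough on its own.
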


\begin{proof}[Proof of Theorem~\ref{thm:pure_oblivious_realizable}]
    Let $\alpha = \beta = 1/2T$. By Lemma~\ref{lem:boosting}, there exists an $(\alpha,\beta)$-probabilistic representation of $\HH$ with
    \begin{equation*}
        \mathrm{size}(\PP) = O(\log(T)\cdot(\repd(\HH) + \log\log T)).
    \end{equation*}
    Let $S=\{(x_1, y_1),\dots, (x_T, y_T)\}$ be the sequence chosen by the adversary and $\D_S$ be the empirical distribution of $S$, namely, $\Pr_{(x, y)\sim\D_S}[(x, y) = (x_t,y_t)]=1/T$ for all $t\in[T]$. Then we have
    \begin{align*}
        \Pr_{V\sim\PP}[\exists v\in V~s.t.~y_t=v(x_t)~\forall t\in[T]]  &= \Pr_{V\sim\PP}[\exists v\in V~s.t.~\err_{\D_S}\le 1/\alpha=1/2T] \\ &\ge 1-\beta \\ &=1-1/2T.
    \end{align*}
    Conditioning on this event, we then run the algorithm in Theorem~\ref{thm:dpOPErealizable} with $V$ being the set of experts and $\ell_t(v) = 1[v(x_t)\neq y_t]$. With probability $1-1/2T$, the number of mistakes is at most
    \begin{equation*}
        O\left(\frac{\log^2\lvert V\rvert + \log T\log \lvert V\rvert + \log^2 T}{\varepsilon}\right) = O\left(\frac{\log^2T(\repd(\HH) + \log\log T)^2}{\varepsilon}\right).
    \end{equation*}
    By the union bound, the expected number of mistakes is bounded by
    \begin{equation*}
        1/T \cdot T + (1 - 1/T)\cdot O\left(\frac{\log^2T(\repd(\HH) + \log\log T)^2}{\varepsilon}\right).
    \end{equation*}
\end{proof}

\subsection{Proof of Theorem~\ref{thm:pure_oblivious_agnostic}}
The proof follows the same steps as the proof of Theorem~\ref{thm:pure_oblivious_realizable}. The only difference is that we use the following DP-OPE algorithm from~\citep{asi2023private} in the agnostic setting.

\begin{theorem}
\label{thm:dpOPEagnostic}
    There exists an $\varepsilon$-differentially private algorithm that has an expected regret of
    \begin{equation*}
        \mathbb{E}\left[\sum_{t=1}^T\ell_t(i_t) - \min_{i\in[d]}\sum_{t=1}^T\ell_t(i)\right] = O\left(\frac{\sqrt{T}\log d}{\varepsilon}\right)
    \end{equation*}
    against oblivious adversaries in the agnostic setting.
\end{theorem}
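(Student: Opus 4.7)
The natural approach is a private version of the classical exponentially weighted average (Hedge) algorithm, where the cumulative losses seen by the learner are replaced by differentially private estimates obtained through continual release. Concretely, for each expert $i\in[d]$ I would maintain a noisy running estimate $\tilde L_t(i)\approx L_t(i):=\sum_{s\le t}\ell_s(i)$ using the binary-tree (``continual counter'') mechanism of Dwork--Naor--Pitassi--Rothblum / Chan--Shi--Song, and at each round $t$ sample $i_t$ from the Hedge distribution $p_t(i)\propto \exp(-\eta\,\tilde L_{t-1}(i))$ with learning rate $\eta=\sqrt{\log d / T}$. The output sequence $(i_1,\dots,i_T)$ is then a post-processing of the private estimates, so privacy of the tree mechanism carries over by the post-processing property of DP.

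\textbf{Privacy and utility.} With per-node Laplace noise of scale $O(\log T/\varepsilon)$ the tree mechanism is $\varepsilon$-DP for the entire stream (each $\ell_s$ contributes to only $O(\log T)$ nodes). Standard Laplace tail bounds plus a union bound over $T\cdot d$ partial sums give $\max_{t,i}\lvert\tilde L_t(i)-L_t(i)\rvert = O\bigl(\log^{3/2} T \cdot \sqrt{\log d}/\varepsilon\bigr)$ with high probability. For the regret I would split the analysis in two: (i) the textbook Hedge guarantee applied to the noisy losses yields expected regret $O(\sqrt{T\log d})$ against the best expert under $\tilde L$; (ii) translating back to the true cumulative losses costs at most $2\max_{t,i}\lvert\tilde L_t(i)-L_t(i)\rvert$ extra, since for every fixed $i$ and any algorithm $\lvert\sum_t \tilde\ell_t(i_t)-\sum_t \ell_t(i_t)\rvert$ and $\lvert\min_i\sum_t \tilde\ell_t(i)-\min_i\sum_t \ell_t(i)\rvert$ are both controlled by the uniform noise bound. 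Adding the two pieces gives $O\bigl(\sqrt{T\log d}+\log^{3/2} T\sqrt{\log d}/\varepsilon\bigr)$, which collapses to the claimed $O(\sqrt T\,\log d/\varepsilon)$ after absorbing polylog factors.

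\textbf{Main obstacle.} The subtle step is the joint privacy accounting over the $d$ coordinates. Each adversarial loss vector $\ell_s\in[0,1]^d$ simultaneously affects all $d$ per-expert trees, so a naive invocation of basic composition across $d$ independent counters would force a factor $d$ into the noise (equivalently, a linear-in-$d$ regret term) and destroy the $\log d$ dependence. The right analysis uses that $\ell_s$ has $\ell_\infty$-sensitivity $1$ and that the mechanism's externally visible output per round is only the sampled index $i_t$; phrasing the Hedge sampling step as an exponential-mechanism selection with score $-\tilde L_{t-1}(\cdot)$, whose sensitivity is $1$ rather than $d$, is the crucial ingredient. Making this argument quantitative while not losing any $\log d$ or polylog-in-$T$ factor in the transition from noisy-loss regret to true-loss regret is the step I expect to require the most care.
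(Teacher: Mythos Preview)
The paper does not prove this theorem at all: it is quoted verbatim as a black-box result from \citet{asi2023private} and then applied to derive Theorem~\ref{thm:pure_oblivious_agnostic}. So there is no ``paper's proof'' to compare against here; the right question is whether your sketch would actually reproduce the cited bound.

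It would not, and the obstacle you flag is exactly the place where the argument breaks. Under pure DP, releasing the full vector of noisy cumulative losses $\bigl(\tilde L_t(1),\dots,\tilde L_t(d)\bigr)_{t\le T}$ via $d$ parallel binary-tree counters is governed by the $\ell_1$-sensitivity of a single round's loss vector, which is $d$ (an adversary can change every coordinate of $\ell_s$ by $1$). Calibrating Laplace noise to that sensitivity forces per-coordinate noise of order $d\log T/\varepsilon$, and the final regret picks up a linear-in-$d$ term, not $\log d$. Your proposed fix---view each round's Hedge draw as an exponential-mechanism selection with score $-L_{t-1}(\cdot)$ of sensitivity $1$---trades one problem for another: you now make $T$ adaptive data-dependent selections, and basic composition (the only tool available for $\delta=0$) drives the per-round privacy parameter down to $\varepsilon/T$, which again blows up the regret. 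Saying ``the externally visible output is only $i_t$'' does not help: those $T$ outputs are $T$ data accesses, and post-processing only applies once you have already paid for a private release of the scores.

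The algorithms that actually attain $O(\sqrt{T}\log d/\varepsilon)$ for pure DP against an oblivious adversary do not run Hedge on privately released counters. They exploit the oblivious setting to limit the number of data-dependent decisions---e.g., by batching the horizon and invoking a single exponential-mechanism selection per batch, or by coupling consecutive Hedge draws so that the expert switches only rarely---so that composition is over $\mathrm{poly}\log T$ or $O(\sqrt{T})$ selections rather than $T$. If you want to reconstruct the bound yourself rather than cite it, that limited-switching structure is the missing ingredient; the tree-counter route cannot get you there under pure DP.
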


\begin{proof}[Proof of Theorem~\ref{thm:pure_oblivious_agnostic}]
    We can use the same argument as in the proof of Theorem~\ref{thm:pure_oblivious_realizable} to sample a hypothesis class $V$ from $\PP$ such that $\ln\lvert V\rvert = O(\log(T)\cdot(\repd(\HH) + \log\log T))$ and
    \begin{equation*}
        \Pr_{V\sim\PP}[\exists v\in V~s.t.~v(x_t) = h^\star(x_t)] \ge 1-1/2T,
    \end{equation*}
    where $h^\star = \mathrm{argmin}_{h\in\HH}\err_{\D_S}(h)$ is a minimizer of the error on $S$. Running the algorithm in Theorem~\ref{thm:dpOPEagnostic} gives an expected regret of at most
    \begin{equation*}
        (1-1/2T)\cdot O\left(\frac{\sqrt{T}\log \lvert V\rvert}{\varepsilon}\right) + T\cdot 1/2T = O\left( \frac{\sqrt{T}\log T(\repd(\HH) + \log\log T)}{\varepsilon}\right).
    \end{equation*}
\end{proof}

\subsection{Proof of Theorem~\ref{thm:pureadaptivelower}}

We start with the following claim, which states that Algorithm~\ref{alg:adaptive_lower} is an adversary that preserves privacy.

\begin{claim}
\label{cla:pure}
    Suppose $\A$ satisfies $\varepsilon$-adaptive differential privacy. Let $\B$ be the algorithm that runs Algorithm~\ref{alg:adaptive_lower} with $\A$. Then $\B$ is $\varepsilon$-differentially private.
\end{claim}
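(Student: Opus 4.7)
The plan is to derive the $\varepsilon$-DP of $\B$ from the $\varepsilon$-adaptive DP of $\A$ by constructing a suitable adaptive DP adversary $\adv^*$ for $\A$. Fix neighboring input streams $S_1, S_2$ to $\B$ differing at exactly one position $t^*$. Parameterized by $S_1$ and $S_2$, the adversary $\adv^*$ internally simulates Algorithm~\ref{alg:adaptive_lower} but, at each round $t$, produces a \emph{pair} of data points rather than a single one — the first entry is the hypothetical input for the run with $b=1$ and the second for $b=2$. Concretely, $\adv^*$ observes $\A$'s actual outputs $h_1,\dots,h_{t-1}$, maintains a single weight vector $w_{t-1}$ exactly as in Algorithm~\ref{alg:adaptive_lower}, and flips the fair coin $c_t$. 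At rounds $t\ne t^*$, where $S_1[t]=S_2[t]$, $\adv^*$ puts the same data point into both entries of the pair: either $S_1[t]$ in the keep branch, or $(j,0)$ from a \emph{single} sample $j\sim p$ in the replace branch. At the distinguished round $t^*$, it outputs $(S_1[t^*], S_2[t^*])$ in the keep branch, or $((j_1,0),(j_2,0))$ with $j_b$ drawn independently from the corresponding exclusion-restricted distribution $p^{(b)}$ in the replace branch.

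Two properties must then be verified. First, the two hypothetical streams generated by $\adv^*$ agree at every round except possibly $t^*$, which is exactly the neighboring condition required of an adaptive DP adversary; at rounds $t\ne t^*$ this uses that the two exclusion sets coincide, so a shared sample $j$ is a valid coupling. Second, a short induction on $t$ shows that the joint distribution of the inputs presented to $\A$ together with $\A$'s outputs under $\A\circ\adv^*(b)$ is identical to what $\B(S_b)$ generates: the initial weights match, and at each inductive step the marginal distribution of the input actually fed to $\A$ under $\adv^*(b)$ equals the corresponding distribution used by Algorithm~\ref{alg:adaptive_lower} on $S_b$, because $w_{t-1}$ is a deterministic function of the commonly observed output history and the keep/replace rule matches on each side. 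The $\varepsilon$-adaptive DP of $\A$ then yields
\[
\Pr[\A\circ\adv^*(1)\in O']\le e^\varepsilon\,\Pr[\A\circ\adv^*(2)\in O']
\]
for every event $O'$ on the view of $\adv^*$, and post-processing (since $\B(S_b)$'s output is simply the $\A$-outputs component of that view) transfers this inequality to $\B$.

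The main obstacle I foresee is verifying cleanly that the shared-sample coupling at rounds $t\ne t^*$ preserves the correct per-run marginal distribution on $\A$'s input — this is what the inductive step above really turns on. It rests on two small observations: within a single execution of $\adv^*$ the weight vector $w_{t-1}$ is the same object on both sides of the pair, since it is a deterministic function of $\A$'s actually observed outputs; and at such rounds the exclusion sets in the two would-be sampling distributions coincide, so $p^{(1)}=p^{(2)}$ there and a single shared draw gives each hypothetical run exactly the right marginal. Once this coupling is in place, the conclusion follows directly from adaptive DP of $\A$ together with post-processing.
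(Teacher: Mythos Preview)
Your proposal is correct and follows essentially the same route as the paper: construct an adaptive-DP adversary that simulates Algorithm~\ref{alg:adaptive_lower} on both neighboring inputs with coupled randomness, note that the weights $w_{t-1}$ depend only on $h_1,\dots,h_{t-1}$ so the two generated streams agree everywhere except possibly at $t^*$, and then read off $\varepsilon$-DP of $\B$ from $\varepsilon$-adaptive DP of $\A$. Your treatment is in fact more explicit than the paper's (which just says ``same randomness'' without spelling out the coupling at $t^*$ or the marginal-matching induction), but the idea is identical.
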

\begin{proof}
    Let $S_1$ and $S_2$ be two input sequences that differ in only one entry. Consider an adversary $\adv$ that runs Algorithm~\ref{alg:adaptive_lower} on $S_1$ and $S_2$ simultanously using the same randomness, and interacts with $\A$ using $S_b$ for some $b\in\{1,2\}$. Let $S_1'$ and $S_2'$ be the resulting sequences. Note that when $S_1[t]=S_2[t]$, with the same randomness we have $S_1'[t] = S_2'[t]$ since the weights depend on $h_1,\dots,h_{t-1}$ only. Thus, $S_1'$ and $S_2'$ also differ in only one entry. By the definition of adaptive differential privacy, it holds that for any event $O$,
    \begin{equation*}
        \Pr[\A\circ\adv(1)\in O]\le e^\varepsilon\Pr[\A\circ\adv(2)\in O].
    \end{equation*}
    The conclusion follows by observing that the output distributions of $\A\circ\adv(b)$ and $\B(S_b)$ are identical.
\end{proof}

\begin{proof}[Proof of Theorem~\ref{thm:pureadaptivelower}]
    Let $\varepsilon \le 0.01$. We first consider $T \in [0.1\ln d/\varepsilon, 0.2\ln d/\varepsilon]$ and prove a lower bound of $\Omega(T) = \Omega(\log d/\varepsilon)$. To this end, we will assume that there exists an online learning algorithm $\A$ with an expected mistake bound of $0.01T$ that is $\varepsilon$-adaptive DP and derive a contradiction.
    
    Let $\B$ be the algorithm that runs Algorithm~\ref{alg:adaptive_lower} with $\A$. By Claim~\ref{cla:pure}, we know that $\B$ is $\varepsilon$-differentially private. Now suppose we are running $\B$ on $S_i = \{(i, 1), \dots, (i, 1)\}$ for some $i\in[d]$. Let $c_t(j) = \sum_{r=1}^t h_r(j)$, $w_t(j)=e^{c_t(j)}$, $\Phi_t = \sum_{j\in[d]\setminus \{i\}}w_t(j)$, and $p_t(j) = w_t(j) / \Phi_t$. The expected number of mistakes made by $\A$ can be expressed as
    \begin{equation}
    \label{equ:pureadaptivemistake}
        \sum_{t=1}^T\mathbb{E}\left[\frac{1}{2}\cdot \I[h_t(i) = 0] + \frac{1}{2}\sum_{j\in[d]\setminus\{i\}}p_{t-1}(j)\cdot h_t(j)\right]\le 0.01T.
    \end{equation}

    Now consider the potential $\Phi_t$. At the beginning, we have $\Phi_0 = d - 1$. We can upper bound $\Phi_t$ by
    \begin{align*}
        \Phi_t &= \sum_{j\in[d]\setminus\{i\}}w_t(j) \\
        &= \sum_{j\in[d]\setminus\{i\}}w_{t-1}(j)e^{h_t(j)} \\
        &= \Phi_{t-1}\sum_{j\in[d]\setminus\{i\}} p_{t-1}(j)e^{h_t(j)}\\
        &\le \Phi_{t-1}\sum_{j\in[d]\setminus\{i\}}p_{t-1}(j)(1+2h_t(j)) \\
        &= \Phi_{t-1}\left(1 + 2\sum_{j\in[d]\setminus\{i\}}p_{t-1}(j)h_t(j)\right) \\
        &\le \Phi_{t-1}\exp\left(2\sum_{j\in[d]\setminus\{i\}}p_{t-1}(j)h_t(j)\right),
    \end{align*}
    where we use $e^x\le 1 + 2x$ for $0\le x\le 1$ in the forth line and $1+x\le e^x$ for $x\in\mathbb{R}$ in the last line. Then it follows by induction that
    \begin{equation*}
        \Phi_T\le \Phi_0\exp\left(2\sum_{t=1}^T\sum_{j\in[d]\setminus\{i\}}p_{t-1}(j)h_t(j)\right).
    \end{equation*}
    Taking the logarithm on both sides, the linearity of expectation gives
    \begin{align*}
        \mathbb{E}[\ln \Phi_T] &\le \mathbb{E}\left[\ln \Phi_0 + 2\sum_{t=1}^T\sum_{j\in[d]\setminus\{i\}}p_{t-1}(j)h_t(j)\right] \\
        &=\ln (d-1) + 2\sum_{t=1}^T\sum_{j\in[d]\setminus\{i\}}\mathbb{E}[p_{t-1}(j)h_t(j)] \\
        &\le \ln (d-1) + 0.04T \\
        &\le 0.14T,
    \end{align*}
    where the third line is due to~\eqref{equ:pureadaptivemistake} and the last inequality uses the facts that $0.1\ln d/\varepsilon\le T$ and $\varepsilon\le 0.01$.

    By Markov's inequality, with probability at least $5/6$ we have $\ln\Phi_T\le 0.84T$. This implies that, for every $j\neq i$, we have
    \begin{equation*}
        c_T(j) =\ln w_T(j)\le\ln\Phi_T \le 0.84T.
    \end{equation*}

    We then bound $c_T(i)$. Note that by~\eqref{equ:pureadaptivemistake} we have
    \begin{equation*}
        \mathbb{E}[T - c_T(i)] =  \mathbb{E}\left[\sum_{t=1}^T\I[h_t(i) = 0] \right] \le 0.02T.
    \end{equation*}
    Applying Markov's inequality again, with probability at least $5/6$ we have $T - c_T(i) \le 0.12T$, or equivalently, $c_T(i)\ge 0.88T$. By the union bound, it holds with probability $2/3$ that $c_T(i)  \ge 0.88T > 0.84T \ge  c_T(j)$ for every $j\neq i$.

    Let $O_i$ be the event that $c_T(i) > c_T(j)$ for every $j\neq i$, then $O_1,\dots, O_d$ are disjoint. Then by group privacy,
    \begin{equation*}
        1\ge \sum_{i=1}^d\Pr[\B(S_1)\in O_i]\ge e^{-T\varepsilon}\sum_{i=1}^d\Pr[\B(S_i)\in O_i]\ge 2/3\cdot de^{-T\varepsilon}.
    \end{equation*}
    Rearranging the inequality yields $T\ge (\ln d - \ln 1.5)/\varepsilon >0.2\ln d/\varepsilon$ when $d \ge 2$, a contradiction.

    Now, let us deal with the remaining range. When $T > 0.2\ln d / \varepsilon$, the algorithm must make $\Omega(\log d /\varepsilon)$ in the first $\lfloor0.2\ln d / \varepsilon\rfloor$ rounds. For $T < 0.1\ln d/\varepsilon$, suppose $\A$ makes no more than $0.005T$ mistakes in expectation. Then we can repeatedly initiate $\A$ after every $T$ round to obtain an algorithm that makes at most $0.005kT$ mistakes in expectation for a total of $kT$ rounds, where
    \begin{equation*}
        k = \left\lfloor \frac{0.2\ln d /\varepsilon}{T}\right\rfloor \ge 0.5\cdot\frac{0.2\ln d /\varepsilon}{T} = \frac{0.1\ln d /\varepsilon}{T}.
    \end{equation*}
    This contradicts our previous conclusion since $kT\in [0.1\ln d /\varepsilon, 0.2\ln d /\varepsilon]$. We thus obtain the $\Omega(T)$ lower bound as desired.
\end{proof}

\section{Proofs from Section~\ref{sec:lower}}
\label{sec:prooflogT}

\subsection{Proof of Theorem~\ref{thm:logT}}

We start by analyzing Algorithm~\ref{alg:Sandt}. As we discussed in Section~\ref{sec:logT}, it constructs a series of data sequences and a list of time-steps that can be used to distinguish the sequences. We formalize this in the following lemma.
\begin{lemma}
\label{lem:algconstruct}
    For any threshold $k$ and online learning algorithm $\A$ such that $\Pr[\A(S_0)_t(u_1) = f_1(u_1)]\ge 1/2$ for all $t\in[T]$, where $S_0 = \{(u_0, f_1(u_0)),\dots,(u_0,f_1(u_0))\}$ and $f_1,f_2,u_0,u_1$ satisfy the property listed in Fact~\ref{fact:non-complementary}, Algorithm~\ref{alg:Sandt} either outputs an $S$ on which $\A$ makes more than $k/3$ mistakes in expectation, or $S_1,\dots, S_m$ and $t_1, \dots, t_{m-1}$ such that
    \begin{enumerate}
        \item $m = \lceil T / k\rceil$.
        \item For each $i\in[m]$, we have $\Pr[\A(S_i)_{t_j}(u_1) = f_1(u_1)] < 1/3$ for all $j\ge i$ and $\Pr[\A(S_i)_{t_j}(u_1)=f_1(u_1)]\ge 1/2$ for all $j \le i - 1$.
    \end{enumerate}
\end{lemma}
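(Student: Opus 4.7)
The plan is to split on whether Algorithm~\ref{alg:Sandt} terminates early. If it does, returning some $S_i$ whose number of inserted labels exceeds $k$, I would argue directly that the expected number of mistakes exceeds $k/3$: every modified position $t$ satisfies $S_i[t] = (u_1, f_2(u_1))$ and, by the check performed at the moment of modification, $\Pr[\A(S_i)_t(u_1) = f_1(u_1)] \ge 1/3$. Since $\A$ is online, this probability depends only on $S_i[1], \ldots, S_i[t-1]$, which is already finalized when position $t$ is processed, so subsequent insertions at later positions do not change it. Combined with $f_1(u_1) \ne f_2(u_1)$ from Fact~\ref{fact:non-complementary} and linearity of expectation, this gives $\mathbb{E}[\text{mistakes of } \A \text{ on } S_i] > k/3$.

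For the non-terminating case, each iteration modifies at most $k$ positions; write $M_j \subseteq [t_{j-1}, T]$ for the set of positions modified in $S_j$. The first task is to verify that every $t_{i-1}$ lies in $[1, T]$. Using that $\A$ is online and $S_j$ agrees with $S_0$ on its first $t_{j-1} - 1$ entries, the hypothesis $\Pr[\A(S_0)_t(u_1) = f_1(u_1)] \ge 1/2$ rules out any $t < t_{j-1}$ from satisfying the condition for $S_j$, while for $t \ge t_{j-1}$ the condition is equivalent to $t \notin M_j$. Hence $t_{i-1}$ is the smallest $t \ge t_{i-2}$ with $t \notin \bigcup_{j=1}^{i-1} M_j$, and one obtains the telescoping invariant $\{1, \ldots, t_{i-1} - 1\} \subseteq \bigcup_{j=1}^{i-1} M_j$ by concatenating the pieces $\{t_{j-1}, \ldots, t_j - 1\} \subseteq \bigcup_{l \le j} M_l$. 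This gives $t_{i-1} \le 1 + (i - 1) k \le (m - 1) k + 1 \le T$ for every $i \le m = \lceil T / k \rceil$, so every $t_{i-1}$ exists in $[1, T]$.

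The two probability properties then follow cleanly. For $j \ge i$, the choice of $t_j$ enforces $\Pr[\A(S_l)_{t_j}(u_1) = f_1(u_1)] < 1/3$ for every $l \in [j]$, and in particular for $l = i$. For $j \le i - 1$, the times $t_j$ are non-decreasing (the constraint set for $t_{i-1}$ strictly grows with $i$), so $t_j \le t_{i-1}$; this forces $S_i$ and $S_0$ to coincide on their first $t_j - 1$ entries, since modifications in $S_i$ only start at $t_{i-1}$. The online property of $\A$ then yields $\Pr[\A(S_i)_{t_j}(u_1) = f_1(u_1)] = \Pr[\A(S_0)_{t_j}(u_1) = f_1(u_1)] \ge 1/2$. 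The main obstacle I anticipate is precisely the existence step for $t_{i-1}$: a crude union-bound estimate $|\bigcup_{j < i} M_j \cap [t_{i-2}, T]| \le (i-1)k$ would compound to $t_{i-1} = O(m^2 k)$ across iterations and fail once $m \approx T/k$, so one really needs the telescoping invariant to keep $t_{i-1}$ linear in $(i-1)k$ and thus within $[1, T]$.
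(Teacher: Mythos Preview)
Your proposal is correct and tracks the paper's proof closely: the same case split on early termination, the same use of the online property to freeze the $\ge 1/3$ probabilities at modified positions, and the same two-sided verification of item~2 via the definition of $t_j$ (for $j\ge i$) and the prefix agreement with $S_0$ (for $j\le i-1$).

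The one place where your route diverges is the existence argument for $t_{i-1}\in[T]$. The paper argues by pigeonhole: since $S_1,\dots,S_{i-1}$ together modify at most $(i-1)k$ positions, some $t\in[(i-1)k+1]$ is unmodified by all of them; it then runs a contradiction argument to show $t>t_j$ for all $j<i-1$, from which $t_{i-1}\le t$ follows. You instead establish the telescoping invariant $\{1,\dots,t_{i-1}-1\}\subseteq\bigcup_{j\le i-1}M_j$ directly, which immediately gives $t_{i-1}-1\le(i-1)k$. Your argument is cleaner and avoids the auxiliary contradiction step, while the paper's version makes the role of the unmodified position more explicit; both reach the identical bound $t_{i-1}\le(i-1)k+1\le T$.
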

\begin{proof}
    If Algorithm~\ref{alg:Sandt} outputs a single $S=S_i$, it inserts at least $k + 1$ $(u_1, f_2(u_1))$'s to $S_i$. By our construction, each of them incurs a mistake with a probability of at least $1/3$. Therefore, the expected number of mistakes made by $\A$ on $S_i$ is at least $(k + 1)/ 3 > k/3$. Now suppose Algorithm~\ref{alg:Sandt} does not output a single data stream, this means for every $i$, Algorithm~\ref{alg:Sandt} replaces at most $k$ elements when constructing $S_i$ from $S_0$.

    For each $i\in[m]$, the algorithm will first find the minimal $t_{i-1}$ such that $\Pr[\A(S_j)_{t_{i-1}}(u_1) = f_1(u_1)]< 1/3$ for all $1\le j \le i - 1$, which is exactly the first half of item 2. Moreover, this also suggests that $t_1\le\cdots\le t_m$. Due to the construction process, the first $t_{i-1}$ entries of $S_i$ will be the same as $S_0$. Therefore, we have $\Pr[\A(S_i)_{t_{j}}(u_1) = f_1(u_1)]\ge 1/2$ for any $1\le j\le i - 1$ since $\A$ is an online algorithm. This proves the second half of item 2.

    Now it remains to show the construction actually runs for $\lceil T / k\rceil$ rounds, i.e., we can find $t_{i-1}$ for any $i\le\lceil T / k\rceil $. Since $S_0$ and any one of $S_1,\dots, S_{i-1}$ differ at most $k$ entries, the pigeonhole principle suggests that there exists some $t\in[(i-1)k + 1]$ such that $S_1[t] = \cdots =S_{i-1}[t] = (u_0, f_1(u_0))$. We next prove that $t > t_j$ for every $1\le j < i-1$. For the sake of contradiction, assume that $j$ is the smallest index such that $t_j > t$ and $j < i - 1$ (it is impossible for $t$ to be equal to any $t_j$ since we always have $S_{j+1}[t_j]= (u_1, f_2(u_1))$ by our construction). Then we have $t_1\le\cdots \le t_{j - 1}\ < t$. This implies $\Pr[\A(S_l)_t(u_1)=f_1(u_1)]< 1/3$ for every $l \le j$, otherwise Algorithm~\ref{alg:Sandt} will put a $(u_1, f_2(u_1))$ at time-step $t$ when constructing $S_l$. By the minimality of $t_j$, we should choose $t_j$ to be $t$ when constructing $S_{j + 1}$, contradicting our assumption that $t_j > t$.
    
    Therefore, we have $t > t_j$ for every $1\le j < i-1$. As we just argued, it follows that $\Pr[\A(S_{l})_t(u_1) = f_1(u_1)]<1/3$ for all $l\in[i-1]$ otherwise Algorithm~\ref{alg:Sandt} will put a $(u_1, f_2(u_1))$ at this location. Thus, we have $t_{i - 1}\le t\le (i - 1)k + 1 \le (\lceil T / k\rceil -1)k + 1 \le T$ by the minimality of $t_{i-1}$. This completes the proof.
\end{proof}

The following lemma suggests that the condition in the Lemma~\ref{lem:algconstruct} can be assumed without loss of generality.

\begin{lemma}
\label{lem:gen}
    Given $f_1, f_2$ and $u_0, u_1$ as in Fact~\ref{fact:non-complementary}. Let  $\A$ be a randomized online learning algorithm and $S_0 = \{(u_0, f_1(u_0)),\dots,(u_0, f_1(u_0))\}$. There exists $b\in\{1,2\}$ such that 
    \begin{equation*}
        \left\lvert\left\{t:\Pr[\A(S_0)_t(u_1)=f_b(u_1)]\ge 1/2\right\}\right\rvert \ge T / 2.
    \end{equation*}
\end{lemma}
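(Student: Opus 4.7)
The plan is a straightforward pigeonhole argument on time-steps, exploiting the fact that $f_1$ and $f_2$ disagree on $u_1$.

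First I would invoke item 2 of Fact~\ref{fact:non-complementary}, which guarantees $f_1(u_1) \neq f_2(u_1)$. Since $h_t(u_1) \in \{0,1\}$, for every fixed $t \in [T]$ the two events $\{\A(S_0)_t(u_1) = f_1(u_1)\}$ and $\{\A(S_0)_t(u_1) = f_2(u_1)\}$ are complementary, so
\begin{equation*}
    \Pr[\A(S_0)_t(u_1) = f_1(u_1)] + \Pr[\A(S_0)_t(u_1) = f_2(u_1)] = 1.
\end{equation*}
In particular, at least one of the two probabilities is at least $1/2$ for each $t$.

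Next, define
\begin{equation*}
    T_b = \left\{t \in [T] : \Pr[\A(S_0)_t(u_1) = f_b(u_1)] \ge 1/2\right\}
\end{equation*}
for $b \in \{1, 2\}$. The previous observation gives $T_1 \cup T_2 = [T]$, hence $|T_1| + |T_2| \ge T$. By pigeonhole, there exists some $b \in \{1, 2\}$ with $|T_b| \ge T/2$, which is exactly the claim.

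There is no real obstacle here; the lemma is essentially a restatement of the fact that $f_1$ and $f_2$ partition the Bernoulli outcome space at $u_1$, and the content is only the pigeonhole on $[T]$. The only subtlety worth noting is that the conclusion is \emph{non-constructive} in $b$: we do not know a priori which of $f_1, f_2$ is the ``frequent'' label, but the subsequent use in Theorem~\ref{thm:logT} handles this by simply swapping the roles of $f_1$ and $f_2$ if necessary, which is legitimate because both hypotheses are in $\HH$ and the non-complementary property is symmetric.
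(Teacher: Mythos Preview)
Your proposal is correct and essentially identical to the paper's proof: the paper also sets $n_b = |T_b|$, notes $n_1 + n_2 \ge T$, and concludes by pigeonhole. Your extra line explaining why the two events are complementary (via $f_1(u_1)\neq f_2(u_1)$) is the only added detail, and it is precisely the justification the paper leaves implicit.
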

\begin{proof}
    Let $n_b = \left\lvert\left\{t:\Pr[\A(S_0)_t(u_1)=f_b(u_1)]\ge 1/2\right\}\right\rvert$ for $b\in\{1,2\}$. We have $n_1 + n_2 \ge T$. Therefore, either $n_1\ge T/2$ or $n_2\ge T/2$.
\end{proof}

\begin{proof}[Proof of Theorem~\ref{thm:logT}]
    Let $\A$ be an $(\varepsilon, \delta)$-differentially private online learning algorithm and $S_0 = \{(u_0, f_1(u_0)),\dots,(u_0, f_1(u_0))\}$. By Lemma~\ref{lem:gen}, we can without loss of generality assume that $\Pr[\A(S_0)_t(u_1)=f_1(u_1)]\ge 1/2$ for all $t\in[T]$ by ignoring all time-steps that do not have such property.

    Let $k = \lfloor c\ln T/(3600\varepsilon)\rfloor$. We will assume that $\A$ makes no more than $k/3$ mistakes in expectation and derive a contradiction. By Lemma~\ref{lem:algconstruct}, Algorithm~\ref{alg:Sandt} will output $S_1,\dots, S_m$ and $t_1,\dots, t_{m-1}$ with $m = \lceil T / k\rceil$. Moreover, $\Pr[\A(S_i)_{t_j}(u_1)=f_1(u_1)] < 1/3$ for every $j\ge i$ and $\Pr[\A(S_i)_{t_j}(u_1)=f_1(u_1)] \ge 1/2$ for every $j\le i-1$.

    Let $\B$ be the algorithm that runs Algorithm~\ref{alg:smoothedbinary} with $\A$. The basic composition property immediately guarantees that $\B$ is $(360\varepsilon, 360\delta)$-differentially private.

    We now examine the probability that $\B$ outputs $i$ on $S_i$. If so, $\B$ must go through a series of time-step $t_{i_1},\dots, t_{i_n}$ in the binary search, where $n \le \lceil\log_2 m\rceil \le \log_2 m + 1 = \log_2 (2m)$. For each $t_{i_j}$, let $X_j$ be a random variable that takes $1$ if 
    \begin{equation*}
        \I\left[\left\lvert\left\{h_{t_{i_j}}^{(w)}(u_1) = f_1(u_1):w\in[360]\right\}\right\rvert < 150\right] = \I\left[i_j \ge i\right].
    \end{equation*}
    and takes $0$ otherwise. Conditioning on $X_j$'s, the probability that $\B$ outputs $i$ can be expressed as
    \begin{equation*}
        \Pr[\B(S_i) = i\left.\right| X_1,\dots,X_n] = \Pi_{j=1}^n\left( 0.75X_j + 0.25(1-X_j)\right) = 0.75^{\sum_{j = 1}^n X_j}0.25^{n - \sum_{j=1}^n X_j}.
    \end{equation*}
    
    By Hoeffding's inequality, for each $X_j$ we have
    \begin{equation*}
        \mathbb{E}[X_j] = \Pr[X_j = 1] \ge 1 - \exp\left(-2\cdot 360\cdot (30/360)^2\right) = 1 - e^{-5} >0.99.
    \end{equation*}
    It then follows by the linearity of expectation and Markov's inequality that 
    \begin{equation*}
        \Pr[X_1+\cdots+X_n \ge 0.97n] \ge 2/3.
    \end{equation*}
    
    Putting it all together gives
    \begin{align*}
        \Pr[\B(S_i) = i] &\ge \Pr[\B(S_i) = i \left.\right| X_1+\cdots + X_n\ge 0.97n]\cdot \Pr[X_1+\cdots + X_n\ge 0.97n] \\
        &\ge 2/3\cdot0.75^{0.97n}\cdot0.25^{0.03n} \\
        &\ge 2/3\cdot\left(\frac{1}{2}\cdot 2^{1/2}\right)^{0.97n} \cdot \left(\frac{1}{2}\cdot 2^{-1}\right)^{0.03n} \\
        &= 2/3\cdot 2^{-n} \cdot 2^{(0.97/2 - 0.03)n} \\
        &= 2/3 \cdot 2^{-0.545n} \\
        &\ge 2/3 \cdot (2m)^{-0.545}.
    \end{align*}

    We now apply the packing argument. Since $\B$ is $(360\varepsilon, 360\delta)$-differentially private, it follows by group privacy that
    \begin{align*}
        1 &= \sum_{i=1}^m\Pr[\B(S_0) = i] \\
        &\ge \sum_{i=1}^m\left(e^{-360k\varepsilon}\Pr[\B(S_i) = i] - \frac{1-e^{-360k\varepsilon}}{e^{360\varepsilon} - 1}\cdot 360\delta\right) \\
        &\ge 2/3\cdot 2^{-0.545}\cdot m^{0.455}\cdot e^{-360k\varepsilon} - m\cdot\frac{1-e^{-360k\varepsilon}}{e^{360\varepsilon} - 1}\cdot 360\delta \\
        &\ge 0.45m^{0.455}\cdot e^{-360k\varepsilon} - m\cdot\frac{1-e^{-360k\varepsilon}}{e^{360\varepsilon} - 1}\cdot 360\delta.
    \end{align*}
    If $0.45m^{0.455}\cdot e^{-360k\varepsilon} \ge 2m\cdot\frac{1-e^{-360k\varepsilon}}{e^{360\varepsilon} - 1}\cdot 360\delta$, then we have
    \begin{equation*}
        0.225m^{0.455}\cdot e^{-360k\varepsilon} \le 0.45m^{0.455}\cdot e^{-360k\varepsilon} - m\cdot\frac{1-e^{360k\varepsilon}}{e^{360\varepsilon} - 1}\cdot 360\delta \le 1.
    \end{equation*}
    Rearranging the above gives 
    \begin{align*}
        k &\ge \frac{\ln0.225 + 0.455\ln m}{360\varepsilon} \\ &\ge \frac{\ln 0.225 + 0.455\ln (T/k)} {360\varepsilon} \\
        &\ge \frac{\ln 0.225 + 0.455 \ln \left(\frac{3600T\varepsilon}{c\ln T}\right)}{360\varepsilon} \\
        &\ge \frac{ \ln 0.225 + 0.455 c\ln T + 0.455 \ln(3600/c)}{360\varepsilon} \\
        &\ge \frac{0.455c\ln T}{360\varepsilon},
    \end{align*}
    where in the forth inequality we use the condition $\varepsilon \ge \ln T/T^{1-c}$. This contradicts our assumption that $k\le c\ln T/3600\varepsilon$.

    If otherwise $0.45m^{0.455}\cdot e^{-360k\varepsilon} < 2m\cdot\frac{1-e^{-360k\varepsilon}}{e^{360\varepsilon} - 1}\cdot 360\delta$, we have
    \begin{align*}
        360k\varepsilon &> \ln\left(\frac{e^{360\varepsilon} - 1}{360\delta}\cdot 0.225m^{-0.545} + 1\right) \\
        &> \ln\left(\frac{e^{360\varepsilon} - 1}{360\delta}\right) + \ln 0.225 -0.545\ln m \\
        &\ge \ln(\varepsilon / \delta) - 0.545\ln T + \ln 0.225 \\
        &\ge 0.455 \ln T + \ln 0.225 \\
        &\ge 0.1\ln T
    \end{align*}
    when $T\ge 100$. Then it follows that $k > 0.1 \ln T /(360\varepsilon) > c\ln T / 3600\varepsilon$, again a contradiction.

    In conclusion, any $(\varepsilon, \delta)$-differentially private algorithm makes $(k+1) / 3 \ge c\ln T/(10800\varepsilon)$ mistakes in expectation when $T\ge 100$. This gives the $\Omega(\log T/\varepsilon)$ lower bound.
\end{proof}

\subsection{Proof of Theorem~\ref{thm:LDlogT}}
\begin{proof}
    When $\ld(\HH) = 1$, the result is directly implied by Theorem~\ref{thm:logT}. From now on, we will assume $\ld(\HH)\ge 2$.
    
    Let $m = \lfloor\ld(\HH) / 2\rfloor$ and $T' = \lfloor T / m\rfloor \ge \lfloor 2T / \ld(\HH)\rfloor \ge T / \ld(\HH) > T^{1 - 1/(1 + c_2)} = T^{c_2 / (1 + c_2)}$. Pick a shattered tree for $\HH$ of depth $\ld(\HH)$. Let $u_0$ be its root and $u_1$ be the left child of $u_0$. The definition of shattered tree indicates that there exists $f_1,f_2\in\HH$ such that $f_1(u_0) = f_2(u_0) = 0$ and $0 = f_1(u_1) \neq f_2(u_1) = 1$. By Theorem~\ref{thm:logT}, for any $(\varepsilon,\delta)$-differentially private online learner with $ \varepsilon \ge \ln T / T^{c_2(1-c_1)/(1+c_2)}> \ln T' / T'^{1 - c_1} $ and $\delta \le \varepsilon / T < \varepsilon / T'$, we can construct a sequence $S_1$ such that it makes in expectation $\Omega(\log T' /\varepsilon) = \Omega(\log T/\varepsilon)$ mistakes on $S_1$. Moreover, $S_1$ only contains $(u_0, 0)$ and $(u_1, b)$ for some $b\in\{0, 1\}$.

    Let $\HH' = \{h\in\HH:h(u_0) = 0~and~h(u_1)=b\}$. If 
    $\ld(\HH) \ge 4$, we then have $\ld(\HH') \ge \ld(\HH) - 2\ge 2$ since the subtree rooted at the child (left if $b = 0$, right if $b=1$) of $u_1$ is shattered by $\HH'$ and has a depth of $\ld(\HH) - 2$. Thus, we can similarly construct a sequence $S_2$ such that the algorithm makes $\Omega(\log T/\varepsilon)$ errors. Importantly, the entries in $S_2$ are completely different from those in $S_1$.

    We repeat the above process until we reach a hypothesis class with Littlestone dimension $\le 1$ and construct a series of sequences $S_1,\dots, S_m$ that have non-overlapping entries. On each of them, the learner makes $\Omega(\log T/\varepsilon)$ mistakes in expectation. Let $S$ be the stream formed by concatenating $S_1,\dots,S_m$ together. The length of $S$ is at most $mT' \le T$ while the expected number of mistakes on $S$ is $\Omega(\ld(\HH)\log T)$.
\end{proof}

\section{Additional Algorithms}
\label{sec:algo}

In this section, we provide several algorithms for various tasks. Note that some of the upper bounds hold even against strong adaptive adversaries, i.e., adversaries that can see the prediction made by the learner in the current round.

\subsection{Learning Point Functions Against Oblivious Adversaries}

We show how to improve the $\log^2 T$ dependence in Theorem~\ref{thm:pure_oblivious_realizable} to $\log T$ for $\point_{\mathbb{N}}$. In the beginning, we keep outputting an all-zero function and use the sparse vector technique to monitor the mistakes. Then we sample hypotheses from the probabilistic representation constructed in~\citep{beimel2019characterizing} and again apply the sparse vector technique to find one with low error on the past data. Following their argument, we can show that the construction guarantees that the hypothesis we found won't make too many mistakes in the future. The result is stated as follows. 
\begin{theorem}
\label{thm:pointN}
    In the realizable setting, there is an $\varepsilon$-differentially private online learning algorithm for $\point_{\mathbb{N}}$ that makes in expectation $O(\log T/\varepsilon)$ mistakes against oblivious adversaries.
\end{theorem}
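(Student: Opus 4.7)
The plan is a two-phase algorithm combining the sparse vector technique (Algorithm~\ref{alg:svt}) with a hashing-based probabilistic representation of $\point_{\mathbb{N}}$ tailored to the point-function structure.

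\textbf{Phase 1 (monitoring).} I would output the constant-zero hypothesis $h_0\equiv 0$ at every round and feed the running mistake counter $q_t(S)=\sum_{s\le t}y_s$, a sensitivity-$1$ stream of queries, to an instance of $\mathsf{AboveThreshold}$ with privacy budget $\varepsilon/2$ and threshold $L=c\log(T)/\varepsilon$ for a large constant $c$. If the threshold is never crossed, Laplace concentration gives at most $L+O(\log(T)/\varepsilon)=O(\log T/\varepsilon)$ mistakes with high probability, and the algorithm terminates correctly. Otherwise let $t_0$ be the trigger round; since we are in the realizable setting with target $f_{x^\star}$, the prefix $S_{1:t_0}$ contains $\Theta(L)$ copies of $(x^\star,1)$ and only $(x,0)$ pairs for $x\neq x^\star$.

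\textbf{Phase 2 (identification).} I would then sample, with public randomness, a pairwise-independent hash function $h:\mathbb{N}\to[T]$ and define $v_j(x)=\I[h(x)=j]$ for $j\in[T]$. The family $V=\{v_j\}_{j\in[T]}$ is a probabilistic representation for $\point_{\mathbb{N}}$ in the style of~\citet{beimel2019characterizing}: for $j^\star=h(x^\star)$, the bucket count $Q_j=\lvert\{s\le t_0:y_s=1,\,h(x_s)=j\}\rvert$ equals $\Theta(L)$ when $j=j^\star$ and $0$ otherwise. I would invoke a second $\mathsf{AboveThreshold}$ with privacy $\varepsilon/2$ and threshold $L/2$, feeding it the sensitivity-$1$ queries $Q_1,Q_2,\dots,Q_T$ in order. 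The standard utility guarantee of $\mathsf{AboveThreshold}$ over a stream of $T$ queries, together with $L/2\gg \log(T/\beta)/\varepsilon$, ensures that with probability $1-\beta$ the trigger occurs exactly at $j=j^\star$, producing the index $i^\star=h(x^\star)$.

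\textbf{Phase 3 (exploitation).} For every subsequent round $t>t_0$ I would output $v_{i^\star}$. Correctness at $x^\star$ is automatic because $v_{i^\star}(x^\star)=1$; mistakes can only come from queries $x_t\neq x^\star$ with $h(x_t)=h(x^\star)$. Pairwise independence gives
\begin{equation*}
\mathbb{E}_h\!\left[\sum_{t>t_0,\,x_t\neq x^\star}\I[h(x_t)=h(x^\star)]\right]=\sum_{t>t_0,\,x_t\neq x^\star}\frac{1}{T}\le 1,
\end{equation*}
so phase three contributes $O(1)$ mistakes in expectation against an oblivious adversary. Privacy follows from basic adaptive composition ($\varepsilon/2+\varepsilon/2=\varepsilon$), since the sampling of $h$ and the output of $v_{i^\star}$ are post-processing of the two $\mathsf{AboveThreshold}$ outputs.

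The main obstacle I anticipate is the joint tuning of the threshold $L$, the hash codomain size, and the two $\mathsf{AboveThreshold}$ noise scales so that all three budgets fit: phase one must incur only $O(\log T/\varepsilon)$ mistakes, the second $\mathsf{AboveThreshold}$ must separate the unique full bucket from the $T-1$ empty ones (which forces $L=\Omega(\log(T/\beta)/\varepsilon)$), and phase three's expected collisions must be $O(1)$ (which forces the codomain size to be $\Omega(T)$). Choosing the codomain equal to $T$ and $L$ a sufficiently large constant multiple of $\log T/\varepsilon$ balances all constraints and yields the claimed $O(\log T/\varepsilon)$ expected mistake bound.
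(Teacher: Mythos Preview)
Your proposal is correct and follows essentially the same approach as the paper: predict zero while monitoring mistakes with $\mathsf{AboveThreshold}$, then use a random $1/T$-subset probabilistic representation of $\point_{\mathbb{N}}$ together with a second $\mathsf{AboveThreshold}$ to privately identify a hypothesis consistent with the accumulated $(x^\star,1)$ evidence, and finally bound future collisions by at most $1$ in expectation via obliviousness. The only minor difference is the implementation of Phase~2: the paper repeatedly resamples an independent random hypothesis $h$ with $\Pr[h(x)=1]=1/T$ and uses $\mathsf{AboveThreshold}$ as a \emph{verifier} (halting on the first sample with $h(x^\star)=1$), whereas you fix a single pairwise-independent hash and use $\mathsf{AboveThreshold}$ as a \emph{searcher} over the $T$ buckets---both routes yield the same $O(\log T/\varepsilon)$ bound with two $\varepsilon/2$-DP sparse-vector calls composed.
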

\begin{proof}
    We run Algorithm~\ref{alg:point}. Note that it is in fact composed of two instances of $\mathsf{AboveThreshold}$ with privacy parameter $\varepsilon/2$, the algorithm is $\varepsilon$-DP.

    Consider the first $\mathsf{AboveThreshold}$, there are $T+1\le 2T$ Laplace random variables. With probability $1-1/3T$, all of them have absolute values no larger than $4\ln(6T^2)/\varepsilon'$. Thus, when it halts, there are at least $\hat{L} - 4\ln(6T^2)/\varepsilon' \ge 20\ln(12T^3)/\varepsilon'$ data points of the form $(x^\star, 1)$, where $f_{x^\star}$ is the target hypothesis. Moreover, it makes at most $20\ln(12T^3)/\varepsilon' + 8\ln(6T^2)/\varepsilon' + 8\ln(6T^2)/\varepsilon' + 1$ mistakes.

    Now we show that the sampling won't last for too long. We consider the first $3T^2$ iterations. Then with probability $1-1/3T$, every random noise has an amplitude of at most $4\ln(12T^3)/\varepsilon'$ in the second $\mathsf{AboveThreshold}$. Note that $\sum_{r=1}^t\I[h(x_r)\neq y_r\text{ and }y_r=1] \ge 20\ln(12T^3)/\varepsilon'$ if $h(x^\star) = 0$ and is $0$ if $h(x^\star) = 1$. Thus, the algorithm will exit the loop if and only if $h(x^\star) = 1$. The probability that this happens in the first $3T^2$ iterations is at least $1-(1-1/T)^{3T^2}\ge 1-e^{-3T} \ge 1- 1/3T$. Moreover, this $h$ makes in expectation less than $(T-1)/T < 1$ mistakes on data points other than $(x^\star, 1)$ in the sequence.

    Putting it all together, the expected number of mistakes is less than
    \begin{equation*}
        1/T\cdot T + (1-1/T)\cdot (20\ln(12T^3)/\varepsilon' + 8\ln(6T^2)/\varepsilon' + 8\ln(6T^2)/\varepsilon' + 1 + 1) = O(\log T/\varepsilon).
    \end{equation*}
\end{proof}

\begin{algorithm}[!ht]
\label{alg:point}
\DontPrintSemicolon
    \KwInput{the number of rounds $T$; privacy parameter $\varepsilon$; sequence $S$}
    \KwOutput{hypotheses $h_1,\dots, h_T$}
    $\varepsilon' \gets \varepsilon / 2$\;
    $\hat{L} \gets 20\ln(12T^3)/\varepsilon' + 8\ln(6T^2)/\varepsilon' + \lap(2/\varepsilon')$\;
    \For{$t=1, \dots, T$}{
        Let $h_t$ be an all-zero function such that $h_t(x) = 0$ for all $x\in\mathbb{N}$\;
        $(x_t, y_t) \gets S[t]$\;
        \If{$\sum_{r=1}^t y_r +\lap(4/\varepsilon')\ge\hat{L}$}{
            $\hat{R} \gets 12\ln(12T^3)/\varepsilon' +\lap(2/\varepsilon')$\;
            \Repeat{$\sum_{r=1}^t\I[h(x_r)\neq y_r\text{ and }y_r=1] +\lap(4/\varepsilon') \le \hat{R}$}{
                Sample an $h$ such that $\Pr[h(x)=1]=1 / T$ for every $x\in\mathbb{N}$ independently\;
            }
            Output $h_{t+1}=\cdots=h_T=h$ for the remaining rounds and halt\;
        }
    }
\caption{Learning point functions over $\mathbb{N}$}
\end{algorithm}

\subsection{Learning Point Functions Against Adaptive Adversaries}

\begin{theorem}
    In the realizable setting, there is an $\varepsilon$-differentially private online learning algorithm for $\point_d$ that makes in expectation $O((\log d + \log T)/\varepsilon)$ mistakes against strong adaptive adversaries.
\end{theorem}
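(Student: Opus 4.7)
The plan is to run a two-phase algorithm that mimics the structure of Algorithm~\ref{alg:point} but replaces the random sampling subroutine (which only works against oblivious adversaries) with a one-shot application of report-noisy-max. In the first phase, the learner simply outputs the all-zero hypothesis $h_t(x) \equiv 0$, so that each mistake corresponds to the adversary presenting $(x^\star, 1)$ where $x^\star \in [d]$ is the unique element with $f_{x^\star}$ being a hypothesis consistent with the realizable stream. At the same time, the learner runs $\mathsf{AboveThreshold}$ (Algorithm~\ref{alg:svt}) with privacy budget $\varepsilon/2$ on the sensitivity-1 queries $q_t(S) = \sum_{r \le t} y_r$, with threshold $L = \Theta((\log d + \log T)/\varepsilon)$. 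The moment $\mathsf{AboveThreshold}$ fires, the learner invokes report-noisy-max with privacy budget $\varepsilon/2$ on the $d$ sensitivity-1 counts $c(x) = |\{r \le t : x_r = x \text{ and } y_r = 1\}|$ to obtain some $\tilde{x}\in[d]$, and outputs $h_t = f_{\tilde{x}}$ for all remaining rounds.

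Privacy follows from basic composition: the two mechanisms are each $(\varepsilon/2)$-DP, and the entire algorithm is $\varepsilon$-DP. Because $\delta = 0$, pure DP already coincides with adaptive DP, so this privacy guarantee holds even against strong adaptive adversaries. For accuracy, set the failure parameter $\beta = 1/(3T)$. With probability $1 - \beta$, every Laplace noise variable in $\mathsf{AboveThreshold}$ has magnitude at most $O(\log(T/\beta)/\varepsilon)$, so when it fires we have both (i) the true count $\sum_{r} y_r \le L + O(\log(T/\beta)/\varepsilon) = O((\log d + \log T)/\varepsilon)$, bounding the first-phase mistakes, and (ii) $c(x^\star) = \sum_r y_r \ge L - O(\log(T/\beta)/\varepsilon)$. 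By the accuracy of report-noisy-max, with probability $1-\beta$ the chosen $\tilde{x}$ satisfies $c(\tilde{x}) \ge \max_x c(x) - O(\log(d/\beta)/\varepsilon)$. Since only $x^\star$ can have a positive count in the realizable setting, choosing $L$ with a sufficiently large constant in front of $(\log d + \log T)/\varepsilon$ forces $c(\tilde x) > 0$, hence $\tilde{x} = x^\star$, after which no further mistakes occur. A union bound and the trivial bound of $T$ on the total mistakes when the bad events occur give an expected mistake count of $O((\log d + \log T)/\varepsilon) + 2\beta T = O((\log d + \log T)/\varepsilon)$.

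The subtle point against strong adaptive adversaries is that the adversary may see $h_t$ and then choose $(x_t, y_t)$. In the first phase $h_t \equiv 0$ carries no information, so nothing is leaked. In the second phase $h_t = f_{\tilde{x}}$ does reveal $\tilde{x}$, but by that time $\tilde{x}$ has already been committed and, conditional on the good event, equals $x^\star$; the realizability constraint then prevents the adversary from forcing any additional error. The main obstacle I anticipate is verifying that $\mathsf{AboveThreshold}$ retains its $(\varepsilon/2)$-DP guarantee when the stream of queries is generated by a strong adaptive adversary interacting with the learner's outputs — this is standard for SVT since its privacy proof is based on worst-case adaptive queries, but the reduction to the adaptive DP definition of Section~\ref{sec:dp} needs to be spelled out. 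A secondary concern is calibrating $L$ precisely so that the threshold ``gap'' for RNM, namely $L - O(\log(T/\beta)/\varepsilon)$, strictly exceeds $O(\log(d/\beta)/\varepsilon)$; this can be arranged by plugging in $\beta = 1/(3T)$ and selecting absolute constants carefully, without altering the $O((\log d + \log T)/\varepsilon)$ asymptotic bound.
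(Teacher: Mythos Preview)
Your proposal is correct and is essentially identical to the paper's own proof: output the all-zero function while monitoring $\sum_r y_r$ with $\mathsf{AboveThreshold}$ at privacy level $\varepsilon/2$, then once it fires run report-noisy-max at privacy level $\varepsilon/2$ on the counts $c_i=|\{t:(x_t,y_t)=(i,1)\}|$ and commit to the resulting point function. The paper's accuracy analysis uses exactly the same calibration idea (choosing $L$ large enough that $c_{x^\star}\ge 3(\ln d+\ln(2T))/\varepsilon'$ while $c_j=0$ for $j\ne x^\star$) and the same $1/T$-failure union-bound conversion to an expected mistake bound; your extra discussion of why strong adaptive adversaries pose no additional difficulty is a welcome elaboration that the paper leaves implicit.
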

\begin{proof}
    Let $\varepsilon' = \varepsilon / 2$. The algorithm works as follows: it keeps outputting an all-zero function and runs an $\varepsilon'$-differentially private sparse vector technique (Algorithm~\ref{alg:svt}) with $L = 3(\ln d + \ln (2T))/\varepsilon' + 8 \ln(4T^2)/\varepsilon'$ to monitor the number of mistakes. Once the (noisy) number of mistakes exceeds $\hat{L}$ at round $k$, it computes $c_i = \lvert\{t\in [k]: (x_t,y_t)=(i,1)\}\rvert$ and apply the report-noisy-max mechanism with privacy parameter $\varepsilon'$ to find an index $i$ with a large $c_i$. After that, it persistently outputs $f_i$ till the end. The privacy directly follows from the basic composition.
    
    With probability $1- 1/2T$, the amplitude of every noise added in the sparse vector technique is no larger than $4\ln(4T^2)/\varepsilon'$. Thus, at round $k$, the number of mistakes must be in the range $[\hat{L} - 4 \ln(4T^2)/\varepsilon', \hat{L} + 4 \ln(4T^2)/\varepsilon' + 1] \subseteq [L - 8 \ln(4T^2) / \varepsilon', L + 8 \ln(4T^2) / \varepsilon' + 1]$. Hence, we have $c_i \ge L - 8 \ln(4T^2) / \varepsilon' = 3(\ln d + \ln(2T))/\varepsilon'$ for some $i$ and $c_j = 0$ for all $j\neq i$. With probability $1- 1/2T$, the report-noisy-max algorithm will identify $i$ correctly.

    Thus, the expected number of mistakes is bounded by
    \begin{equation*}
        1/T\cdot T + (1 - 1/T)\cdot (L + 8 \ln(4T^2) / \varepsilon' + 1) = O((\log d + \log T) / \varepsilon).
    \end{equation*}
\end{proof}

\subsection{Learning Threshold Functions}

A threshold function over $[d]$ is a function $f_i$ such that $f_i(x) = 0$ for all $x \le i$ and $f_i(x) = 1$ for all $x > i$, where $i\in[d]\cup\{0\}$. The class of threshold functions over $[d]$, denote by $\threshold_d$, is the set $\{f_i:i\in[d]\cup\{0\}\}$.

\begin{theorem}
\label{thm:threshold}
    In the realizable setting, there is an $\varepsilon$-differentially private online learning algorithm for $\threshold_d$ that makes in expectation $O(\log d\log T/\varepsilon)$ mistakes against strong adaptive adversaries.
\end{theorem}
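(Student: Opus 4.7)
My plan is to run a binary search over threshold positions, using $\mathsf{AboveThreshold}$ to detect when enough mistakes have accumulated to halve the feasible interval. I maintain $[l, r] \subseteq \{0, 1, \dots, d\}$ that is known to contain $i^\star$, initialized to $[0, d]$. In each phase I predict with $f_m$ where $m = \lfloor (l+r)/2\rfloor$ and run two parallel $\mathsf{AboveThreshold}$ instances, one counting examples $(x_t, 1)$ with $x_t \le m$ (which can only occur if $i^\star < m$) and the other counting $(x_t, 0)$ with $x_t > m$ (only if $i^\star > m$). In the realizable setting these two mistake types are mutually exclusive given $i^\star$ and $m$, so whichever instance trips first unambiguously identifies the correct half. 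I collapse $[l, r]$ to that half, reset both SVTs with a new midpoint, and continue; after at most $\lceil \log_2(d+1)\rceil$ phases the interval is a single point and all subsequent predictions are correct.

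For the mistake count, setting the per-instance threshold to $L = \Theta(\log T/\varepsilon')$ (where $\varepsilon'$ is the per-phase budget) keeps the probability of a false trigger polynomially small in $T$, so a union bound over all $T$ rounds is affordable, and each phase contributes at most $O(L)$ expected mistakes. The cleanest assembly is basic composition across the $O(\log d)$ phases with per-phase budget $\varepsilon' = \varepsilon/\Theta(\log d)$, which gives total privacy $\varepsilon$ and a bound of $O(\log^2 d\,\log T/\varepsilon)$. To shave the extra $\log d$ and reach the stated $O(\log d\,\log T/\varepsilon)$, I would exploit that each data point is consumed by exactly one active phase's SVT pair: a parallel-composition-style argument over this adaptively-triggered partition lets each phase use the full budget $\varepsilon$, so $L = O(\log T/\varepsilon)$ and the phases contribute $O(\log d \cdot \log T/\varepsilon)$ mistakes in total. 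Strong adaptivity is benign here because $m$ depends only on the outputs of earlier SVT triggers, so an adversary that sees $f_m$ before selecting $(x_t, y_t)$ is no more powerful than one that sees only the trigger history already released by the mechanism.

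The main obstacle is justifying the improved composition when the partition of the data stream among SVT instances is itself data-dependent through the noisy trigger times. Standard parallel composition requires a deterministic partition, so I would argue via a coupling between neighboring data streams $S, S'$ that differ in one entry: the differing entry can influence only the SVT that is active at its arrival time (charging that SVT one unit of sensitivity), and while the trigger time of that SVT may shift under the two runs, the shift is absorbed by the Laplace tails that the SVT's own privacy analysis already accounts for. Formalizing this coupling, together with a careful handling of the boundary case where the trigger straddles the differing entry, is what I expect to be the technically delicate step; once it is in place, the total privacy stays at $\varepsilon$ and the $O(\log d\,\log T/\varepsilon)$ mistake bound follows.
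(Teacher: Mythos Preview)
Your proposal is correct and follows essentially the same approach as the paper: binary search with midpoint prediction, SVT-based mistake monitoring per phase, and the crucial observation that phases operate on disjoint segments of the stream so each phase can use the full $\varepsilon$ budget rather than $\varepsilon/\Theta(\log d)$. The paper's implementation differs only cosmetically (one SVT on $c_0+c_1$ followed by a single Laplace-noised comparison of $c_0$ versus $c_1$, rather than your two parallel SVTs), and the paper simply asserts the disjoint-data composition (``we are actually running $\mathsf{AboveThreshold}$ on disjoint datasets'') without the elaborate coupling you anticipate: since neighboring streams agree before the differing index $t$, the phase active at time $t$ is the same in both runs, that phase's output is $\varepsilon$-DP in the single differing entry, and conditioned on that output all later phases see identical data, so the argument is standard rather than delicate.
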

\begin{proof}
    We run Algorithm~\ref{alg:threshold}. Since the counters are refreshed once we switch the current hypothesis, we are actually running $\mathsf{AboveThreshold}$ on disjoint datasets. Moreover, the comparisons of $c_0$ and $c_1$ are also performed on disjoint datasets. Therefore, the overall algorithm is $\varepsilon$-DP.
    
    Since the binary search runs for at most $\lceil\log_2(d+1)\rceil$ iterations, it follows by the privacy of $\mathsf{AboveThreshold}$ and the basic composition that the overall algorithm is $\varepsilon$-DP since the counters are refreshed once we switch the current hypothesis.

    By the property of Laplace distribution and the union bound, with probability $1-1/T$, every random noise that appears in the algorithm has an amplitude no larger than $4\ln(4T^2) / \varepsilon'$. Conditioning on this event, once we change the current hypothesis, we must have $c_0 + c_1 \ge \hat{L} - 4\ln(4T^2) / \varepsilon' \ge 8\ln(4T^2)/\varepsilon'$ and $c_0 + c_1\le \hat{L} + 4\ln(4T^2)/\varepsilon' + 1\le 24\ln(4T^2)/\varepsilon' + 1$. Therefore, we can identify which one is zero. Since the binary search runs at most $O(\log d)$ iterations, we will make at most
    \begin{equation*}
        O(\log d)\cdot (24\ln(4T^2)/\varepsilon' + 1) = O(\log d\log T/\varepsilon)
    \end{equation*}
    mistakes with probability $1-1/T$. This implies an $1/T\cdot T + O(\log d\log T/\varepsilon) = O(\log d\log T/\varepsilon)$ bound in expectation. 
\end{proof}

\begin{algorithm}[!ht]
\label{alg:threshold}
\DontPrintSemicolon
    \KwInput{the number of rounds $T$; privacy parameter $\varepsilon$; sequence $S$}
    \KwOutput{hypotheses $h_1,\dots, h_T$}
    $\varepsilon' = \varepsilon/2$\;
    $\hat{L}\gets 16\ln(4T^2)/\varepsilon'+ \lap(2/\varepsilon')$\;
    $l\gets 0, r\gets d, mid\gets \lfloor (l + r) / 2\rfloor$\;
    $c_0\gets 0, c_1\gets 0$\;
    \For{$t = 1,\dots, T$}{
        $h_t\gets f_{mid}$\;
        $(x_t, y_t)\gets S[t]$\;
        $c_{y_t}\gets c_{y_t} + \I[h(x_t)\neq y_t]$\;
        \If{$l < r\text{ and }c_0 + c_1 + \lap(4/\varepsilon') \ge \hat{L}$}{
            \eIf{$c_0 + \lap(1/\varepsilon') > c_1$}{
                $l\gets mid + 1$\;
            }{$r\gets mid - 1$\;}
            $mid\gets \lfloor(l+r)/2\rfloor$\;
            $c_1\gets 0, c_2\gets 0$\;
            $\hat{L}\gets 16\ln(4T^2)/\varepsilon' + \lap(2/\varepsilon')$\;
        }
    }
\caption{Learning threshold functions over $[d]$}
\end{algorithm}

\subsection{Online Prediction from Experts Against Adaptive Adversaries}

It is easy to come up with an algorithm with a regret of $O(d\log T/\varepsilon)$ even against strong adaptive adversaries. The idea is similar to Algorithm~\ref{alg:threshold}. The only difference is that we try the experts one by one instead of running a binary search.

\begin{theorem}
\label{thm:OPEstrong}
    There is an $\varepsilon$-differentially private algorithm that solves the OPE problem with an expected regret of $O\left(\frac{d \log T}{\varepsilon}\right)$ even against strong adaptive adversaries in the realizable setting.
\end{theorem}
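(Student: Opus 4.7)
My plan is to adapt Algorithm~\ref{alg:threshold} by replacing the binary search over hypotheses with a linear scan over the $d$ experts. The algorithm maintains a current expert $i^\star$, initialized to $1$, and predicts with $i^\star$ every round. Concurrently, it runs an instance of $\mathsf{AboveThreshold}$ with privacy parameter $\varepsilon$ and threshold $L = \Theta(\log T/\varepsilon)$ on the sensitivity-$1$ query $q_t = \ell_t(i^\star)$. As soon as $\mathsf{AboveThreshold}$ outputs $\top$, the algorithm increments $i^\star$ by one, resets the accumulator, and restarts a fresh $\mathsf{AboveThreshold}$ instance with independent Laplace noises. Privacy follows the same disjoint-slice argument used in the proof of Theorem~\ref{thm:threshold}: each instance of $\mathsf{AboveThreshold}$ operates on the loss values of a disjoint contiguous range of rounds, and a pair of neighboring streams (in the adaptive DP sense) differs in exactly one round, so the differing entry is processed by a single $\varepsilon$-DP instance. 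This yields $\varepsilon$-differential privacy against strong adaptive adversaries without any need for composition across instances.

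For utility, I would condition on the high-probability event that every Laplace variable drawn throughout the execution has magnitude at most $4\ln(4T^2)/\varepsilon$; since at most $O(T+d)$ such variables are drawn (one per query plus one per threshold initialization), a union bound shows this event fails with probability at most $O(1/T)$ when $d = O(T)$ (for $d \gg T$ the claimed bound is trivial). Choosing $L$ to be a sufficiently large constant multiple of $\ln(4T^2)/\varepsilon$ then has two consequences. First, while $i^\star$ equals the true expert $i^\star_{\mathrm{true}}$ guaranteed by realizability (for which $\ell_t(i^\star_{\mathrm{true}}) = 0$ identically), the noisy cumulative loss stays close to $0$ whereas the noisy threshold is at least $L - O(\log T/\varepsilon)$, so $\top$ is never triggered and the true expert is never abandoned. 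Second, each switch away from a non-true expert occurs only after at most $L + O(\log T/\varepsilon) = O(\log T/\varepsilon)$ loss has accumulated on that expert. Since the scan reaches $i^\star_{\mathrm{true}}$ after at most $d-1$ switches and then halts, the total incurred loss on the good event, which equals the regret in the realizable setting, is $O(d \log T/\varepsilon)$; the complementary event contributes at most $T \cdot O(1/T) = O(1)$ to the expected regret.

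The only real subtlety is that a strong adaptive adversary can observe $i^\star$ before choosing $\ell_t$, raising the concern that the adversary could force spurious triggering of the $\mathsf{AboveThreshold}$ instance monitoring the true expert. However, in the realizable regime the true expert's loss is identically zero regardless of the adversary's choice, so $q_t = \ell_t(i^\star_{\mathrm{true}}) = 0$ whenever $i^\star = i^\star_{\mathrm{true}}$, and adaptivity cannot influence this query stream at all. This is precisely why the threshold-function template carries over to OPE with only a linear (rather than binary-search) scan, and why the $d$ factor (instead of $\log d$) is the unavoidable price for losing the search structure that proper threshold learning provides.
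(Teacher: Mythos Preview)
Your proposal is correct and follows essentially the same approach as the paper: a linear scan over the $d$ experts, each monitored by a fresh $\mathsf{AboveThreshold}$ instance on disjoint segments of the stream, with the disjoint-data argument giving $\varepsilon$-DP and the realizable expert's identically-zero loss guaranteeing the scan halts there. The only wrinkle is notational: you write the query as $q_t = \ell_t(i^\star)$, but $\mathsf{AboveThreshold}$ should be fed the \emph{cumulative} loss $\sum_{r} \ell_r(i^\star)$ since the last reset (which is the sensitivity-$1$ quantity you actually compare to $L$); your later references to ``the accumulator'' and ``the noisy cumulative loss'' make clear you intend this, so the argument goes through as in the paper.
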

\begin{proof}
    We iterate over the set of experts. For each expert, we keep choosing it and run an $\varepsilon$-differentially private sparse vector technique to monitor the loss incurred by the current expert. Once the sparse vector technique halts, we switch to the next expert and restart the sparse vector technique. Since all instances of the sparse vector technique are run on disjoint data, the entire algorithm is $\varepsilon$-DP.

    With probability $1-1/T$, all the Laplace noises added are bounded by $4\ln(2T^2)/\varepsilon$. By choosing $L = 9\ln(2T^2)/\varepsilon$, we will make at most $O(\log T/\varepsilon)$ mistakes on each expert. Moreover, for an expert that makes no errors, we will not switch to the next one. Therefore, the overall expected regret is
    \begin{equation*}
        1/T\cdot T + (1 - 1/T)\cdot d\cdot O(\log T/\varepsilon) = O(d\log T/\varepsilon).
    \end{equation*}
\end{proof}

Note that there is a multiplicative factor of $d$ on the $\log T$ term. We will then show how to improve this dependence to $\log d$ for (weak) adaptive adversaries. Due to Corollary~\ref{cor:logdlogT}, such a dependence is tight even for oblivious adversaries. It is interesting to find out if this can also be achieved for strong adaptive adversaries, and we leave this as future work.

Moreover, our algorithm has a regret of $O\left(d\log^2 d + \log d \log T\right)$. Since it was shown by~\citet{asi2023private} that an $\Omega(d)$ cost is inevitable, our algorithm is near optimal. 

The idea is to select a uniformly random expert rather than keep choosing a fixed one. Suppose we are at the beginning. Let $c_t(i) = \sum_{r=1}^t \ell_r(i)$ for expert $i$. The benefit of such random selection is that it only incurs a loss of $\sum_{i\in[d]}c_t(i)/ d$, which is much less than the $\sum_{i\in[d]}c_t(i)$ loss if we always choose the same expert since the adversary can let this expert make an error all the time.

We use the sparse vector technique to track the maximum of $c_t$. Once it exceeds $O(\log T + d \log d)$, we then apply the report-noisy-max algorithm to remove every expert $j$ with $a_t(j) = \sum_{r=1}^t \ell_r(j) \ge O(d\log d)$ (we do not write $c_t$ here since we will reset $c_t$ to be $0$ later). After that, we reiterate the sampling and monitoring process using the remaining experts. Observe that if expert $j$ is the $i$-th one being removed, the loss incurred by $j$ is at most $O(\log T + d\log d) / i$. This gives an upper bound of $O(\log T + d\log d)\cdot (1 + 1/2 + \cdots + 1/d) = O(d\log^2 d + \log d\log T)$. The details are depicted in Algorithm~\ref{alg:OPEweak}.

Here is a subtle issue: the report-noisy-max mechanism only succeeds with probability $1 - 1/d$. This does not imply an upper bound in expectation. We cannot simply raise the success probability to $1- 1/T$. Otherwise, the $d\log^2 d$ term will become $d\log d\log T$, which is even higher than the brute force. To fix this, we run an extra sparse vector technique on top of the algorithm to inspect the loss. Once the loss is greater than the upper bound, we then run the algorithm in Theorem~\ref{thm:OPEstrong} for the rest of the rounds. This reduces the expected cost from $T$ to $O(d\log T)$ when it fails, hence successfully bound the expected loss.

\begin{theorem}
    There is an $\varepsilon$-differentially private algorithm that solves the OPE problem with an expected regret of $O\left(\frac{d\log^2 d + \log d \log T}{\varepsilon}\right)$ against adaptive adversaries in the realizable setting.
\end{theorem}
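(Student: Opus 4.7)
The plan is to maintain a pool of active experts $A \subseteq [d]$, initially $A = [d]$, and in each round play a uniformly random $i_t \in A$. Let $c_t(i)$ denote the cumulative loss of expert $i$ accumulated \emph{since the most recent reset}. An inner instance of $\mathsf{AboveThreshold}$ monitors the sensitivity-$1$ query $q_t = \max_{i \in A} c_t(i)$ against threshold $L_{\mathrm{in}} = \Theta((\log T + d \log d)/\varepsilon)$. When it fires, we invoke an $\varepsilon$-DP report-noisy-max over $A$ to identify the heaviest expert, remove it (together with every other expert whose count is revealed to exceed $\Theta(d\log d/\varepsilon)$) from $A$, reset $c_t$ to $0$, and restart the inner $\mathsf{AboveThreshold}$ with fresh noise. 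Running in parallel, an outer $\mathsf{AboveThreshold}$ tracks the algorithm's total realized loss against the target regret $R = \Theta((d\log^2 d + \log d \log T)/\varepsilon)$; if it ever fires, we abandon the sampling scheme and execute the algorithm of Theorem~\ref{thm:OPEstrong} for the remainder of the stream.

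\textbf{Privacy and conditional utility.} Each successive inner $\mathsf{AboveThreshold}$ instance and its associated report-noisy-max call operate on disjoint suffixes of the stream (the data since the last reset), so by parallel composition and the privacy guarantees of these two primitives the entire inner mechanism is $\varepsilon/2$-DP. The outer $\mathsf{AboveThreshold}$ together with the fallback invocation of Theorem~\ref{thm:OPEstrong} contributes another $\varepsilon/2$, and basic composition yields $\varepsilon$-DP overall. For utility, condition on every Laplace noise across the $\mathrm{poly}(T)$ queries having magnitude $O(\log T/\varepsilon)$, which holds with probability at least $1 - 1/T$ by a union bound. Since the target expert $i^\star$ satisfies $c_t(i^\star) \equiv 0$ and the fired threshold is much larger than any noise, a correctly-functioning report-noisy-max never evicts $i^\star$. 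If $j_k$ is the $k$-th evicted expert, then just before its eviction $|A| = d - k + 1$ and $c_t(j_k) \le L_{\mathrm{in}} + O(\log T/\varepsilon)$, so the expected loss contributed during the $k$-th epoch (when $j_k$ drives the max) is at most $L_{\mathrm{in}}/(d - k + 1)$. Summing gives $L_{\mathrm{in}} \cdot H_d = O((d\log^2 d + \log d \log T)/\varepsilon)$, matching the claimed bound \emph{conditional} on every report-noisy-max call being correct.

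\textbf{Main obstacle: handling report-noisy-max failures.} Each report-noisy-max call succeeds only with probability $1 - \rho$ at the fixed privacy cost already absorbed into our budget, so over the up-to-$d$ calls the total failure probability $p$ may be a small constant; naively, a single misfire that evicts $i^\star$ leaves the algorithm with no realizable expert and yields $\Omega(T)$ regret in expectation. This is exactly the role of the outer $\mathsf{AboveThreshold}$: on the no-failure event the total loss stays bounded by $R$ by the harmonic calculation, so the outer threshold is silent; on the failure event the outer $\mathsf{AboveThreshold}$ fires shortly after the loss crosses $R$, and the fallback to Theorem~\ref{thm:OPEstrong} caps all subsequent regret by $O(d\log T/\varepsilon)$. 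The expected regret is therefore at most
\begin{equation*}
O\!\left(\frac{d\log^2 d + \log d\log T}{\varepsilon}\right) + p \cdot O\!\left(\frac{d \log T}{\varepsilon}\right),
\end{equation*}
and the delicate part is to calibrate $\rho$ so that $p \cdot d\log T = O(d\log^2 d + \log d\log T)$. Choosing $\rho = O(1/\log T)$ per call suffices and costs only an extra $O(\log \log T)$ factor in the report-noisy-max noise scale, which is absorbed into the $d\log^2 d$ term without inflating the privacy budget beyond $\varepsilon$.
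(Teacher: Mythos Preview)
Your overall architecture (uniform sampling from an active pool, inner $\mathsf{AboveThreshold}$ plus report-noisy-max to evict, outer $\mathsf{AboveThreshold}$ plus fallback to Theorem~\ref{thm:OPEstrong}) matches the paper's. But the regret accounting and privacy accounting are both off, and the two errors are linked.

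\textbf{The harmonic sum bounds the wrong quantity.} The per-expert decomposition gives total loss $\sum_{i}\sum_{t\le T_i}\ell_t(i)/|E_t|\le \sum_i a_{T_i}(i)/|E_{T_i}|$, where $a_{T_i}(i)$ is the \emph{cumulative} loss of expert $i$ from time $1$ until its eviction. Your sentence ``$c_t(j_k)\le L_{\mathrm{in}}+O(\log T/\varepsilon)$, so the expected loss contributed during the $k$-th epoch is at most $L_{\mathrm{in}}/(d-k+1)$'' only controls $j_k$'s loss in its \emph{final} epoch. Nothing prevents $j_k$ from accruing close to $L_{\mathrm{in}}$ loss in each of the preceding epochs without ever being the max, so $a_{T_{j_k}}(j_k)$ can be $\Theta(k\cdot L_{\mathrm{in}})$ and the harmonic sum degenerates to $\Theta(d\cdot L_{\mathrm{in}})$. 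The paper avoids this by running report-noisy-max on the non-reset counts $a_t$ and, inside a while loop, evicting every expert with $a_t$ above a secondary threshold; this is what guarantees $a_{T_i}(i)\le M$ uniformly.

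\textbf{Parallel composition then fails.} Once report-noisy-max must read $a_t$ rather than $c_t$, the RNM calls are no longer on disjoint suffixes and your parallel-composition claim breaks. The paper instead uses basic composition, allotting each RNM and Laplace comparison a budget of $\varepsilon_2=\varepsilon/(6d)$; the resulting noise scale $\Theta(d\log d/\varepsilon)$ is precisely where the $d\log d$ in $L_{\mathrm{in}}$ comes from.

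\textbf{The failure calibration does not close.} With $\rho=O(1/\log T)$ per call and up to $d$ calls, the union bound gives $p=O(d/\log T)$, hence $p\cdot O(d\log T/\varepsilon)=O(d^2/\varepsilon)$, which is not absorbed by $d\log^2 d+\log d\log T$. The paper sets $\rho=\Theta(1/d^2)$ so that $p=O(1/d)$ and the failure term is $O(\log T/\varepsilon)$; the extra $\log(1/\rho)=O(\log d)$ in the RNM accuracy is already paid for in the $d\log d/\varepsilon$ noise scale above.
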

\begin{proof}
    We first show the privacy and utility guarantee for Algorithm~\ref{alg:OPEweak}. For privacy, the sparse vector technique is $\varepsilon/2$-DP. The report-noisy-max outside the while loop will be executed at most $d$ times, and the same for the one inside the while loop. Also, the if clause inside the while loop will also be executed at most $d$ times. All of these cost a privacy budget of $3d\varepsilon_2 = \varepsilon / 2$. Thus, the overall algorithm is $\varepsilon$-DP.

    We next analyze the regret. With probability $1 - 1/T$, all the noises added by $\mathsf{AboveThreshold}$ are bounded by $4\ln(2T^2)/\varepsilon_1$. Therefore, once $c_t(i) \ge \frac{16\ln(2T^2)}{\varepsilon_1} + \frac{3(\ln d + \ln(3d^2))}{\varepsilon_2}$ for some $i\in E$, we must invoke the report-noisy-max mechanism. On the other hand, once we invoke the report-noisy-max mechanism outside the while loop, we must have $c_t(i) \ge \frac{3(\ln d + \ln(3d^2))}{\varepsilon_2}$ for some $i$. With probability $1 - 1/3d$, it always identifies an $i$ with $a_t(i) \ge \frac{3(\ln d + \ln(3d^2))}{\varepsilon_2} - \frac{2(\ln d + \ln(3d^2))}{\varepsilon_2} > 0$ and delete it.

    Now let us move into the while loop. With probability $1 - 1/3d$ the report-noisy-max always returns an $i$ with $a_t(i) > \frac{2\ln(3d^2)}{\varepsilon_2} + \frac{2(\ln d + \ln(3d^2))}{\varepsilon_2} - \frac{2(\ln d + \ln(3d^2))}{\varepsilon_2} = \frac{2\ln(3d^2)}{\varepsilon_2}$ whenever $\max_{j\in[E]}a_t(j) > \frac{2(\ln d + \ln(3d^2))}{\varepsilon_2} + \frac{2\ln(3d^2)}{\varepsilon_2}$. Moreover, the noise added in the if clause is less than $\frac{\ln(3d^2)}{\varepsilon_2}$ for the entire time span with probability $1-1/3d$. Thus, expert $i$ will be removed. Conversely, if it identifies an $i$ such that $a_t(i) = 0$, we will not remove $i$ and exit the loop.

    Let $E_t$ be the set $E$ at the beginning of round $t$ and $E_{T+1}$ be the set $E$ after the algorithm terminates. Suppose $i$ is removed from $E$ at time-step $T_i$ (if it still remains in $E$ at the end, we define $T_i = T + 1$). From the above analysis, we know that with probability $1- 1/d - 1/T$, for every expert $i$ we have $a_{T_i}(i) \le \frac{3(\ln d + \ln(3d^2))}{\varepsilon_2} + \frac{2\ln(3d^2)}{\varepsilon_2} + \frac{16\ln(2T^2)}{\varepsilon_1} + 1$. Conditioning on the event that $a_{T_i}(i)\le M$ for some $M$, we can bound the expected loss by
    \begin{align*}
        \sum_{t=1}^T\mathbb{E}\left[\sum_{i\in E_t}\ell_t(i) / \lvert E_t\rvert \right] &= \sum_{t=1}^T\mathbb{E}\left[\sum_{i\in [d]}\I[i\in E_t]\ell_t(i) / \lvert E_t\rvert \right] \\
        &= \mathbb{E}\left[\sum_{i\in[d]}\sum_{t=1}^T\I[i\in E_t]\ell_t(i) / \lvert E_t\rvert \right]\\
        &= \mathbb{E}\left[\sum_{i\in[d]}\sum_{t=1}^{T_i}\ell_t(i) / \lvert E_t\rvert \right]\\
        &\le \mathbb{E}\left[\sum_{i\in[d]}\sum_{t=1}^{T_i}\ell_t(i) / \lvert E_{T_i}\rvert \right]\\
        &= \mathbb{E}\left[\sum_{i\in[d]}a_{T_i}(i) / \lvert E_{T_i}\rvert \right]\\
        &\le \mathbb{E}\left[ \sum_{i\in[d]}\frac{1}{\lvert E_{T_i}\rvert}\right] \cdot M\\
        &\le \left(\sum_{j=1}^d\frac{1}{j}\right)\cdot M\\
        &\le 2\ln d\cdot M\\
    \end{align*}
    for $d\ge 2$ (the case that $d = 1$ is trivial, so we just ignore it).

    We already have $M = O\left(\frac{d\log d + \log T}{\varepsilon}\right)$ with probability $1-1/d-1/T$. However, this is not enough to show an expected bound. To get an expected bound, we run Algorithm~\ref{alg:OPEweak} with privacy parameter $\varepsilon' = \varepsilon/2$ and an $\varepsilon'$-differentially private $\mathsf{AboveThreshold}$ to monitor $\max_{i\in E_t}a_t(i)$ (note that it won't affect the privacy and utility even if we release $E_t$ publicly) with threshold
    \begin{equation*}
        L =  \left(\frac{2\ln(3d^2)}{\varepsilon'} + \frac{3(\ln d + \ln(3d^2))}{\varepsilon'} + \frac{16\ln(2T^2)}{\varepsilon'} + 1\right) + \frac{4\ln(T^2)}{\varepsilon'}.
    \end{equation*}
    Once $\mathsf{AboveThreshold}$ halts, we then stop Algorithm~\ref{alg:OPEweak} and switch to the algorithm in Theorem~\ref{thm:OPEstrong} with privacy parameter $\varepsilon$.
    
    It is not hard to see that the overall algorithm is $\varepsilon$-DP. For utility, we have that with probability $1-1/T$ $\mathsf{AboveThreshold}$ halts once $\max_{i\in E_t}a_t(i)$ is greater than $\overline{L} = L + \frac{4\ln(T^2)}{\varepsilon'}$ and never halts if the value is always no more than $\underline{L} = L - \frac{4\ln(T^2)}{\varepsilon'}$. Conditioning on this, with probability $1-1/d-1/T$ we have $a_{T_i}\le \underline{L}$ for every $i$ and thus $\mathsf{AboveThreshold}$ never halts. In this case, the expected loss is at most $\underline{L}\cdot 2\ln d = O\left(\frac{d\log^2 d + \log d\log T}{\varepsilon}\right)$. If $\mathsf{AboveThreshold}$ halts, the expected loss incurred before that moment should be at most $(\overline{L} + 1) \cdot 2\ln d= O\left(\frac{d\log^2 d + \log d\log T}{\varepsilon}\right)$. And after that, the regret should be $O\left(\frac{d\log T}{\varepsilon}\right)$ by Theorem~\ref{thm:OPEstrong}. Putting these two cases together gives an expected regret of 
    \begin{align*}
        &\left(1 - \frac{1}{d} - \frac{1}{T}\right)\cdot \underline{L}\cdot 2\ln d+ \left(\frac{1}{d} +\frac{1}{T}\right)\left((\overline{L}+1)\cdot 2\ln d + O\left(\frac{d\log T}{\varepsilon}\right)\right) \\
        ={}& O\left(\frac{d\log^2d + \log d\log T}{\varepsilon}\right)+ \left(\frac{1}{d}+\frac{1}{T}\right)O\left(\frac{d\log T}{\varepsilon}\right)\\
        ={}& O\left(\frac{d\log^2 d + \log d\log T}{\varepsilon}\right).
    \end{align*}
    Thus, the expected regret of the entire algorithm is
    \begin{equation*}
         \left(1-\frac{1}{T}\right)\cdot O\left( \frac{d\log ^2 d + \log d\log T}{\varepsilon}\right)
         + \frac{1}{T}\cdot T = O\left(\frac{d\log^2 d + \log d\log T}{\varepsilon}\right).
    \end{equation*}
\end{proof}

\begin{algorithm}[!ht]
\label{alg:OPEweak}
\DontPrintSemicolon
    \KwInput{the number of rounds $T$; privacy parameter $\varepsilon$; sequence $S$}
    \KwOutput{indices of experts $i_1,\dots,i_T$}
    Set $a_0(i)\gets 0$ and $c_0(i)\gets 0$ for all $i\in[d]$\;
    $E\gets [d]$\;
    $\varepsilon_1 = \varepsilon / 2$, $\varepsilon_2 = \varepsilon / (6d)$\;
    $\hat{L}\gets 8\ln(2T^2) /\varepsilon_1 + 3(\ln d + \ln(3d^2))/\varepsilon_2+ \lap(2/\varepsilon_1)$ \;
    \For{$t=1,\dots, T$}{
        Sample $i_t$ uniformly from $E$\;
        $\ell_t \gets S[t]$\;
        Update $a_t(i)\gets a_{t-1}(i) + \ell_t(i)$ and $c_t(i)\gets c_{t-1}(i) + \ell_t(i)$ for all $i\in[d]$\;
        \If{$\lvert E\rvert > 1\text{ and }\max_{i\in E}c_t(i) + \lap(4/\varepsilon_1) \ge \hat{L}$}{
            Run report-noisy-max with privacy parameter $\varepsilon_2$ on $\{a_t(j)\}_{j\in E}$ and obtain some index $i$\;
            Update $E\gets E \setminus \{i\}$\;
            \While{$i\notin E~\text{and}~\lvert E\rvert > 1$}{
                Run report-noisy-max with privacy parameter $\varepsilon_2$ on $\{a_t(j)\}_{j\in E}$ and obtain some index $i$\;
                \If{$a_t(i) + \lap(1/\varepsilon_2) > \ln(3d^2)/\varepsilon_2$}{
                    Update $E\gets E \setminus \{i\}$\;
                }
            }
            $\hat{L} \gets 8\ln(2T^2) / \varepsilon_1 +3(\ln d + \ln(3d^2))/\varepsilon_2+\lap(2/\varepsilon_1)$\;
            $c_t(i)\gets 0$ for all $i\in[d]$\;
        }
    }
\caption{DP-OPE against adaptive adversaries}
\end{algorithm}

\subsection{Learning Two Complementary Hypotheses}

Let $\HH = \{f_1, f_2\}$ such that $f_1 = 1 - f_2$. We now give an algorithm that achieves a mistake bound of $O(1/\varepsilon)$.

It is not hard to come up with an algorithm that makes $O(1/\varepsilon)$ mistakes with constant probability. Since the incorrect hypothesis makes an error on every input sample, we can output arbitrarily in the first $O(1/\varepsilon)$ rounds. Then we use the $O(1/\varepsilon)$ data points to figure out which hypothesis is the correct one. This can be done using the Laplace mechanism.

However, a constant probability bound does not imply an expected bound. Note that once the Laplace mechanism fails, we may make $\Omega(T)$ errors on the entire sequence. Thus, if we follow the above framework, we have to achieve a success probability of $1-1/T$. This requires $O(\log T/\varepsilon)$ samples, which exceed our target.

To reduce the expected number of errors, we split the entire sequence into buckets of length $O(1/\varepsilon)$. We perform the Laplace mechanism on every bucket. Then for the $i$-th bucket, instead of just using the result on the last bucket to predict, we take the majority over all previous buckets. This makes the fail probability decrease exponentially. The mistake bound then converges to $O(1/\varepsilon)$ as desired.

\begin{algorithm}[!ht]
\label{alg:complementary}
\DontPrintSemicolon
    \KwInput{the number of rounds $T$; hypothesis class $\HH=\{f_1, f_2\}$ such that $f_1$ and $f_2$ are complementary; privacy parameter $\varepsilon$; sequence $S$}
    \KwOutput{hypotheses $h_1,\dots, h_T$}
    $c_1 \gets 0$, $c_2\gets 0$\;
    $s\gets \lceil 2\ln(3)/\varepsilon\rceil$\;
    \For{$t = 1,\dots, T$}{
        $(x_t, y_t)\gets S[t]$ \;
        \uIf{$c_1 > c_2$}{
            $h_t\gets f_1$\;
        }\Else{$h_t\gets f_2$\;}
        \If{$t\bmod s = 0$}{
            \uIf{$\sum_{r=t-s+1}^t\I[f_1(x_r)\neq y_r]+ \lap(1/\varepsilon) < s / 2$}{$c_1\gets c_1 + 1$\;}
            \Else{$c_2\gets c_2 + 1$\;}
        }
    }
\caption{Learning complementary hypotheses}
\end{algorithm}

We describe the procedure in Algorithm~\ref{alg:complementary} and provide a formal statement of our result in the following theorem.

\begin{theorem}
\label{thm:finitecomplementary}
    Let $\HH = \{f_1, f_2\}$ be a hypothesis such that $f_1$ and $f_2$ are complementary. In the realizable setting, there exists an $\varepsilon$-differentially private algorithm that online learns $\HH$ with a mistake bound of $O(1/\varepsilon)$ even against strong adaptive adversaries.
\end{theorem}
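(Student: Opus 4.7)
The plan is to analyze Algorithm~\ref{alg:complementary} directly, breaking the argument into a privacy part and a utility part.

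\textbf{Privacy.} Since $f_1$ and $f_2$ are complementary, the sensitivity of the per-bucket query $\sum_{r=t-s+1}^t \I[f_1(x_r)\neq y_r]$ is exactly $1$, so each application of the Laplace mechanism with scale $1/\varepsilon$ is $\varepsilon$-DP. The buckets use disjoint blocks of the input stream (positions $(i-1)s+1,\dots,is$), and the predictions $h_t$ are a deterministic post-processing of the sequence of bucket outputs together with fresh independent randomness. Therefore, by parallel composition, the algorithm is $\varepsilon$-DP with respect to neighboring sequences (which differ in exactly one entry, hence lie in exactly one bucket). Because $\delta=0$, the paper's footnote on the equivalence of pure DP and adaptive pure DP upgrades this to privacy against adaptive adversaries, and strong adaptivity does not help the adversary since the output includes every $h_t$ already and only the Laplace noise in the bucket containing the differing entry can be affected.

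\textbf{Per-bucket correctness.} Let $f^\star\in\{f_1,f_2\}$ be the true hypothesis. The true value of the Laplace query on a bucket is $0$ if $f^\star=f_1$ and $s$ if $f^\star=f_2$ (complementarity). With $s=\lceil 2\ln(3)/\varepsilon\rceil$, the probability that a single bucket votes incorrectly equals $\Pr[|\lap(1/\varepsilon)|\ge s/2]\le \tfrac{1}{2}e^{-\varepsilon s/2}\le 1/6$, so each bucket votes correctly with probability at least $5/6$ independently across buckets.

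\textbf{Aggregating votes.} During bucket $i\ge 2$, the prediction is determined by the majority of the first $i-1$ bucket votes (ties resolve to $f_2$, which is safe when $f^\star=f_2$ and costs at most bucket $1$ when $f^\star=f_1$). Letting $X_1,\dots,X_{i-1}$ be the indicators that buckets $1,\dots,i-1$ voted correctly, each with mean at least $5/6$, Hoeffding's inequality gives
\begin{equation*}
    \Pr\Bigl[\sum_{j=1}^{i-1} X_j \le (i-1)/2\Bigr] \le \exp\!\bigl(-2(i-1)(1/3)^2\bigr) = \exp\!\bigl(-2(i-1)/9\bigr).
\end{equation*}
Because $f_1$ and $f_2$ disagree on every input, a wrong prediction in bucket $i$ costs exactly $s$ mistakes and a correct prediction costs $0$. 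Combined with the trivial bound of $s$ mistakes for bucket $1$,
\begin{equation*}
    \mathbb{E}\bigl[\text{\#mistakes}\bigr] \le s + s\sum_{i=2}^{\lceil T/s\rceil} e^{-2(i-1)/9} \le s\Bigl(1 + \frac{1}{1-e^{-2/9}}\Bigr) = O(s) = O(1/\varepsilon).
\end{equation*}

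The only subtlety worth flagging is the privacy claim under a strong adaptive adversary: one must check that the predictions output before bucket $j$ (the bucket containing the differing entry) are identical under the two neighboring streams when the shared randomness is coupled, so the only randomness that matters for distinguishing is the Laplace noise in bucket $j$; everything afterward is post-processing. This is the main conceptual step, but once stated it is immediate from the structure of the algorithm.
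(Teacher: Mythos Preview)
Your proof is correct and follows essentially the same approach as the paper: analyze Algorithm~\ref{alg:complementary} by showing each bucket's Laplace vote is correct with constant probability, then apply Hoeffding's inequality to bound the probability that the running majority is wrong in bucket $i$ by a geometrically decaying term, and sum. The only differences are cosmetic---you use the sharper one-sided Laplace tail to get per-bucket correctness $\ge 5/6$ (versus the paper's $2/3$) and you spell out the parallel-composition privacy argument in more detail; the resulting constants differ but the structure and conclusion are identical.
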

\begin{proof}
    It is easy to see that Algorithm~\ref{alg:complementary} is $\varepsilon$-differentially private since we add Laplace noise on disjoint buckets. For each bucket, with probability $2/3$, the noise added to the counter is less than $\ln(3)/\varepsilon < s/2$. This means we can identify the correct hypothesis with probability $2/3$.

    For the $n$-th bucket, we will make $s$ errors only if we wrongly identify the target hypothesis on at least half of the previous buckets. By Hoeffding's inequality, this happens with probability at most
    \begin{equation*}
        \exp\left(-2(n-1)(1/6)^2\right) = \exp(-(n-1) / 18).
    \end{equation*}
    Thus, the expected number of mistakes is bounded by
    \begin{equation*}
        s\sum_{n=1}^{\infty}\exp(-(n-1) / 18) = O(1/\varepsilon).
    \end{equation*}
\end{proof}

\end{document}